\let\csname equation*\endcsname\relax
\let\csname endequation*\endcsname\relax
\newtheorem{theorem}{Theorem}
\newtheorem*{proof}{Proof}
\newtheorem{proposition}{Proposition}
\renewcommand{\qedsymbol}{\hfill \ensuremath{\square}}
\begin{document}

\title[Learning to Sample and Reconstruct for Accelerated MRI]{L2SR: Learning to Sample and Reconstruct for Accelerated MRI via Reinforcement Learning} 


\author{Pu Yang$^1$ and Bin Dong$^2$}

\address{$^1$ the School of Mathematical Sciences, Peking University, Beijing, P.R.China}
\address{$^2$ Beijing International Center for Mathematical Research, Peking University, Beijing, P.R.China}
\ead{dongbin@math.pku.edu.cn}
\vspace{10pt}
\begin{indented}
\item[]Submitted 29 October 2023
\end{indented}

\begin{abstract}
    Magnetic Resonance Imaging (MRI) is a widely used medical imaging technique, but its long acquisition time can be a limiting factor in clinical settings. To address this issue, researchers have been exploring ways to reduce the acquisition time while maintaining the reconstruction quality. Previous works have focused on finding either sparse samplers with a fixed reconstructor or finding reconstructors with a fixed sampler. However, these approaches do not fully utilize the potential of joint learning of samplers and reconstructors. In this paper, we propose an alternating training framework for jointly learning a good pair of samplers and reconstructors via deep reinforcement learning (RL). In particular, we consider the process of MRI sampling as a sampling trajectory controlled by a sampler, and introduce a novel sparse-reward Partially Observed Markov Decision Process (POMDP) to formulate the MRI sampling trajectory. Compared to the dense-reward POMDP used in existing works, the proposed sparse-reward POMDP is more computationally efficient and has a provable advantage. Moreover, the proposed framework, called L2SR (Learning to Sample and Reconstruct), overcomes the training mismatch problem that arises in previous methods that use dense-reward POMDP. By alternately updating samplers and reconstructors, L2SR learns a pair of samplers and reconstructors that achieve state-of-the-art reconstruction performances on the fastMRI dataset. Codes are available at \url{https://github.com/yangpuPKU/L2SR-Learning-to-Sample-and-Reconstruct}. 
\end{abstract}
\vspace{2pc}
\noindent{\it Keywords}: Accelerated MRI, Deep RL, Sparse-reward POMDP, Alternating Training
\submitto{\IP}

\section{Introduction}
\label{sec:intro}
Magnetic resonance imaging (MRI) is a valuable medical imaging tool that does not involve ionizing radiation. However, it suffers from slow data acquisition, which limits its widespread clinical use.

To accelerate MRI, Compressed sensing (CS) \cite{candes2006robust} techniques have been introduced to speed up the MRI scanning process by reducing the number of measurements. Traditional CS-MRI methods \cite{lustig2007sparse, lustig2008compressed} first subsample the measurements in the Fourier domain with heuristic sampling policies and then reconstruct MR images by solving an ill-posed inverse problem. These optimization-based methods are interpretable and with theoretical guarantees. Although they have notably reduced acquisition time compared to commercial MRI, they still have room for improvement. 

Recent advances in Deep Learning (DL) \cite{8329428, hyun2018deep, 8233175, 8327637, NIPS2016_1679091c, hammernik2018learning, Zhang_2018_CVPR} have led to significant improvements in MRI reconstruction by using DL models as reconstructors. These models use heuristic sampling policies similar to CS-MRI and are trained on specific datasets. However, the quality of the sampling policies and reconstructors can be quantified by examining their acceleration-quality trade-off. The heuristic sampling policies may have a limited acceleration-quality trade-off, as they cannot adaptively decide the sampling strategy according to the anatomical structures of patients.

To overcome this, some approaches learn an adaptive sampling policy which designs a specific sampling pattern for each individual image based on its unique characteristics, rather than by human design. One such approach views the sequential sampling in the Fourier domain as a feedback control process, formulates the sampling trajectory as a Partially Observed Markov Decision Process (POMDP) and solves it via deep Reinforcement Learning (RL). Existing dynamic sampling methods \cite{NEURIPS2020_daed2103, pineda2020active, Liu2022ActivePS} propose using a dense-reward POMDP to learn parametric samplers with a fixed pre-trained reconstructor via deep RL. However, these methods suffer from the training mismatch, where the sampling pattern used to pre-train the reconstructor is different from the pattern given by the RL-learned sampler. This can lead to suboptimal performance in the learned pair of samplers and reconstructors.

In this paper, we propose a novel alternating training framework, L2SR (Learning to Sample and Reconstruct), which incorporates an innovative sparse-reward POMDP formulation to facilitate the joint optimization of MRI sampling policies and reconstruction models. In contrast to previous efforts \cite{NEURIPS2020_daed2103, pineda2020active, Liu2022ActivePS} our approach addresses the training mismatch issue, which emerges when samplers and reconstructors are optimized as isolated subproblems rather than focusing on the comprehensive joint problem. Unlike dense-reward POMDPs, our sparse-reward formulation uniquely disentangles the sampling and reconstruction stages by deliberately eliminating intermediate reconstructions during the sampling process. The final reconstruction occurs only at the end, leading to a unique joint sampler-reconstructor optimization problem. This problem can be solved by alternately optimizing the sampler (with the reconstructor kept constant) through RL, and the reconstructor (with the sampler kept constant) through back-propagation. Our experiments on fastMRI dataset \cite{zbontar2018fastmri} demonstrates L2SR's capability to enhance MRI reconstruction quality.

The main contributions of this paper are:
\begin{itemize}
    \item We formulate the MRI sampling and reconstruction process as a joint optimization problem of learning optimal MRI samplers and reconstructors, a novel formulation not explored in previous study. This formulation could potentially enhance the trade-off between acceleration and quality. 
    \item We propose a novel sparse-reward POMDP formulation for MRI sampling that addresses three key issues with dense-reward POMDPs: (1) It is more computationally efficient for inference by avoiding intermediate reconstructions; (2) It avoids the distributional mismatch issue, which is proven to have an advantage in reconstruction performance; (3) It provides a viable approach to tackle the joint sampler-reconstructor optimization problem. 
    \item We develop an alternating training framework called L2SR that leverages the sparse-reward POMDP to jointly optimize samplers and reconstructors. A key novelty is that L2SR overcomes the training mismatch issue that arises due to the decomposition of the joint optimization problem into separate sampler and reconstructor subproblems in prior works. 
    \item Our experiments demonstrate the state-of-the-art reconstruction performance on the fastMRI benchmark, validating the benefits of our proposed solutions for the joint optimization problem. 
\end{itemize}

The remainder of this paper is organized as follows. In \sref{sec:relat}, we discuss related works. In \sref{sec:prob}, we provide notations of accelerated MRI and formulate optimization problems. In \sref{sec:method}, we delve into the issues inherent to the design of the dense-reward POMDP, followed by the introduction of the sparse-reward POMDP, along with its associated dynamic sampling training framework and alternating training framework. Experimental results are presented in \sref{sec:exper}, and finally, we conclude our findings in \sref{sec:concl}.

\section{Related Work}
\label{sec:relat}
In this section, we briefly review prior works on sampling and reconstruction in accelerated MRI. 

\subsection{Learning Reconstructors with Fixed Heuristic Sampling Policy}
\label{subsec:relate-recon}
Existing heuristic sampling policies for MRI reconstruction \cite{lustig2007sparse, vasanawala2011practical, chauffert2014variable} include uniform density random sampling, variable density sampling, Poisson-disc sampling, equispaced sampling, and continuous-trajectory variable density sampling. These sampling strategies are very simple and easy to implement. Methods of learning reconstructors with a fixed heuristic sampling policy can be divided into two categories: optimization-based methods and DL-based methods.

\textbf{Optimization-based methods} use constrained optimization problems to obtain the reconstruction image. These problems minimize an $\ell_1$-norm regularization term for sparsity while constraining the $\ell_2$-norm error in the Fourier domain for accuracy (see Equation (3) in \cite{lustig2007sparse}). The regularization term is often referred to as Total Variation (TV) \cite{rudin1992nonlinear} or Wavelet \cite{chan2003wavelet}. These optimization problems can be solved using iterative optimization methods such as Iterative Soft Thresholding Algorithm (ISTA) \cite{daubechies2004iterative} and Iterative Reweighted Least Squares (IRLS) \cite{daubechies2010iteratively}. While these methods are interpretable and have theoretical guarantees, they depend on human-designed image priors and are not adaptive to specific datasets.

\textbf{DL-based methods} use DL models as reconstructors for MRI, significantly improving reconstruction quality. Unet-based network models \cite{8329428, hyun2018deep} and GAN-based network models \cite{8233175, 8327637} learn an end-to-end mapping from sampled measurements to reconstruction images. Unrolled dynamic models such as ADMM-net \cite{NIPS2016_1679091c}, Variational-net \cite{hammernik2018learning}, and ISTA-net \cite{Zhang_2018_CVPR} first unroll an iterative algorithm to form the backbone network architecture and then replace some of its operators with neural networks. These methods are data-driven, and hence therefore be adapted to specific datasets. However, compared to optimization-based methods, DL-based methods are less interpretable and may have generalization issues.

\subsection{Learning Dynamic Sampling with Fixed Reconstructors via Deep RL}
\label{subsec:relat-dynamic}
Heuristic sampling policies may not be adapted to specific datasets, which limits their acceleration-quality trade-off. Since the MRI sampling process is carried out over time and can be considered as a finite sequential decision-making process, a dynamic sampling method was proposed to learn a sampler with a fixed pre-trained reconstructor via deep RL. In Computed Tomography (CT) scanning, this method was first proposed by \cite{1930-8337_2022_1_179}, and a similar idea was later proposed in the MRI region. These methods \cite{NEURIPS2020_daed2103, pineda2020active, Liu2022ActivePS} formulate the MRI sampling trajectory as a Partially Observed Markov Decision Process (POMDP) (see \sref{subsec:optim}) and learn parametric samplers with a fixed pre-trained reconstructor by modern deep RL algorithms, such as DDQN \cite{van2016deep} and policy gradient \cite{baxter2001infinite}. RL enables the learning of a non-greedy sequential decision function without relying on hand-designed policies. These methods use the RL-learned policies to guide the scanning based on the measurements collected in the preceding stage, thus achieving personalized scanning.

\subsection{Jointly Learning Samplers and Reconstructors via Back-Propagation}
\label{subsec:relate-joint}
If we fix a sampler to find an optimal reconstructor or fix a reconstructor to find an optimal sampler, we may obtain suboptimal pairs of samplers and reconstructors. Therefore, recent works aim to learn a good pair of samplers and reconstructors for a better acceleration-quality trade-off. These works propose to jointly train samplers and reconstructors end-to-end using Back-Propagation (BP).

One line of works \cite{8353419, bahadir2019learning, 9054542, zhang2020extending, 9105133, wang2021b, 9053345, Gzc2019RethinkingSI} use a learnable column vector of probabilistic sub-sampling masks $\mathbf{p}$ as the sampler, which is independent of the specific image, and jointly train it together with a neural network-based reconstruction model $\mathcal{R}$. These works learn a better sampler $\mathbf{p}$ than human-designed heuristic sampling policies. However, these samplers are also fixed sampling patterns and not adaptive to each image. The ideas of these works can be illustrated by the following diagram:

\setlength\fboxsep{1pt}
$$
    \fbox{$\mathbf{p}$} \xrightarrow[\mathbf{y}]{\text{sample}} \mathbf{y}_T \xrightarrow{\fbox{$\mathcal{R}$}} \mathbf{x}_T
$$
where modules in the boxes are learnable and some notations are defined in \sref{subsec:notation} and can be quickly found through the notation index in \ref{appen-sec:notation}. 

Another line of works \cite{jin2019self, zhang2019reducing, van2021active, yin2021end} use a neural network-based model $\pi$ as the sampler, which is adaptive to sampled measurements. The sampler $\pi$ is jointly trained with a DL-based reconstructor $\mathcal{R}$ end-to-end via BP. These methods learn sampling strategies that are adaptive to each image. However, training the sampler and the reconstructor by BP can be challenging. On the one hand, gradients are passed through long trajectories leading to gradient vanishing or exploding. On the other hand, GPU memory limits the size of neural networks due to long trajectories. Therefore, these works often solve weakened forms of the joint optimization problem \eref{equ:dense-optim-constrain}. The ideas of these works can be illustrated by the following diagram:

$$
\pstr[][-15pt]{
    \nd (y0){\mathbf{y}_0}
    \nd \xrightarrow{\fbox{$\mathcal{R}$}} \mathbf{x}_0 \xrightarrow[\mathbf{y}]{\fbox{$\pi$}} (y1){\mathbf{y}_1}
    \nd \xrightarrow{\fbox{$\mathcal{R}$}} \mathbf{x}_1 \xrightarrow[\mathbf{y}]{\fbox{$\pi$}} (y2){\mathbf{y}_2}
    \nd \cdots (yT){\mathbf{y}_T} 
\arrow{y0}{y1}{30}{}{black}{}
\arrow{y1}{y2}{30}{}{black}{}
}
    \xrightarrow{\fbox{$\mathcal{R}$}} \mathbf{x}_T
$$
where modules in the boxes are learnable.

In this paper, we also aim to learn a pair of samplers and reconstructors jointly. Instead of training end-to-end by BP, we will design an RL-based joint training framework.

\section{Problem Formulation}
\label{sec:prob}
In \sref{subsec:notation}, we specify the notation of CSMRI. In \sref{subsec:optim}, we formulate the optimization problem. 

\subsection{Notation}
\label{subsec:notation}
In the context of this study, our focus is confined to the analysis of two-dimensional (2D) accelerated MRI in conjunction with one-dimensional (1D) vertical Cartesian sampling. We define a ground truth image, denoted by the matrix $\mathbf{x} \in \mathbb{R}^{N\times N}$, and the corresponding fully sampled measurements in the Fourier domain (referred to as k-space in MRI data), represented by the matrix $\mathbf{y} \in \mathbb{C}^{N \times N}$, obtained via the two-dimensional Discrete Fourier Transform, $\mathcal{F}$, i.e., $\mathbf{y} = \mathcal{F}(\mathbf{x})$. For fully-observed MRI scanning, the ground truth image can be reconstructed from the fully sampled measurements by Inverse Fourier Transform, i.e., $\mathbf{x} = \mathcal{F}^{-1}(\mathbf{y})$. 

Further, in accelerated MRI, we only observe partially sampled measurements. Mathematically, the process of sampling is realized by masking $\mathbf{y}$ using a binary mask matrix $\mathbf{M} \in \{0, 1\}^{N \times N}$, rendering the observation of $\mathbf{M} \odot \mathbf{y}$, where $\odot$ designates the Hadamard product. In this study, all binary mask matrices satisfy the property that each of their columns is either all 0's or all 1's, unless specified otherwise.

The process of reconstruction involves a specific mapping $\mathcal{R}: \mathbb{C}^{N \times N} \rightarrow \mathbb{R}^{N\times N}$, called the reconstructor, which maps (partially-observed) measurements to the reconstructed image end-to-end. In this paper, it is customarily formulated as a parametric neural network denoted by $\mathcal{R}(\cdot ; \theta_{\mathcal{R}})$. 

Additionally, we define a fixed heuristic sampling policy $\pi_t^{\text{h}}$ as a probability distributional function accepting the $N$-dimensional binary column vector $\mathbf{a}$ as an input. Specifically, $\pi_t^{\text{h}}$ sample $\mathbf{a}$ under the constraint that $\| \mathbf{a} \|_1 = t$, i.e., $\pi_t^{\text{h}}$ is constructed in such a way that $\pi_t^{\text{h}}(\mathbf{a}) = 0$ if $\| \mathbf{a} \|_1 \neq t$. Denote the mask matrix $\mathbf{M}^{\mathbf{a}} = \mathbf{1} \cdot \mathbf{a}^T$ with the property that $\| \mathbf{M}^{\mathbf{a}} \|_\infty = \| \mathbf{a} \|_1$. 

We give a list of notation index and where it is defined in \ref{appen-sec:notation}. 

\subsection{Optimization Problems}
\label{subsec:optim}
To get an optimal pair of samplers and reconstructors, we want to solve the following joint optimization problem based on dense-reward POMDP
\begin{equation} \label{equ:dense-optim-constrain}
\begin{split}
    \max_{\theta_\pi, \theta_{\mathcal{R}}} \quad & \mathbb{E}_{\mathbf{x}\sim \mathcal{D}} \mathbb{E}_{\pi} \mathrm{S}(\mathbf{x}_T, \mathbf{x}) \\
    \text{s.t.} \quad & \mathbf{y}_t = \mathbf{M}_t \odot \mathcal{F}(\mathbf{x}), \quad t = 0, 1, \cdots, T, \\
    & \mathbf{x}_t = \mathcal{R}(\mathbf{y}_t ; \theta_{\mathcal{R}}), \quad t = 0, 1, \cdots, T, \\
    & a_t \sim \pi(\cdot \mid \mathbf{x}_t ; \theta_\pi), \quad t = 0, 1, \cdots, T-1, \\
    & \mathbf{M}_{t+1} = \mathds{1}(\mathbf{M}_t + \mathbf{M}^{a_t}), \quad t = 0, 1, \cdots, T-1,
\end{split}
\end{equation}
where $\mathcal{D}$ signifies a dataset comprising ground truth images, $T$ denotes the length of sampling trajectory, $\mathrm{S}$ is an image similarity metric, $a_t \in \mathcal{A} = \{n\}_{n=1}^N$, the sampler $\pi: \mathbb{R}^{N\times N} \rightarrow \Delta^N$ is the N-dimensional discrete conditional probability distribution function, $\Delta^N = \{ \mathbf{u} \in [0,1]^N \mid \sum_i u_i = 1 \}$ denotes the $N$-dimensional probability simplex, $\mathbf{M}_0$ is an initial binary mask matrix, and $\mathds{1}(\cdot)$ is a matrix indicator function. For simplicity, we denote the objective function $\mathbb{E}_{\pi} \mathrm{S}(\mathbf{x}_T, \mathbf{x})$ in \eref{equ:dense-optim-constrain} as $J_T^{\text{dense}}(\mathbf{x}; \pi, \mathcal{R})$. 

One of the challenges in solving this joint optimization problem directly is its complexity. One approach is to train both $\theta_\pi$ and $\theta_\mathcal{R}$ end-to-end by BP, as mentioned in \sref{subsec:relate-joint}. However, training may fail due to the long sampling trajectory, and existing BP methods often solve weakened forms of \eref{equ:dense-optim-constrain}. Another approach is to use reinforcement learning (RL). However, the existing dense-reward partially observable Markov decision process (POMDP) may not be suitable for end-to-end joint training. When training the sampler via RL, we must keep the environment unchanged, and thus the reconstructor as a part of the environment must also remain the same. This contradicts the end-to-end joint training. Additionally, it is difficult to design a multi-agent POMDP for the two learnable agents, the sampler and the reconstructor.

DL-based MRI methods can obtain a better reconstructor with a fixed heuristic sampling policy $\pi^{\text{h}}$ by solving the following suboptimization problem of \eref{equ:dense-optim-constrain}:
\begin{equation}
\max_{\theta_\mathcal{R}} \quad \mathbb{E}_{\mathbf{x} \sim \mathcal{D}}\mathbb{E}_{\mathbf{a}\sim\pi^{\text{h}}} \left[ \mathrm{S} (\mathcal{R}(\mathbf{M}^\mathbf{a} \odot \mathcal{F}(\mathbf{x}); \theta_\mathcal{R}), \mathbf{x}) \right].
\label{equ:heuristic-suboptim}
\end{equation}

Dynamic sampling methods can obtain better samplers by solving the following suboptimization problem of \eref{equ:dense-optim-constrain} with a fixed pre-trained reconstruction model $\mathcal{R}(\cdot)$:
\begin{equation}
\max_{\theta_{\pi}} \quad \mathbb{E}_{\mathbf{x} \sim \mathcal{D}} J_T^{\text{dense}}(\mathbf{x}; \pi(\cdot; \theta_\pi), \mathcal{R}).
\label{equ:dense-suboptim}
\end{equation}
To solve \eref{equ:dense-suboptim}, they formulate the sequential sampling process as the following \textit{dense-reward POMDP} (illustrated in \fref{subfig:dense-reward-MDP}): 
\begin{itemize}
\item Observation
\begin{equation} \label{equ:dense-y}
    \mathbf{y}_t = \mathbf{M}_t \odot \mathbf{y}, \quad t = 0, 1, \cdots, T,
\end{equation}
\begin{equation} \label{equ:dense-x}
    \mathbf{x}_t = \mathcal{R}(\mathbf{y}_t), \quad t = 0, 1, \cdots, T.
\end{equation}
\item Action set $\mathcal{A}$ and action $a_t \in \mathcal{A}$
\begin{equation} \label{equ:dense-a}
    a_t \sim \pi(\cdot \mid \mathbf{x}_t), \quad t = 0, 1, \cdots, T-1.
\end{equation}
\item Transition
\begin{equation}
    \mathbf{M}_{t+1} = \mathds{1}(\mathbf{M}_t + \mathbf{M}^{a_t}), \quad t = 0, 1, \cdots, T-1, 
\end{equation}
and $\mathbf{y}_{t+1}$ and $\mathbf{x}_{t+1}$ is calculated by \eref{equ:dense-y} and \eref{equ:dense-x}. 
\item Reward
\begin{equation}
    r_t = \mathrm{S}(\mathbf{x}_t, \mathbf{x}) - \mathrm{S}(\mathbf{x}_{t-1}, \mathbf{x}), \quad t = 1, \cdots, T.
\label{equ:dense-reward}
\end{equation}
\item Discount factor 
\begin{equation}
    \gamma \in [0,1]. 
\end{equation}
\end{itemize}

\begin{figure}
    \centering
    \subfigure[dense-reward POMDP]{
        \centering
        \includegraphics[width=0.4\columnwidth]{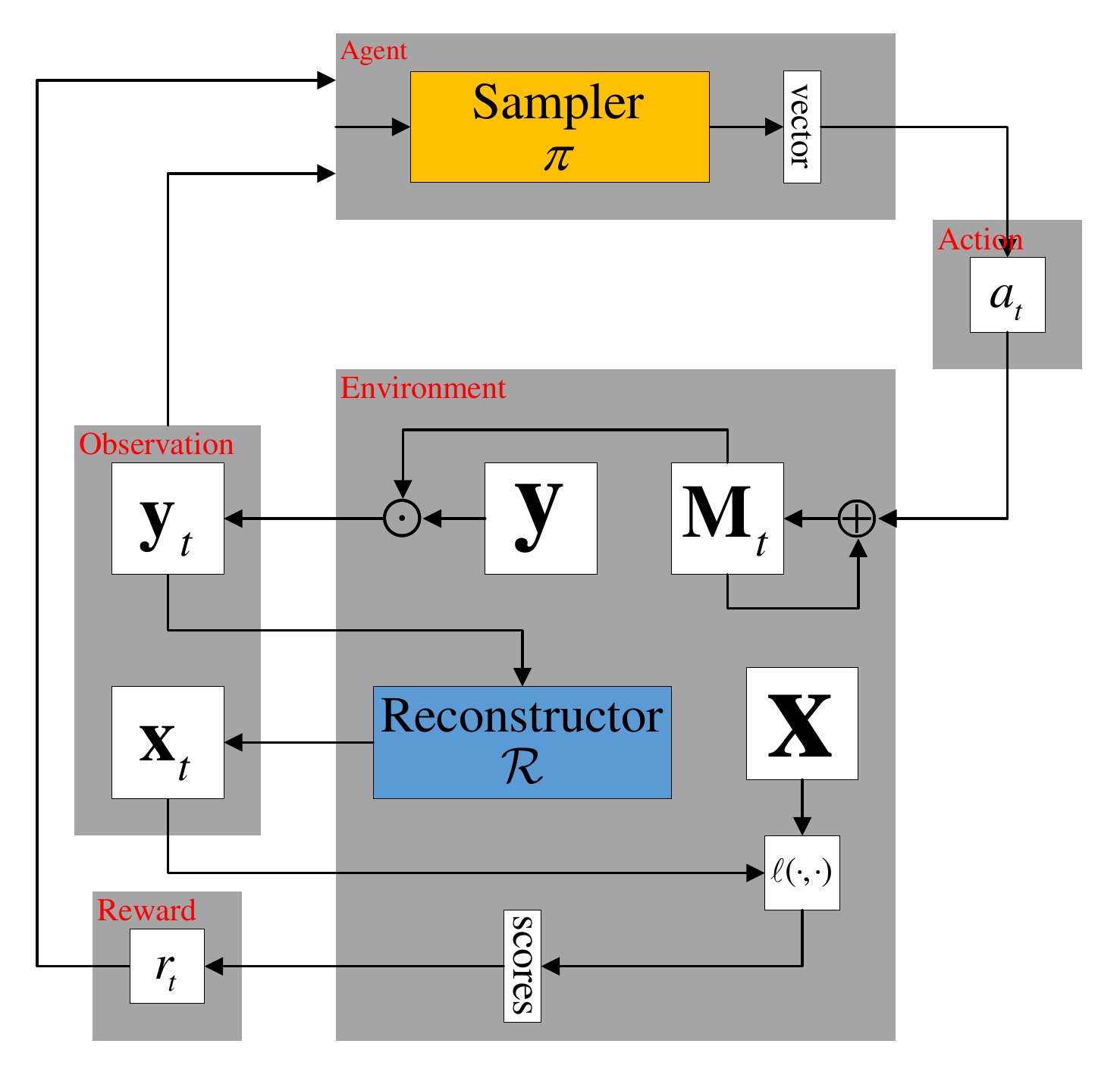}
        \label{subfig:dense-reward-MDP}
    }
    \subfigure[sparse-reward POMDP]{
        \centering
        \includegraphics[width=0.4\columnwidth]{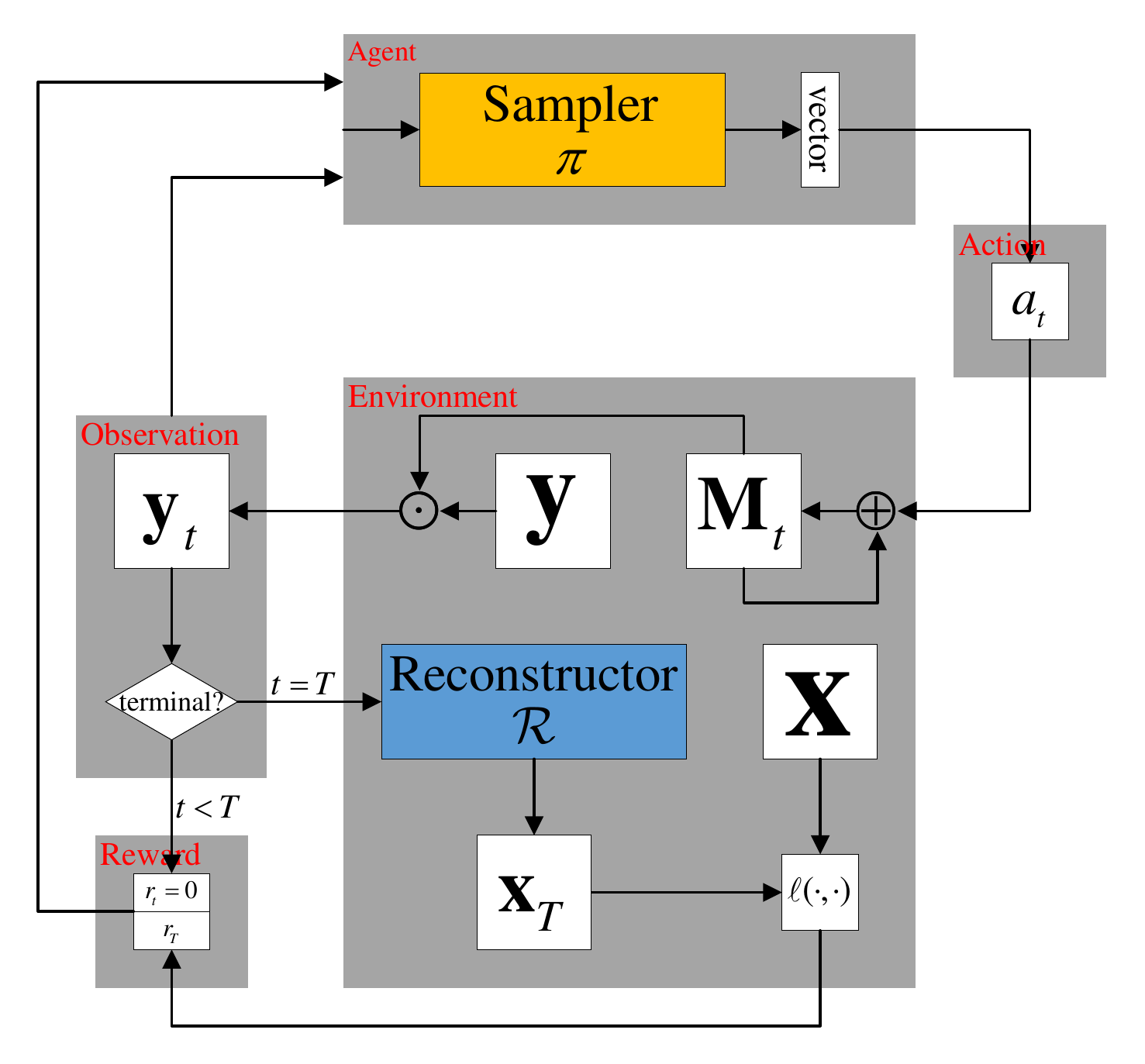}
        \label{subfig:sparse-reward-MDP}
    }
    \caption{Diagram of the dense-reward and sparse-reward POMDP.}
    \label{fig:MDP}
\end{figure}
These methods solve \eref{equ:dense-suboptim} via deep RL with the above dense-reward POMDP. 

However, a recent study \cite{yin2021end} has revealed a problem with training when solving the suboptimization problem \eref{equ:dense-suboptim}. Specifically, there is a \textit{training mismatch} when the heuristic sampling policy used to pre-train the reconstructor is different from the policy learned through reinforcement learning. This mismatch leads to a suboptimal pair of samplers and reconstructors for the joint optimization problem \eref{equ:dense-optim-constrain}.

\section{Proposed Method}
\label{sec:method}
In this section, we first conduct an analysis of the dense-reward POMDP and identify some inherent issues in \sref{subsec:analysis}. Based on these insights, we then propose a novel sparse-reward POMDP in \sref{subsec:sparse-reward} to address the identified issues. Furthermore, we present two novel training frameworks: dynamic sampling training framework in \sref{subsec:dynamic-sampling} and alternate training framework in \sref{subsec:alternate}, both of which stem from the sparse-reward POMDP.

\subsection{The Issues with Dense-reward POMDP}
\label{subsec:analysis}
Apart from the training mismatch issue arising from tackling the sub-optimization problem, our comprehensive analysis of the dense-reward POMDP has revealed issues associated with the MDP design. Specifically, the dense-reward POMDP embeds reconstructions into the environment, particularly within transitions and rewards. This design introduces two main problems: high computational cost and distributional mismatch. These findings pave the way for the introduction of our sparse-reward POMDP.

\subsubsection{High Computational Cost}
\label{subsec:high-comput-cost}
Since the dense-reward POMDP involves reconstructions in the transition, it first performs reconstruction \eref{equ:dense-x} and then sampling \eref{equ:dense-a} at each step in the inference process. However, this approach is very time-consuming since it requires performing many reconstructions. The dynamic sampling method for CT scanning \cite{1930-8337_2022_1_179} uses the original sampling signals rather than reconstructed images as the input of the sampler, which is more computationally efficient. We aim to find a better POMDP that requires only one reconstruction in the inference process.

\subsubsection{Distributional Mismatch}
\label{subsec:distributional-mismatch}
Since the dense-reward POMDP involves reconstructions in the reward calculations \eref{equ:dense-reward}, it requires evaluating the similarity of the ground truth and the reconstruction image $\mathrm{S}(\mathbf{x}_t, \mathbf{x})$ at every intermediate state. Therefore, the dense-reward POMDP requires a reliable reconstructor for all the intermediate states, which means the reconstructor should be pre-trained with a mixture of heuristic sampling policies $\pi_\text{mix}^{\text{h}} = \sum_{t = \| \mathbf{M}_0 \|_\infty}^{\| \mathbf{M}_0 \|_\infty + T} c_t \pi_t^{\text{h}}$ where $\sum c_t = 1$, instead of only the terminal sampling policy $\pi_{T+\| \mathbf{M}_0 \|_\infty}^{\text{h}}$. However, as we will show later, pre-training with the mixture of policies may not improve the final reconstruction performance, i.e., the optimal value of the joint optimization problem \eref{equ:dense-optim-constrain}. We will justify this claim in theorem \ref{theorem:improved-dynamic}.

\subsection{Sparse-Reward POMDP}
\label{subsec:sparse-reward}
To improve inference efficiency and eliminate distributional mismatch, we propose a \textit{sparse-reward POMDP} that reconstructs after completing the entire trajectory and receives a non-zero reward after the final reconstruction. Specifically, the proposed sparse-reward POMDP is as follows (illustrated in \fref{subfig:sparse-reward-MDP}):
\begin{itemize}
\item Observation
\begin{equation} \label{equ:sparse-y}
\mathbf{y}_t = \mathbf{M}_t \odot \mathbf{y}, \quad t = 0, 1, \cdots, T. 
\end{equation}
\item Action set $\mathcal{A}$ and action $a_t \in \mathcal{A}$
\begin{equation}
    a_t \sim \pi(\cdot \mid \mathbf{y}_t), \quad t = 0, 1, \cdots, T-1, 
\end{equation}
where the sampler $\pi: \mathbb{C}^{N\times N} \rightarrow \Delta^N$ is the N-dimensional discrete conditional probability distribution function. 
\item Transition
\begin{equation}
    \mathbf{M}_{t+1} = \mathds{1}(\mathbf{M}_t + \mathbf{M}^{a_t}), \quad t = 0, 1, \cdots, T-1, 
\end{equation}
and $\mathbf{y}_t$ is calculated by \eref{equ:sparse-y}
\item Reward
\begin{equation}
    \left\{ 
    \begin{array}{l}
        r_t = 0, \quad t = 1, \cdots, T-1 \\
        r_T = \mathrm{S}(\mathcal{R}(\mathbf{y}_T), \mathbf{x}) \\
    \end{array}. 
\right.
\end{equation}
\item Discount factor
\begin{equation}
    \gamma = 1. 
\end{equation}
\end{itemize}

It derives a sparse-reward joint optimization problem:
\begin{equation} \label{equ:sparse-optim-constrain}
\begin{split}
    \max_{\theta_\pi, \theta_{\mathcal{R}}} \quad & \mathbb{E}_{\mathbf{x}\sim \mathcal{D}} \mathbb{E}_{\pi} \mathrm{S}(\mathcal{R}(\mathbf{y}_T), \mathbf{x}) \\
    \text{s.t.} \quad & \mathbf{y}_t = \mathbf{M}_t \odot \mathcal{F}(\mathbf{x}), \quad t = 0, 1, \cdots, T, \\
    & a_t \sim \pi(\cdot \mid \mathbf{y}_t; \theta_\pi), \quad t = 0, 1, \cdots, T-1, \\
    & \mathbf{M}_{t+1} = \mathds{1}(\mathbf{M}_t + \mathbf{M}^{a_t}), \quad t = 0, 1, \cdots, T-1, 
\end{split}
\end{equation}
and we denote the objective function $\mathbb{E}_{\pi} \mathrm{S}(\mathcal{R}(\mathbf{y}_T), \mathbf{x})$ in \eref{equ:sparse-optim-constrain} as $J_T^{\text{sparse}}(\mathbf{x}; \pi, \mathcal{R})$. 

Solving the sparse-reward joint optimization problem yields a reconstructor and a sampler that are at least as strong as those obtained from solving the dense-reward joint optimization problem.

\begin{theorem}[joint optimization problem]
The optimal values of optimization problems \eref{equ:dense-optim-constrain} and \eref{equ:sparse-optim-constrain} satisfy
\begin{equation}
\sup_{\substack{\pi \in C(\mathbb{R}^{N\times N}, \Delta^N) \\ \mathcal{R} \in C(\mathbb{C}^{N\times N}, \mathbb{R}^{N\times N})}} \mathbb{E}_{\mathbf{x} \sim \mathcal{D}} J_T^{\text{dense}}(\mathbf{x})
\leq \sup_{\substack{\pi \in C(\mathbb{C}^{N\times N}, \Delta^N)\\ \mathcal{R} \in C(\mathbb{C}^{N\times N}, \mathbb{R}^{N\times N})}} \mathbb{E}_{\mathbf{x} \sim \mathcal{D}} J_T^{\text{sparse}}(\mathbf{x}).
\end{equation}
\label{theorem:non-deterministic}
\end{theorem}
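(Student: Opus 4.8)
The plan is to exhibit, for every feasible pair in the dense problem, a feasible pair in the sparse problem attaining the \emph{same} objective value; since the sparse feasible set then dominates pointwise, the supremum inequality follows at once. The starting observation is that the two objectives evaluate the identical terminal functional: in the dense POMDP the terminal state is $\mathbf{x}_T = \mathcal{R}(\mathbf{y}_T)$, so $\mathrm{S}(\mathbf{x}_T,\mathbf{x}) = \mathrm{S}(\mathcal{R}(\mathbf{y}_T),\mathbf{x})$, which is exactly the sparse terminal reward. The \emph{only} structural difference between \eref{equ:dense-optim-constrain} and \eref{equ:sparse-optim-constrain} is the input fed to the sampler: the dense sampler reads the reconstruction $\mathbf{x}_t = \mathcal{R}(\mathbf{y}_t) \in \mathbb{R}^{N\times N}$, whereas the sparse sampler reads the raw measurement $\mathbf{y}_t \in \mathbb{C}^{N\times N}$. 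Because $\mathbf{y}_t$ determines $\mathbf{x}_t$ through the fixed map $\mathcal{R}$, the measurement is a sufficient statistic for the reconstruction, so conditioning on $\mathbf{y}_t$ can reproduce any behaviour available when conditioning on $\mathbf{x}_t$.

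Concretely, I would fix an arbitrary reconstructor $\mathcal{R} \in C(\mathbb{C}^{N\times N},\mathbb{R}^{N\times N})$ and an arbitrary dense sampler $\pi^{\mathrm{d}} \in C(\mathbb{R}^{N\times N},\Delta^N)$, and define the sparse sampler by composition,
\begin{equation*}
\pi^{\mathrm{s}} := \pi^{\mathrm{d}} \circ \mathcal{R} \in C(\mathbb{C}^{N\times N},\Delta^N).
\end{equation*}
This $\pi^{\mathrm{s}}$ is continuous as a composition of continuous maps, and so is feasible for \eref{equ:sparse-optim-constrain}. The pair $(\pi^{\mathrm{s}},\mathcal{R})$ retains the same reconstructor, so it remains only to check that it induces the same distribution over terminal measurements $\mathbf{y}_T$.

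The core step is a coupling argument by induction on $t$, establishing that the law of the mask $\mathbf{M}_t$ (and hence of $\mathbf{y}_t = \mathbf{M}_t \odot \mathcal{F}(\mathbf{x})$) is identical in the two POMDPs. Both start from the common initial mask $\mathbf{M}_0$. Assuming $\mathbf{M}_t$ has the same law, the action laws agree because in the dense process $a_t \sim \pi^{\mathrm{d}}(\cdot \mid \mathbf{x}_t) = \pi^{\mathrm{d}}(\cdot \mid \mathcal{R}(\mathbf{y}_t))$, while in the sparse process $a_t \sim \pi^{\mathrm{s}}(\cdot \mid \mathbf{y}_t) = \pi^{\mathrm{d}}(\cdot \mid \mathcal{R}(\mathbf{y}_t))$; the update $\mathbf{M}_{t+1} = \mathds{1}(\mathbf{M}_t + \mathbf{M}^{a_t})$ is the same deterministic rule in both, so $\mathbf{M}_{t+1}$ inherits the same law. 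Propagating to $t=T$ gives equality in distribution of $\mathbf{y}_T$, and since both objectives are the expectation of the single functional $\mathrm{S}(\mathcal{R}(\mathbf{y}_T),\mathbf{x})$, I conclude
\begin{equation*}
J_T^{\mathrm{dense}}(\mathbf{x};\pi^{\mathrm{d}},\mathcal{R}) = J_T^{\mathrm{sparse}}(\mathbf{x};\pi^{\mathrm{s}},\mathcal{R}) \quad\text{for every } \mathbf{x}.
\end{equation*}
Taking $\mathbb{E}_{\mathbf{x}\sim\mathcal{D}}$ and then supremizing over $(\pi^{\mathrm{d}},\mathcal{R})$ — each such pair mapping to a feasible sparse pair of equal value — yields the claimed inequality.

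I expect the delicate points to be bookkeeping rather than deep: confirming that $\pi^{\mathrm{d}}\circ\mathcal{R}$ lands in the admissible class $C(\mathbb{C}^{N\times N},\Delta^N)$, and making the induction on the trajectory laws fully rigorous (in particular, that intermediate reconstructions never enter the sparse transition, so no regularity of $\mathcal{R}$ at intermediate states is needed). The conceptual heart is simply that a fixed reconstructor makes $\mathbf{y}_t$ a sufficient statistic for $\mathbf{x}_t$; the (unclaimed) strictness gap in the inequality reflects the additional information the sparse sampler may extract from $\mathbf{y}_t$ that the map $\mathcal{R}$ discards.
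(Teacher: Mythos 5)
Your proposal is correct and takes essentially the same route as the paper's proof: both hinge on constructing the sparse sampler as the composition $\pi^{\mathrm{s}} = \pi^{\mathrm{d}} \circ \mathcal{R}$ and verifying that this pair reproduces the dense objective value, so the supremum inequality follows. The only cosmetic differences are that the paper verifies the value equality by a backward value-function recursion applied to an $\epsilon$-near-optimal dense pair, whereas you verify it by forward induction on the law of the masks $\mathbf{M}_t$ for every feasible pair (which neatly sidesteps the $\epsilon$-argument); both are routine bookkeeping around the same key construction.
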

\begin{proof}
    See \ref{appen-subsec:proof-non-deterministic}. 
\end{proof}

\textbf{Remark.} 
In our optimization problems \eref{equ:dense-optim-constrain} and \eref{equ:sparse-optim-constrain}, the sampler and reconstructor are parameterized by $\theta_{\pi}$ and $\theta_{\mathcal{R}}$, reflecting their potential practical implementation as neural networks with learnable parameters. However, in theorem \ref{theorem:non-deterministic}, we place them within the broader continuous function space, which provides analytical convenience for theoretical explorations. This remark also applies to the following theorem \ref{theorem:improved-dynamic}. 

Theorem \ref{theorem:non-deterministic} gives us the confidence to switch from solving the dense-reward joint optimization problem defined in equation \eref{equ:dense-optim-constrain} to solving the sparse-reward joint optimization problem defined in equation \eref{equ:sparse-optim-constrain}.


Next, we can demonstrate that the sparse-reward POMDP is more computationally efficient than the dense-reward POMDP. The inference process of the dense-reward POMDP can be represented as follows: 
$$
\pstr[][-15pt]{
    \nd (y0){\mathbf{y}_0}
    \nd \xrightarrow{\mathcal{R}} \mathbf{x}_0 \xrightarrow[\mathbf{y}]{\pi} (y1){\mathbf{y}_1}
    \nd \xrightarrow{\mathcal{R}} \mathbf{x}_1 \xrightarrow[\mathbf{y}]{\pi} (y2){\mathbf{y}_2}
    \nd \cdots (yT){\mathbf{y}_T} 
\arrow{y0}{y1}{30}{}{black}{}
\arrow{y1}{y2}{30}{}{black}{}
}
    \xrightarrow{\mathcal{R}} \mathbf{x}_T.
$$
On the other hand, the proposed sparse-reward POMDP can be represented as:
$$
\pstr[][-15pt]{
    \nd (y0){\mathbf{y}_0}
    \nd \xrightarrow[\mathbf{y}]{\pi} (y1){\mathbf{y}_1}
    \nd \xrightarrow[\mathbf{y}]{\pi} (y2){\mathbf{y}_2}
    \nd \cdots (yT){\mathbf{y}_T} 
\arrow{y0}{y1}{30}{}{black}{}
\arrow{y1}{y2}{30}{}{black}{}
}
    \xrightarrow{\mathcal{R}} \mathbf{x}_T.
$$
It can be observed that the sparse-reward POMDP does not require reconstruction at the sampling stage. Thus, the sparse-reward POMDP is computationally more efficient. 

Then, we provide a theoretical explanation of how sparse-reward POMDP avoids the distributional mismatch in dense-reward POMDP. In the sparse-reward POMDP, the reconstructor only operates on the terminal state to obtain the reward, and a terminal heuristic sampling policy $\pi_T^{\text{h}}$ is used to pre-train the reconstructor $\mathcal{R}(\cdot)$. We can learn samplers with the proposed sparse-reward POMDP by solving the suboptimization problem of \eref{equ:sparse-optim-constrain} with a fixed pre-trained reconstruction model $\mathcal{R}(\cdot)$:
\begin{equation}
\max_{\theta_{\pi}} \quad \mathbb{E}_{\mathbf{x} \sim \mathcal{D}} J_T^{\text{sparse}}(\mathbf{x}; \pi(\cdot; \theta_\pi), \mathcal{R}). 
\label{equ:sparse-suboptim}
\end{equation}
The following theorem ensures that the sparse-reward suboptimization problem \eref{equ:sparse-suboptim} produces a sampler that is no weaker than the dense-reward one \eref{equ:dense-suboptim}, thus supporting the claim in the previous section.

\begin{theorem}[distributional mismatch]
Let the terminal sampling policy $\pi_{T+\| \mathbf{M}_0 \|_\infty}^{\text{h}}$ satisfies $\pi_{T+\| \mathbf{M}_0 \|_\infty}^{\text{h}}(\mathbf{a}) > 0$ for all binary column vectors $\mathbf{a}$ satisfying $\Vert \mathbf{a} \Vert_1 = T+\| \mathbf{M}_0 \|_\infty$. Suppose that there exits a reconstructor $\mathcal{R}^{\text{sparse}}$ satisfying
\begin{equation} \label{equ:R-sparse}
    \mathcal{R}^{\text{sparse}} \in \mathop{\mathrm{argmax}}\limits_{\mathcal{R} \in C(\mathbb{C}^{N\times N}, \mathbb{R}^{N\times N})} \mathbb{E}_{\mathbf{x}\sim\mathcal{D}} \mathbb{E}_{\mathbf{a}\sim\pi_{T+\| \mathbf{M}_0 \|_\infty}^{\text{h}}} \mathrm{S}(\mathcal{R}(\mathbf{M}^{\mathbf{a}} \odot \mathcal{F}(\mathbf{x})), \mathbf{x}). 
\end{equation}
Then, for any continuous reconstructor $\mathcal{R}^{\text{dense}}$, we have
\begin{equation} \label{equ:improved-dynamic}
    \sup_{\pi \in C(\mathbb{R}^{N\times N}, \Delta^N)} \mathbb{E}_{\mathbf{x} \sim \mathcal{D}} J_T^{\text{dense}}(\mathbf{x}; \pi, \mathcal{R}^{\text{dense}}) \leq \sup_{\pi \in C(\mathbb{C}^{N\times N}, \Delta^N)} \mathbb{E}_{\mathbf{x} \sim \mathcal{D}} J_T^{\text{sparse}}(\mathbf{x}; \pi, \mathcal{R}^{\text{sparse}}). 
\end{equation}
\label{theorem:improved-dynamic}
\end{theorem}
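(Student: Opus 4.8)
The plan is to interpolate through the quantity $\sup_{\pi}\mathbb{E}_{\mathbf{x}}J_T^{\text{sparse}}(\mathbf{x};\pi,\mathcal{R}^{\text{dense}})$ and prove two inequalities,
$$
\sup_{\pi}\mathbb{E}_{\mathbf{x}}J_T^{\text{dense}}(\mathbf{x};\pi,\mathcal{R}^{\text{dense}})
\le \sup_{\pi}\mathbb{E}_{\mathbf{x}}J_T^{\text{sparse}}(\mathbf{x};\pi,\mathcal{R}^{\text{dense}})
\le \sup_{\pi}\mathbb{E}_{\mathbf{x}}J_T^{\text{sparse}}(\mathbf{x};\pi,\mathcal{R}^{\text{sparse}}).
$$
The left inequality keeps the terminal reconstructor fixed and only relaxes what the sampler observes. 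Given a dense-reward sampler $\pi\in C(\mathbb{R}^{N\times N},\Delta^N)$, which acts on $\mathbf{x}_t=\mathcal{R}^{\text{dense}}(\mathbf{y}_t)$, I would define $\tilde\pi:=\pi\circ\mathcal{R}^{\text{dense}}\in C(\mathbb{C}^{N\times N},\Delta^N)$ by $\tilde\pi(\cdot\mid\mathbf{y}_t):=\pi(\cdot\mid\mathcal{R}^{\text{dense}}(\mathbf{y}_t))$; continuity of $\mathcal{R}^{\text{dense}}$ makes $\tilde\pi$ an admissible sparse-reward sampler. By construction $\tilde\pi$ and $\pi$ generate identical action laws at every step, hence identical distributions of the terminal measurement $\mathbf{y}_T$, and since both objectives read only the terminal reconstruction $\mathcal{R}^{\text{dense}}(\mathbf{y}_T)$, the two expected returns coincide. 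Taking suprema gives the left inequality.

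For the right inequality I would prove the stronger claim that $\mathcal{R}^{\text{sparse}}$ is optimal for every sparse-reward sampler simultaneously, namely $\mathbb{E}_{\mathbf{x}}J_T^{\text{sparse}}(\mathbf{x};\pi,\mathcal{R})\le\mathbb{E}_{\mathbf{x}}J_T^{\text{sparse}}(\mathbf{x};\pi,\mathcal{R}^{\text{sparse}})$ for all $\pi$ and all continuous $\mathcal{R}$. Disintegrating the objective over the terminal measurement, $\mathbb{E}_{\mathbf{x}}J_T^{\text{sparse}}(\mathbf{x};\pi,\mathcal{R})=\int \mathbb{E}[\mathrm{S}(\mathcal{R}(\mathbf{y}_T),\mathbf{x})\mid\mathbf{y}_T]\,d\mu_\pi(\mathbf{y}_T)$, where $\mu_\pi$ is the law of $\mathbf{y}_T$ under $\pi$, it suffices to show that $\mathcal{R}^{\text{sparse}}$ maximizes the inner conditional reward $\mathbb{E}[\mathrm{S}(\mathcal{R}(\mathbf{y}_T),\mathbf{x})\mid\mathbf{y}_T]$ for $\mu_\pi$-almost every $\mathbf{y}_T$.

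The crux, and what I expect to be the main obstacle, is showing that the posterior of $\mathbf{x}$ given $\mathbf{y}_T$ does not depend on the sampler. The key structural fact is that masks only grow, so $\mathbf{M}_t\le\mathbf{M}_T$ and therefore $\mathbf{y}_t=\mathbf{M}_t\odot\mathbf{y}_T$: every observation fed to the adaptive sampler is a deterministic function of the terminal measurement alone. Consequently the probability that $\pi$ realizes a given terminal mask $\mathbf{M}$ depends on $\mathbf{x}$ only through $\mathbf{y}_T=\mathbf{M}\odot\mathcal{F}(\mathbf{x})$; it is thus constant along the fiber $\{\mathbf{x}:\mathbf{M}\odot\mathcal{F}(\mathbf{x})=\mathbf{y}_T\}$ and cancels in Bayes' rule, leaving $p(\mathbf{x}\mid\mathbf{y}_T)\propto p(\mathbf{x})\,\mathds{1}[\mathbf{M}\odot\mathcal{F}(\mathbf{x})=\mathbf{y}_T]$, a posterior with no reference to $\pi$. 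Recovering $\mathbf{M}$ from $\mathbf{y}_T$ is unambiguous outside a $\mathcal{D}$-null set where an observed Fourier column vanishes, which I would discard.

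Finally I would invoke the full-support hypothesis to connect this to the optimality \eref{equ:R-sparse} of $\mathcal{R}^{\text{sparse}}$. Since $\pi^{\text{h}}_{T+\| \mathbf{M}_0 \|_\infty}$ charges every mask of size $T+\| \mathbf{M}_0 \|_\infty$, and every admissible sampler produces terminal masks of exactly this size (repetitions only discard information and can be excluded without loss), the induced law $\mu_{\pi^{\text{h}}}$ dominates $\mu_\pi$ for all $\pi$, i.e. $\mu_\pi\ll\mu_{\pi^{\text{h}}}$. Because the conditional reward is one and the same functional of $\mathbf{y}_T$ for every sampler, the maximizer in \eref{equ:R-sparse} of its $\mu_{\pi^{\text{h}}}$-average must maximize it $\mu_{\pi^{\text{h}}}$-a.e., hence $\mu_\pi$-a.e.; so $\mathcal{R}^{\text{sparse}}$ is optimal for $\mu_\pi$ as well. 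Specializing to $\mathcal{R}=\mathcal{R}^{\text{dense}}$ and taking suprema over $\pi$ yields the right inequality, and chaining the two gives \eref{equ:improved-dynamic}. The remaining technical points—the null-set recovery of $\mathbf{M}$ and reconciling pointwise optimality with the restriction to continuous $\mathcal{R}$ through an approximation argument—I would treat as routine.
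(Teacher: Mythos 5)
Your proposal is structurally sound, and it takes a genuinely different route from the paper's. The paper proceeds via a \emph{mask-wise} dominance lemma: for every full-size mask $\mathbf{a}$, $\mathbb{E}_{\mathbf{x}\sim\mathcal{D}}\,\mathrm{S}(\mathcal{R}^{\text{dense}}(\mathbf{M}^{\mathbf{a}}\odot\mathcal{F}(\mathbf{x})),\mathbf{x})\le\mathbb{E}_{\mathbf{x}\sim\mathcal{D}}\,\mathrm{S}(\mathcal{R}^{\text{sparse}}(\mathbf{M}^{\mathbf{a}}\odot\mathcal{F}(\mathbf{x})),\mathbf{x})$, proved by contradiction by gluing $\mathcal{R}^{\text{dense}}$ onto $\mathcal{R}^{\text{sparse}}$ over the offending mask's measurements (a minimum-column-norm threshold keeps the two pieces on disjoint closed sets, then Tietze extension), followed by the same sampler-transfer $\pi^{*}=\pi\circ\mathcal{R}^{\text{dense}}$ you use, organized as a backward induction on value functions. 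You instead disintegrate over the terminal measurement, invoke sampler-independence of the posterior $p(\mathbf{x}\mid\mathbf{y}_T)$, and use absolute continuity $\mu_\pi\ll\mu_{\pi^{\text{h}}}$ (this is where the full-support hypothesis enters for you, playing the same role it plays in the paper's contradiction). Your route has a real advantage: it correctly handles the correlation between the terminal mask and the image that adaptive sampling induces. The paper's induction applies its unconditional, $\mathcal{D}$-averaged lemma inside expectations weighted by the image-dependent action probabilities $\pi^{*}_\epsilon(a\mid\mathbf{y}_{T-1}(\mathbf{x}))$, which does not follow as written; your posterior-independence observation is exactly what is needed to make that step rigorous.

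Two warnings, though. First, the step you dismiss as routine --- that an argmax of the $\mu_{\pi^{\text{h}}}$-average \emph{within the class of continuous reconstructors} maximizes the conditional reward $\mu_{\pi^{\text{h}}}$-a.e. --- is the heart of the theorem and is precisely where the paper spends all of its effort; average optimality over a restricted function class does not give pointwise optimality for free. It is completable in your framework (if $\rho(\cdot,\mathcal{R})>\rho(\cdot,\mathcal{R}^{\text{sparse}})$ on a set of positive $\mu_{\pi^{\text{h}}}$-measure, pick a compact subset carrying positive gain by inner regularity, an open neighborhood of small excess measure, a Urysohn cutoff $\phi$, and test the continuous competitor $\phi\mathcal{R}+(1-\phi)\mathcal{R}^{\text{sparse}}$; boundedness of $\mathrm{S}$ controls the transition region), and this convex-combination gluing is arguably cleaner than the paper's Tietze construction, but it must be written out, not waved through. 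Second, your claim that repetitions can be excluded ``without loss'' is false for an arbitrary fixed reconstructor: $\mathcal{R}^{\text{dense}}$ may reconstruct better from smaller masks, so a repeating sampler can strictly dominate all non-repeating ones; for such samplers $\mu_\pi\ll\mu_{\pi^{\text{h}}}$ fails and your ``simultaneous optimality for every $\pi$'' claim is false as stated. In fairness, the paper has the same blind spot (its lemma covers only full-size terminal masks, yet its induction touches every reachable terminal mask), so this is a shared gap; closing it needs an extra argument, e.g.\ completing a repeated mask to a full-size one and comparing $\mathcal{R}^{\text{sparse}}$ against the continuous competitor $\mathbf{y}\mapsto\mathcal{R}^{\text{dense}}(\mathbf{M}\odot\mathbf{y})$ via the tower property. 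Likewise, both you and the paper quietly assume that measurements with an exactly vanishing observed Fourier column form a $\mathcal{D}$-null set; that is an extra hypothesis, not a consequence of the stated assumptions.
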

\begin{proof}
    See \ref{appen-subsec:dynamic}. 
\end{proof} 

\textbf{Remark 1}. Theorem \ref{theorem:improved-dynamic} demonstrates that the optimal value of \eref{equ:dense-suboptim} is no greater than that of \eref{equ:sparse-suboptim}. Therefore, compared to pre-training $\mathcal{R}^{\text{sparse}}$ with the terminal heuristic sampling policy $\pi_T^{\text{h}}$ for the sparse-reward POMDP, pre-training with any other heuristic sampling policies for the dense-reward POMDP yields no benefit. In particular, previous dynamic sampling methods have utilized a mixture of heuristic sampling policy $\pi_\text{mix}^{\text{h}} = \sum_{t = \| \mathbf{M}_0 \|_\infty}^{\| \mathbf{M}_0 \|_\infty + T} c_t \pi_t^{\text{h}}$ for pre-training reconstructors with the dense-reward POMDP, where $\sum c_t = 1$. Furthermore, we will demonstrate empirically that using $\pi_\text{mix}^{\text{h}}$ for pre-training is generally less effective than using $\pi_T^{\text{h}}$, which we refer to as a distributional mismatch.

\textbf{Remark 2}. Assumption `$\pi_{T+\| \mathbf{M}_0 \|_\infty}^{\text{h}}(\mathbf{a}) > 0$ for all binary column vectors $\mathbf{a}$ satisfying $\Vert \mathbf{a} \Vert_1 = T+\| \mathbf{M}_0 \|_\infty$' is necessary and easily attainable in experimental settings. The conclusion does not hold without this assumption, as demonstrated by a counter-example in \ref{appen-subsec:dynamic}.

Apart from improving inference efficiency and resolving the distributional mismatch, the proposed sparse-reward POMDP also offers an additional advantage. It decouples the sampling and reconstruction processes, which means that the sampling trajectories are no longer reliant on the reconstructor, but only on the sampler and images. This decoupling provides a viable solution for tackling the joint optimization problem of sparse-reward \eref{equ:sparse-optim-constrain}.

\subsection{Dynamic Sampling Training Framework}
\label{subsec:dynamic-sampling}
In this subsection, we present a dynamic sampling training framework that utilizes a pretrained fixed reconstructor to solve the sparse-reward suboptimization problem \eref{equ:sparse-suboptim}. Our experimental results in \sref{sec:exper} demonstrate that this algorithm outperforms previous RL-based methods that rely on dense-reward POMDP, thus providing further evidence that eliminating distributional mismatch can improve reconstruction performance.

We begin by pre-training the reconstructor using a terminal heuristic sampling policy $\pi_T^{\text{h}}$. Subsequently, we employ the proposed sparse-reward POMDP through RL to train the sampler with the fixed reconstructor. Mathematically, we first solve for $\theta_{\mathcal{R}}$ by maximizing the expected reconstruction loss $\mathrm{S}$ of the reconstructor:
\begin{equation}
    \theta_{\mathcal{R}} = \mathop{\mathrm{argmax}}\limits_{\theta_\mathcal{R}} \ \mathbb{E}_{\mathbf{x} \sim \mathcal{D}}\mathbb{E}_{\mathbf{a}\sim\pi_T^{\text{h}}} \left[ \mathrm{S} (\mathcal{R}(\mathbf{M}^\mathbf{a} \odot \mathcal{F}(\mathbf{x}); \theta_{\mathcal{R}}), \mathbf{x}) \right]. 
    \label{equ:dynamic-suboptim1}
\end{equation}
and then solve for $\theta_\pi$ by maximizing the expected cumulative reward $J_T^{\text{sparse}}$ under the sparse-reward POMDP framework:
\begin{equation}
    \theta_\pi = \mathop{\mathrm{argmax}}\limits_{\theta_\pi} \ \mathbb{E}_{\mathbf{x} \sim \mathcal{D}} J_T^{\text{sparse}}(\mathbf{x}; \pi(\cdot; \theta_{\pi}), \mathcal{R}(\cdot; \theta_{\mathcal{R}})). 
    \label{equ:dynamic-suboptim2}
\end{equation}

To solve the optimization problem \eref{equ:dynamic-suboptim1}, we use a standard DL-based MRI reconstruction approach and train the reconstructor using the Adam optimizer \cite{kingma2014adam}.

To solve the optimization problem \eref{equ:dynamic-suboptim2}, we leverage the proposed sparse-reward POMDP through RL. Specifically, we adopt the Actor-to-Critic (A2C) algorithm \cite{mnih2016asynchronous}, an efficient deep RL method to optimize the policy $\pi(\cdot ; \theta_\pi)$. 

We refer to the entire training process as L2S which stands for Learning to Sample. We summarize this process in algorithm \ref{alg:dynamic-framework} and illustrated it in \fref{subfig:framework-dynamic}. 

\begin{figure}
    \centering
    \subfigure[]{
    \centering
    \includegraphics[width=0.3\columnwidth]{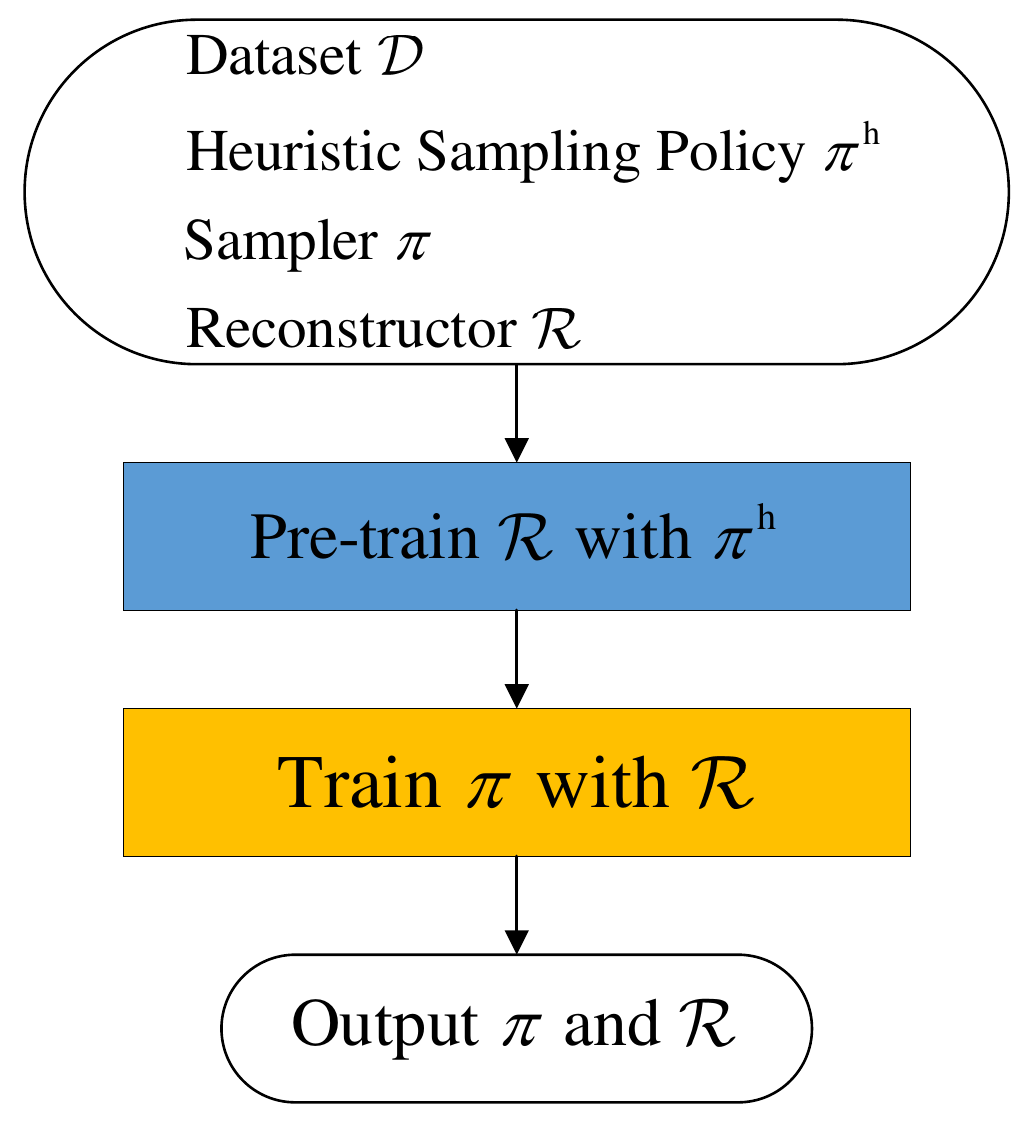}
    \label{subfig:framework-dynamic}
    }
    \subfigure[]{
    \centering
    \includegraphics[width=0.3\columnwidth]{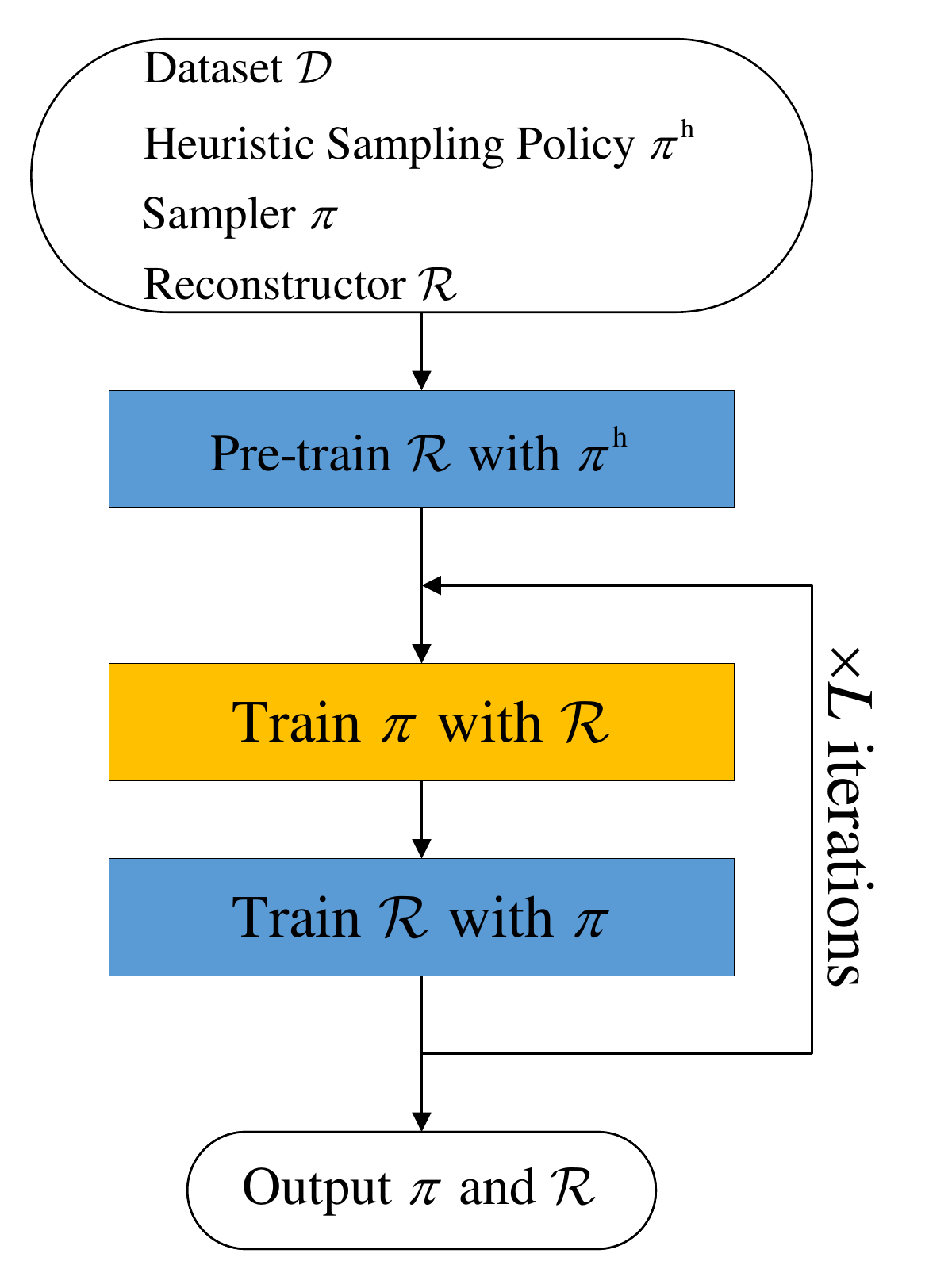}
    \label{subfig:framework-alternate}
    }
    \caption{Overview of (a) the training framework of dynamic sampling with fixed reconstructors, and (b) the alternating training framework. }
    \label{fig:framework}
\end{figure}

\begin{algorithm}[t]
\caption{Learning to Sample (L2S)}
\label{alg:dynamic-framework}
\begin{algorithmic}
\REQUIRE a sampler $\pi$, a reconstructor $\mathcal{R}$, a heuristic sampling policy $\pi^{\text{h}}$, an MRI dataset $\mathcal{D}$, and the maximum number of BP iterations $n$
\FOR{$i = 1 \ \text{to} \ n$}   
\STATE sample a batch $\{\mathbf{x}^{(b)}\}_{b=1}^B \sim \mathcal{D}$ and binary mask vectors $\mathbf{a}^{(b)} \sim \pi^{\text{h}}$
\STATE compute $\mathcal{L}(\theta_\mathcal{R}) = - \frac{1}{B} \sum_{b=1}^B \mathrm{S}(\mathcal{R}(\mathbf{M}^{\mathbf{a}^{(b)}}\mathcal{F}(\mathbf{x}^{(b)}); \theta_\mathcal{R}), \mathbf{x}^{(b)})$
\STATE optimize $\theta_\mathcal{R}$ with the loss $\mathcal{L}$ by Adam
\ENDFOR
\STATE build environment (sparse-reward POMDP)
\STATE train sampler $\pi$ with the environment by A2C
\STATE Output sampler $\pi$ and reconstructor $\mathcal{R}$
\end{algorithmic}
\end{algorithm}

\textbf{Remark.} While we still require a heuristic policy $\pi^{\text{h}}$, it is only used to initialize the reconstructor at the beginning and does not rely on human knowledge or design. Even random initialization is sufficient to provide a reasonable starting point before training (see \sref{sec:exper}). The core of our training process involves RL-based optimization of $\pi$ in a data-driven way, without restricting to human-designed heuristics. This remark also applies to algorithm \ref{alg:alter-framework}.

\subsection{Alternating Training Framework}
\label{subsec:alternate}
In this subsection, we present an alternating training framework to solve the sparse-reward joint optimization problem \eref{equ:sparse-optim-constrain}. By jointly optimizing the sampler and reconstructor, this algorithm eliminates \textit{training mismatch}. Mathematically, the optimal values of \eref{equ:sparse-optim-constrain} and \eref{equ:sparse-suboptim} satisfy the following inequality: 
$$
\sup_{\theta_\pi} \ \mathbb{E}_{\mathbf{x} \sim \mathcal{D}} J_T^{\text{sparse}}(\mathbf{x}; \pi(\cdot; \theta_\pi), \mathcal{R}) \leq \sup_{\theta_\pi, \theta_\mathcal{R}} \ \mathbb{E}_{\mathbf{x} \sim \mathcal{D}} J_T^{\text{sparse}}(\mathbf{x}; \pi(\cdot; \theta_\pi), \mathcal{R}(\cdot; \theta_\mathcal{R})).
$$

We begin by pre-training the reconstructor using a terminal heuristic sampling policy $\pi_T^{\text{h}}$. During training, we alternately train the sampler with the fixed reconstructor and train the reconstructor with the fixed sampler. Mathematically, we solve the following optimization problems in sequence:
\begin{equation}
    \theta_{\mathcal{R}}^{(0)} = \mathop{\mathrm{argmax}}\limits_{\theta_\mathcal{R}} \ \mathbb{E}_{\mathbf{x} \sim \mathcal{D}}\mathbb{E}_{\mathbf{a}\sim\pi_T^{\text{h}}} \left[ \mathrm{S} (\mathcal{R}(\mathbf{M}^\mathbf{a} \odot \mathcal{F}(\mathbf{x}); \theta_{\mathcal{R}}), \mathbf{x}) \right], 
    \label{equ:alter-suboptim1}
\end{equation}
\begin{equation}
    \theta_\pi^{(l)} = \mathop{\mathrm{argmax}}\limits_{\theta_\pi} \ \mathbb{E}_{\mathbf{x} \sim \mathcal{D}} J_T^{\text{sparse}}(\mathbf{x}; \pi(\cdot; \theta_{\pi}), \mathcal{R}(\cdot; \theta_{\mathcal{R}}^{(l-1)})),
    \label{equ:alter-suboptim2}
\end{equation}
\begin{equation}
    \theta_{\mathcal{R}}^{(l)} = \mathop{\mathrm{argmax}}\limits_{\theta_\mathcal{R}} \ \mathbb{E}_{\mathbf{x} \sim \mathcal{D}} J_T^{\text{sparse}}(\mathbf{x}; \pi(\cdot; \theta_{\pi}^{(l)}), \mathcal{R}(\cdot; \theta_{\mathcal{R}})),
    \label{equ:alter-suboptim3}
\end{equation}
where $L$ is the number of alternation and $l=1,2,\cdots,L$. 

Optimization problems \eref{equ:alter-suboptim1} and \eref{equ:alter-suboptim2} are identical to \eref{equ:dynamic-suboptim1} and \eref{equ:dynamic-suboptim2} respectively. We employ the same methods to solve them.

Thanks to the proposed sparse-reward POMDP, which separates sampling and reconstruction, we can solve optimization problem \eref{equ:alter-suboptim3} using BP. The following proposition enables training the reconstructor with a fixed sampler using gradient-based techniques such as the Adam optimizer.
\begin{proposition}
The derivative of $J_T^{\text{sparse}}$ w.r.t. $\theta_{\mathcal{R}}$ is
\begin{equation}
    \nabla_{\theta_{\mathcal{R}}} J_T^{\text{sparse}}(\mathbf{x}) = \mathbb{E}_{\{a_t\}_{t=0}^{T-1} \sim \pi} \left[ \nabla_{\theta_{\mathcal{R}}} \mathrm{S}(\mathcal{R}(\mathbf{y}_T; \theta_\mathcal{R}), \mathbf{x} \right], 
\label{equ:sparse-derivative}
\end{equation}
where $\{a_t\}_{t=0}^{T-1} \sim \pi$ means a sequential acquisition according to \eref{equ:sparse-optim-constrain} and $\mathbf{y}_T = \\ \mathbf{y}_T (\mathbf{M}_0, \{a_t\}_{t=0}^{T-1}) = (\mathbf{M}_0 + \sum_{t=0}^{T-1} \mathbf{M}^{a_t}) \odot \mathbf{y}$. 
\label{prop:sparse-derivative}
\end{proposition}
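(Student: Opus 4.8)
The plan is to exploit the defining feature of the sparse-reward POMDP: the policy $\pi$ conditions only on the measurements $\mathbf{y}_t = \mathbf{M}_t \odot \mathbf{y}$ and never on any reconstruction. Consequently the law of the action sequence $\{a_t\}_{t=0}^{T-1}$ is independent of $\theta_{\mathcal{R}}$, and the only place $\theta_{\mathcal{R}}$ enters $J_T^{\text{sparse}}$ is through the single terminal evaluation $\mathcal{R}(\mathbf{y}_T; \theta_{\mathcal{R}})$. This is precisely the decoupling of sampling and reconstruction emphasized in \sref{subsec:sparse-reward}, and it is what makes the interchange of gradient and expectation legitimate.

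First I would write the objective as an explicit sum over trajectories. Since the action set $\mathcal{A} = \{n\}_{n=1}^N$ is finite, the set of admissible trajectories is finite and the expectation $\mathbb{E}_\pi$ is a finite weighted sum
\begin{equation*}
    J_T^{\text{sparse}}(\mathbf{x}) = \sum_{\{a_t\}_{t=0}^{T-1}} P_\pi(\{a_t\}_{t=0}^{T-1} \mid \mathbf{x}) \, \mathrm{S}(\mathcal{R}(\mathbf{y}_T; \theta_{\mathcal{R}}), \mathbf{x}),
\end{equation*}
where the trajectory probability factorizes along the transitions as $P_\pi = \prod_{t=0}^{T-1} \pi(a_t \mid \mathbf{y}_t)$, each $\mathbf{y}_t$ being determined recursively from $\mathbf{M}_0$ and the preceding actions via $\mathbf{M}_{t+1} = \mathds{1}(\mathbf{M}_t + \mathbf{M}^{a_t})$ and $\mathbf{y}_t = \mathbf{M}_t \odot \mathbf{y}$, and with $\mathbf{y}_T = (\mathbf{M}_0 + \sum_{t=0}^{T-1} \mathbf{M}^{a_t}) \odot \mathbf{y}$ as stated.

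Next I would differentiate term by term. The weight $P_\pi(\{a_t\} \mid \mathbf{x})$ depends only on $\pi$ and on the measurements $\mathbf{y}_t$, all of which are functions of $\mathbf{M}_0$, the actions, and $\mathbf{y} = \mathcal{F}(\mathbf{x})$; none of these involve $\theta_{\mathcal{R}}$. Hence $\nabla_{\theta_{\mathcal{R}}} P_\pi = 0$, and because the sum is finite the gradient passes through it with no convergence concerns, giving
\begin{equation*}
    \nabla_{\theta_{\mathcal{R}}} J_T^{\text{sparse}}(\mathbf{x}) = \sum_{\{a_t\}_{t=0}^{T-1}} P_\pi(\{a_t\}_{t=0}^{T-1} \mid \mathbf{x}) \, \nabla_{\theta_{\mathcal{R}}} \mathrm{S}(\mathcal{R}(\mathbf{y}_T; \theta_{\mathcal{R}}), \mathbf{x}),
\end{equation*}
which is exactly the claimed expectation $\mathbb{E}_{\{a_t\} \sim \pi}[\nabla_{\theta_{\mathcal{R}}} \mathrm{S}(\mathcal{R}(\mathbf{y}_T; \theta_{\mathcal{R}}), \mathbf{x})]$.

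I do not expect a genuine obstacle: the substance of the argument is simply the observation that $\theta_{\mathcal{R}}$ does not influence the trajectory distribution. The one point deserving care is the contrast with the dense-reward case, where $a_t \sim \pi(\cdot \mid \mathbf{x}_t)$ with $\mathbf{x}_t = \mathcal{R}(\mathbf{y}_t)$ does depend on $\theta_{\mathcal{R}}$; there $\nabla_{\theta_{\mathcal{R}}} P_\pi \neq 0$ and an additional score-function (policy-gradient) term would appear, so ordinary back-propagation through the terminal loss would be biased. Making explicit that the sparse-reward construction eliminates this dependence is the crux, and the finiteness of $\mathcal{A}$ renders the differentiation-under-the-sum step entirely routine.
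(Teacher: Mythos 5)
Your proposal is correct and follows essentially the same route as the paper's proof: both arguments rest on the two observations that the trajectory distribution in the sparse-reward POMDP is independent of $\theta_{\mathcal{R}}$ (since $\pi$ conditions only on $\mathbf{y}_t$, never on a reconstruction) and that the finiteness of the action set makes the expectation a finite sum, so differentiation and expectation interchange trivially. Your version merely makes explicit the trajectory factorization $P_\pi = \prod_{t} \pi(a_t \mid \mathbf{y}_t)$ and the vanishing of $\nabla_{\theta_{\mathcal{R}}} P_\pi$, which the paper compresses into the remark that the differentiation is $\pi$-independent.
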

\begin{proof}
    See \ref{appen-subsec:proof-sparse-derivative}. 
\end{proof}

This proposition states that we can compute the gradient of $J_T^{\text{sparse}}$ w.r.t $\theta_\mathcal{R}$. Specifically, we sample a sampling trajectory $\{a_t\}_{t=0}^{T-1}$ from the ground truth $\mathbf{x}$ using the learned policy $\pi$ to get $\mathbf{y}_T$, then reconstruct $\mathbf{x}_T$ by applying the reconstruction function $\mathcal{R}$ to $\mathbf{y}_T$, and finally compute the gradient of the similarity between the reconstructed image and the ground truth w.r.t $\theta_\mathcal{R}$. Furthermore, we run multiple trajectories to compute the average gradient, and optimize \eref{equ:alter-suboptim3} by Adam optimizer. 

\begin{algorithm}[t]
\caption{Learning to Sample and Reconstruct (L2SR)}
\label{alg:alter-framework}
\begin{algorithmic}
\REQUIRE a sampler $\pi$, a reconstructor $\mathcal{R}$, a heuristic sampling policy $\pi^{\text{h}}$, an MRI dataset $\mathcal{D}$, the number of alternations $L$, and the maximum number of BP iterations $\{n^{(l)}\}_{l=0}^L$
\FOR{$i = 1 \ \text{to} \ n^{(0)}$}
\STATE sample a batch $\{\mathbf{x}^{(b)}\}_{b=1}^B \sim \mathcal{D}$ and binary mask vectors $\mathbf{a}^{(b)} \sim \pi^{\text{h}}$
\STATE compute $\mathcal{L}(\theta_\mathcal{R}) = - \frac{1}{B} \sum_{b=1}^B \mathrm{S}(\mathcal{R}(\mathbf{M}^{\mathbf{a}^{(b)}}\mathcal{F}(\mathbf{x}^{(b)}); \theta_\mathcal{R}), \mathbf{x}^{(b)})$
\STATE optimize $\theta_\mathcal{R}$ with the loss $\mathcal{L}$ by Adam
\ENDFOR
\FOR{$l = 1 \ \text{to} \ L$}
\STATE build environment (sparse-reward POMDP)
\STATE train sampler $\pi$ with the environment by A2C
\FOR{$i = 1 \ \text{to} \ n^{(l)}$}
\STATE sample a batch $\{\mathbf{x}^{(b)}\}_{b=1}^B \sim \mathcal{D}$ and its acquisition sequences $\mathbf{a}^{(b)} \sim \pi(\mathbf{x}^{(b)})$
\STATE compute $\mathcal{L}(\theta_\mathcal{R}) = - \frac{1}{B} \sum_{b=1}^B \mathrm{S}(\mathcal{R}(\mathbf{M}^{\mathbf{a}^{(b)}}\mathcal{F}(\mathbf{x}^{(b)}); \theta_\mathcal{R}), \mathbf{x}^{(b)})$
\STATE optimize $\theta_\mathcal{R}$ with the loss $\mathcal{L}$ by Adam
\ENDFOR
\ENDFOR
\STATE Output sampler $\pi$ and reconstructor $\mathcal{R}$
\end{algorithmic}
\end{algorithm}

We refer to the entire training process as L2SR which stands for Learning to Sample and Reconstruct. We summarize this process in algorithm \ref{alg:alter-framework} and illustrated it in \fref{subfig:framework-alternate}. 

\textbf{Remark.} The proposed alternating training framework is not suitable for the dense-reward POMDP, since there is no equivalent derivative of $J_T^{\text{dense}}$ w.r.t $\theta_{\mathcal{R}}$ like the one presented in proposition \ref{prop:sparse-derivative}. We explain this limitation in \ref{appen-subsec:dense-derivative}.

\section{Experiments}
\label{sec:exper}

\begin{table}[]
\caption{Acceleration factors and initial accelerations factors of random sampling policies for pre-training reconstructors. We utilize $\mathcal{R}^{\text{dense}}$ for PG-MRI \cite{NEURIPS2020_daed2103} and Greedy Oracle, and $\mathcal{R}^{\text{sparse}}$ for Random and L2S. }
\label{table:heuristic-policy}
\begin{indented}
\lineup
\item[]
    \begin{tabular}{lll}
    \br
              & \multicolumn{2}{c}{\textbf{$\times 4$ acceleration}}                \\
              & \textbf{acceleration factor} & \textbf{initial acceleration factor} \\ \hline
$\mathcal{R}^{\text{dense}}$ & {[}4,4,4,6,6,8{]}            & {[}4,6,8,6,8,8{]}                    \\
$\mathcal{R}^{\text{sparse}}$ & 4                            & 8 (Base) or 32 (Long)                                    \\ 
\br \br
              & \multicolumn{2}{c}{\textbf{$\times 8$ acceleration}}                \\
              & \textbf{acceleration factor} & \textbf{initial acceleration factor} \\ \hline
$\mathcal{R}^{\text{dense}}$ & {[}8,8,8,12,12,16{]}         & {[}8,12,16,12,16,16{]}               \\
$\mathcal{R}^{\text{sparse}}$ & 8                            & 16 (Base) or 64 (Long)                                     \\
\br \br
              & \multicolumn{2}{c}{\textbf{$\times 16$ acceleration}}                \\
              & \textbf{acceleration factor} & \textbf{initial acceleration factor} \\ \hline
$\mathcal{R}^{\text{dense}}$ & {[}16,16,16,24,24,32{]}         & {[}16,16,16,24,24,32{]}               \\
$\mathcal{R}^{\text{sparse}}$ & 16                            & 32 (Base) or 128 (Long)                                     \\
\br
\end{tabular}
\end{indented}
\end{table}

\subsection{Implementation}
\label{subsec:exp-implementation}

\textbf{Setup}: We evaluate the proposed methods under two different settings. In the `\textbf{Fixed Reconstructor}' (abbreviated as `Fixed") setting, the reconstructor is pre-trained using a heuristic sampling policy and is subsequently utilized with our proposed \textbf{L2S} method. In contrast, the `\textbf{Joint Training}' (abbreviated as `Joint") setting involves the simultaneous training of both the sampler and the reconstructor, employing our novel \textbf{L2SR} framework. 

\textbf{Similarity Metric}: In our training process, we adopt the Structural Similarity Index (SSIM) \cite{1284395} as the image similarity metric $\mathrm{S}$. We recognize its potential limitations for medical imaging tasks due to its sensitivity to scale differences, shifts, and rotations. It reflects the broader challenge within medical imaging of aligning quantitative evaluation metrics closely with the qualitative `eyeball' assessments used by medical professionals. To address these concerns, we emphasize our framework's inherent adaptability, designed to accommodate a variety of metrics such as Peak Signal-to-Noise Ratio (PSNR), Wasserstein GAN (WGAN) distance \cite{arjovsky2017wasserstein, huang2019learning} or some downstream task metrics, depending on the particular application requirements. This adaptability ensures the framework's applicability across various medical imaging tasks without being limited by the choice of SSIM.

\textbf{Acquisition}: We define the \textit{acceleration factor} as $N / \Vert \mathbf{a} \Vert_1$ to measure the time overhead of sampling, denoted as $\times (N / \Vert \mathbf{a} \Vert_1)$-acceleration. Most sampling policies start by acquiring a certain number of columns at the central low-frequency region, where more information is concentrated, and then sample the rest of the k-space according to their unique strategies. We define the \textit{initial acceleration factor} as the ratio between $N$ and the number of sampling columns in the first state. We explore two types of initialization: \textit{Base-horizon}, where the initial acceleration factor is twice the acceleration factor, and \textit{Long-horizon}, where it is 8 times the acceleration factor.

\textbf{Heuristic Sampling Policy for Pre-training}: For the pre-training of reconstructors $\mathcal{R}^{\text{dense}}$ and $\mathcal{R}^{\text{sparse}}$, we employ a random sampling policy as the heuristic approach. The specific acceleration factors and initial acceleration factors used in this process are detailed in \tref{table:heuristic-policy}. It is important to note that the selection of heuristic sampling policies for pre-training $\mathcal{R}^{\text{sparse}}$ aligns with the underlying assumptions detailed in theorem \ref{theorem:improved-dynamic}. This consistency ensures that our approach is grounded in a well-defined theoretical framework.

\textbf{Dataset}: We utilize the single-coil knee dataset and the multi-coil brain dataset from the commonly-used fastMRI dataset \cite{zbontar2018fastmri}. The data preprocessing follows the protocol established in \cite{NEURIPS2020_daed2103}: We partition the dataset, reserving 20\% of it for test and the rest for training. For the single-coil knee dataset, we derive the ground truth image $\mathbf{x}$ by cropping the original to the central $128\times128$ region, using half the available volumes, and removing the outer slices of each volume, resulting in 6959 training slices, 1779 validation slices, and 1715 test slices. For the multi-coil brain dataset, we obtain the ground truth image $\mathbf{x}$ by cropping to the central $256 \times 256$ region, using one-fifth of the available volumes, and assembling a dataset of 11312 training slices, 4372 validation slices, and 2832 test slices.

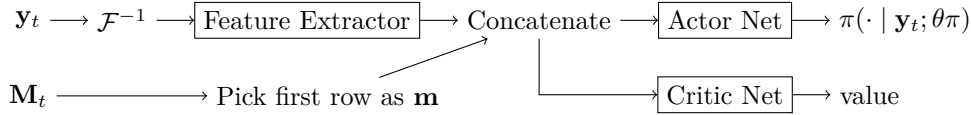
\begin{figure}
    \centering
    \begin{tikzpicture}
        \node (inputy) {$\mathbf{y}_t$};
        \node[below=0.5cm of inputy] (inputM) {$\mathbf{M}_t$};
        \node[right=0.5cm of inputy] (ifft) {$\mathcal{F}^{-1}$};
        \node[draw, right=0.5cm of ifft] (feature) {Feature Extractor};
        \node[right=0.5cm of feature] (concat) {Concatenate};
        \node[right=2cm of inputM] (pickm) {Pick first row as $\mathbf{m}$};
        \node[draw, right=0.5cm of concat] (actor) {Actor Net};
        \node[draw, below=0.5cm of actor] (critic) {Critic Net};
        \node[right=0.5cm of actor] (actorOut) {$\pi(\cdot \mid \mathbf{y}_t; \theta\pi)$};
        \node[right=0.5cm of critic] (criticOut) {value};

        \draw[->] (inputy) -- (ifft);
        \draw[->] (ifft) -- (feature);
        \draw[->] (feature) -- (concat);
        \draw[->] (inputM) -- (pickm);
        \draw[->] (pickm) -- (concat);
        \draw[->] (concat) -- (actor);
        \draw[->] (concat) |- (critic);
        \draw[->] (actor) -- (actorOut);
        \draw[->] (critic) -- (criticOut);
    \end{tikzpicture}
    \caption{Transversal view of the policy model comprising a 2D Inverse Fourier Transform, a feature extractor, and actor and critic neural networks. The learnable parts are enclosed in boxes. The feature extractor is a CNN-based neural network following the architecture from \cite{NEURIPS2020_daed2103}. The actor and critic nets are both constructed as fully connected neural networks.}
    \label{fig:policy-model}
\end{figure}

\textbf{Policy model}: The policy model's architecture is depicted in \fref{fig:policy-model}. It takes as input the binary mask matrix $\mathbf{M}_t$ and the observation $\mathbf{y}_t$, and processes them through a series of components including a 2D Inverse Fourier Transform, a feature extractor, and actor and critic networks. The final outputs are a discrete probability distribution $\pi(\cdot \mid \mathbf{y}_t; \theta\pi)$ over actions, generated by the actor net, and a scalar value representing the predicted value of the current state, produced by the critic net.

\textbf{Reconstruction model}: We employ an easy-to-implement standard U-Net \cite{ronneberger2015u} architecture with 16 channels, consisting of 8 blocks, as found in the fastMRI repository. The input to it is the zero-filled reconstruction image, represented by $|\mathcal{F}^{-1}(\mathbf{y}_t)|$, having the shape $(N, N, 1)$, and the output is the reconstruction image with the same shape. While we use Unet with 16 channels for our experiments, it's essential to note that our L2SR framework does not confine the choice of reconstruction models. The usage of the U-Net model serves as an illustrative example. 

\textbf{Computational Resources}: All experiments were conducted using a dedicated computational cluster equipped with NVIDIA Tesla V100 GPUs, each with 16 GB of memory. Our training and inference processes are performed on individual GPUs. 

\begin{table}[t]
\footnotesize
\centering
\caption{Main results: reconstruction results (mean and standard deviation) in terms of SSIM and PSNR values on the test dataset. For `Fixed (Reconstructor)', $+ \mathcal{R}$ means the heuristic sampling policy for pre-training. The best results (highest mean) for a specific acceleration factor and initial acceleration factor among the compared algorithms are shown in bold numbers. The best results for a specific acceleration factor are shown in blue numbers. }
\label{table:sparse-ssim}
\subtable[Knee dataset $\times 4$-acceleration]{
\resizebox{0.98\linewidth}{!}{%
\centering
\begin{tabular}{llllll}
\br
                          & & \multicolumn{2}{c}{\textbf{Base-horizon}}     & \multicolumn{2}{c}{\textbf{Long-horizon}}     \\
                          & & SSIM & PSNR & SSIM & PSNR \\ \midrule
\multicolumn{1}{l}{\multirow{4}{*}{\begin{tabular}[c]{@{}l@{}}Fixed\end{tabular}}} & \textbf{Random}+$\mathcal{R}^{\text{sparse}}$    & $0.7222 \pm 0.0405$ & $26.27 \pm 1.78$ & $0.715 \pm 0.0329$ & $25.16 \pm 1.78$ \\
\multicolumn{1}{l}{} & \textbf{PG-MRI}+$\mathcal{R}^{\text{dense}}$\cite{NEURIPS2020_daed2103}    & $0.7523 \pm 0.0375$ & $26.54 \pm 1.74$ & $0.7674 \pm 0.0299$ & $25.23 \pm 1.59$ \\
\multicolumn{1}{l}{} & \textbf{L2S}+$\mathcal{R}^{\text{sparse}}$ \textbf{(Ours)}    & $\bm{0.7543 \pm 0.0372}$ & $\bm{26.93 \pm 1.79}$ & $\bm{0.7838 \pm 0.0286}$ & $\bm{26.86 \pm 1.68}$ \\ \cline{2-6} 
\multicolumn{1}{l}{} & \textbf{Greedy Oracle}+$\mathcal{R}^{\text{dense}}$    & $0.7658 \pm 0.035$ & $26.74 \pm 1.75$ & $0.7837 \pm 0.0264$ & $25.79 \pm 1.7$ \\ \midrule \midrule
\multicolumn{1}{l}{\multirow{4}{*}{\begin{tabular}[c]{@{}l@{}}Joint\end{tabular}}} & \textbf{LOUPE}\cite{bahadir2019learning}    & $0.7243 \pm 0.0387$ & $26.2 \pm 1.75$ & $0.72 \pm 0.0314$ & $25.09 \pm 1.79$ \\
\multicolumn{1}{l}{} & \textbf{$\tau$-Step Seq}\cite{yin2021end}    & $0.7649 \pm 0.0408$ & $27.26 \pm 1.85$ & $0.8025 \pm 0.0315$ & $27.69 \pm 1.81$ \\
\multicolumn{1}{l}{} & \textbf{L2SR (Ours)}    & $\bm{0.7681 \pm 0.0416}$ & $\bm{27.38 \pm 1.9}$ & $\color{blue} \bm{0.8097 \pm 0.0333}$ & $\color{blue} \bm{28.15 \pm 1.83}$ \\
\br
\end{tabular}}}

\subtable[Knee dataset $\times 8$-acceleration]{
\resizebox{0.98\linewidth}{!}{%
\centering
\begin{tabular}{llllll}
\br
                          & & \multicolumn{2}{c}{\textbf{Base-horizon}}     & \multicolumn{2}{c}{\textbf{Long-horizon}}    \\
                          & & SSIM & PSNR & SSIM & PSNR \\ \midrule
\multicolumn{1}{l}{\multirow{4}{*}{\begin{tabular}[c]{@{}l@{}}Fixed\end{tabular}}} & \textbf{Random}+$\mathcal{R}^{\text{sparse}}$    & $0.6039 \pm 0.0492$ & $24.17 \pm 1.71$ & $0.5915 \pm 0.0442$ & $22.86 \pm 2.01$ \\
\multicolumn{1}{l}{} & \textbf{PG-MRI}+$\mathcal{R}^{\text{dense}}$\cite{NEURIPS2020_daed2103}    & $0.623 \pm 0.0504$ & $24.21 \pm 1.69$ & $0.633 \pm 0.0446$ & $23.51 \pm 1.74$ \\
\multicolumn{1}{l}{} & \textbf{L2S}+$\mathcal{R}^{\text{sparse}}$ \textbf{(Ours)}    & $\bm{0.6258 \pm 0.0505}$ & $\bm{24.65 \pm 1.72}$ & $\bm{0.6441 \pm 0.0453}$ & $\bm{24.28 \pm 1.71}$ \\ \cline{2-6} 
\multicolumn{1}{l}{} & \textbf{Greedy Oracle}+$\mathcal{R}^{\text{dense}}$    & $0.6376 \pm 0.0488$ & $24.44 \pm 1.71$ & $0.6509 \pm 0.043$ & $23.41 \pm 1.74$ \\ \midrule \midrule
\multicolumn{1}{l}{\multirow{4}{*}{\begin{tabular}[c]{@{}l@{}}Joint\end{tabular}}} & \textbf{LOUPE}\cite{bahadir2019learning}    & $0.6002 \pm 0.0474$ & $23.96 \pm 1.7$ & $0.5876 \pm 0.0413$ & $22.64 \pm 1.97$ \\
\multicolumn{1}{l}{} & \textbf{$\tau$-Step Seq}\cite{yin2021end}    & $\bm{0.6359 \pm 0.0572}$ & $\bm{25.07 \pm 1.73}$ &$0.6614 \pm 0.0508$ & $25.26 \pm 1.76$ \\
\multicolumn{1}{l}{} & \textbf{L2SR (Ours)}    & $0.6332 \pm 0.053$ & $24.98 \pm 1.74$ & $\color{blue} \bm{0.6712 \pm 0.0511}$ & $\color{blue} \bm{25.61 \pm 1.79}$ \\
\br
\end{tabular}}}

\subtable[Brain dataset $\times 8$-acceleration]{
\resizebox{0.98\linewidth}{!}{%
\centering
\begin{tabular}{llllll}
\br
                          & & \multicolumn{2}{c}{\textbf{Base-horizon}}     & \multicolumn{2}{c}{\textbf{Long-horizon}}     \\
                          & & PSNR & SSIM & PSNR & SSIM \\ \midrule
\multicolumn{1}{l}{\multirow{4}{*}{\begin{tabular}[c]{@{}l@{}}Fixed\end{tabular}}} & \textbf{Random}+$\mathcal{R}^{\text{sparse}}$   & $0.8643 \pm 0.053$ & $30.66 \pm 4.35$ & $0.8405 \pm 0.0668$ & $28.65 \pm 4.68$ \\
\multicolumn{1}{l}{} & \textbf{PG-MRI}+$\mathcal{R}^{\text{dense}}$\cite{NEURIPS2020_daed2103}    & $0.8771 \pm 0.0473$ & $31.37 \pm 4.18$ & $\bm{0.8766 \pm 0.0476}$ & $\bm{31.31 \pm 4.17}$ \\
\multicolumn{1}{l}{} & \textbf{L2S}+$\mathcal{R}^{\text{sparse}}$ \textbf{(Ours)}    & $\bm{0.8782 \pm 0.0475}$ & $\bm{31.54 \pm 4.17}$ & $0.8686 \pm 0.0529$ & $30.25 \pm 4.23$ \\ \cline{2-6} 
\multicolumn{1}{l}{} & \textbf{Greedy Oracle}+$\mathcal{R}^{\text{dense}}$    & $0.8822 \pm 0.0453$ & $31.64 \pm 4.15$ & $0.8771 \pm 0.0532$ & $31.17 \pm 4.31$ \\ \midrule \midrule
\multicolumn{1}{l}{\multirow{4}{*}{\begin{tabular}[c]{@{}l@{}}Joint\end{tabular}}} & \textbf{LOUPE}\cite{bahadir2019learning}    & $0.8553 \pm 0.0555$ & $30.15 \pm 4.32$ & $0.8211 \pm 0.0691$ & $27.34 \pm 3.99$ \\
\multicolumn{1}{l}{} & \textbf{$\tau$-Step Seq}\cite{yin2021end}    & $0.8862 \pm 0.041$ & $31.93 \pm 3.98$ & $0.8921 \pm 0.0397$ & $32.07 \pm 3.88$ \\
\multicolumn{1}{l}{} & \textbf{L2SR (Ours)}    & $\bm{0.8899 \pm 0.043}$ & $\bm{32.57 \pm 4.09}$ & $\color{blue} \bm{0.8969 \pm 0.0417}$ & $\color{blue} \bm{32.87 \pm 4.02}$ \\
\br
\end{tabular}}}

\subtable[Brain dataset $\times 16$-acceleration]{
\resizebox{0.98\linewidth}{!}{%
\centering
\begin{tabular}{llllll}
\br
                          & & \multicolumn{2}{c}{\textbf{Base-horizon}}     & \multicolumn{2}{c}{\textbf{Long-horizon}}    \\
                          & & SSIM & PSNR & SSIM & PSNR \\ \midrule
\multicolumn{1}{l}{\multirow{4}{*}{\begin{tabular}[c]{@{}l@{}}Fixed\end{tabular}}} & \textbf{Random}+$\mathcal{R}^{\text{sparse}}$   & $80.31 \pm 0.0825$ & $27.56 \pm 5.03$ & $0.7837 \pm 0.0921$ & $26.28 \pm 5.02$ \\
\multicolumn{1}{l}{} & \textbf{PG-MRI}+$\mathcal{R}^{\text{dense}}$\cite{NEURIPS2020_daed2103}    & $0.821 \pm 0.0722$ & $28.25 \pm 4.71$ & $\bm{0.8218 \pm 0.0721}$ & $\bm{28.36 \pm 4.68}$ \\
\multicolumn{1}{l}{} & \textbf{L2S}+$\mathcal{R}^{\text{sparse}}$ \textbf{(Ours)}    & $\bm{0.8263 \pm 0.0703}$ & $\bm{28.59 \pm 4.73}$ & $0.819 \pm 0.0738$ & $27.65 \pm 4.71$ \\ \cline{2-6} 
\multicolumn{1}{l}{} & \textbf{Greedy Oracle}+$\mathcal{R}^{\text{dense}}$   & $0.8312 \pm 0.0677$ & $28.75 \pm 4.64$ & $0.8251 \pm 0.0734$ & $28.36 \pm 4.86$ \\ \midrule \midrule
\multicolumn{1}{l}{\multirow{4}{*}{\begin{tabular}[c]{@{}l@{}}Joint\end{tabular}}} & \textbf{LOUPE}\cite{bahadir2019learning}    & $0.8003 \pm 0.0821$ & $27.55 \pm 4.977$ & $0.7855 \pm 0.0894$ & $26.29 \pm 4.748$ \\
\multicolumn{1}{l}{} & \textbf{$\tau$-Step Seq}\cite{yin2021end}    & $0.8314 \pm 0.0652$ & $29.05 \pm 4.636$ & $0.8384 \pm 0.0631$ & $29.34 \pm 4.59$ \\
\multicolumn{1}{l}{} & \textbf{L2SR (Ours)}    & $\bm{0.8404 \pm 0.066}$ & $\bm{29.39 \pm 4.69}$ & $\color{blue} \bm{0.8456 \pm 0.0652}$ & $\color{blue} \bm{29.37 \pm 4.74}$ \\
\br
\end{tabular}}}
\end{table}

\subsection{Algorithm Comparisons}
\label{subsec:alg-compare}
In the `Fixed Reconstructor' setting, we evaluate the L2S against three baseline methods: (1) Random: randomly selecting 1-d lines from a uniform distribution; (2) PG-MRI \cite{NEURIPS2020_daed2103}: solving the dense-reward POMDP by policy gradient (which is one of the state-of-the-art dynamic sampling methods); (3) Greedy Oracle: a one-step oracle policy that has access to ground truth at test time. We pre-train the dense-reward reconstructor $\mathcal{R}^{\text{dense}}$ with a mixture heuristic sampling policy for PG-MRI and Greedy Oracle, and pre-train the sparse-reward reconstructor $\mathcal{R}^{\text{sparse}}$ with a terminal heuristic sampling policy for Random and L2S.

In the `Joint Training' setting, we evaluate the L2SR against two baseline methods: (1) LOUPE \cite{bahadir2019learning}: jointly training a learnable non-sequential sampler and a reconstructor; (2) $\tau$-Step Seq. \cite{yin2021end}: an advanced end-to-end sequential sampling and reconstruction method, where $\tau$ indicates the number of sampling steps, typically set to either 2 or 4 (and we report the best results obtained with either 2 or 4).

Additional details regarding all methods can be found in \ref{appen-sec:implementation}.

\begin{figure}
    \centering
    \begin{minipage}{0.35\columnwidth}
    \subfigure[Knee dataset, $\times 4$ Base. ]{
    \centering
        \includegraphics[width=\columnwidth]{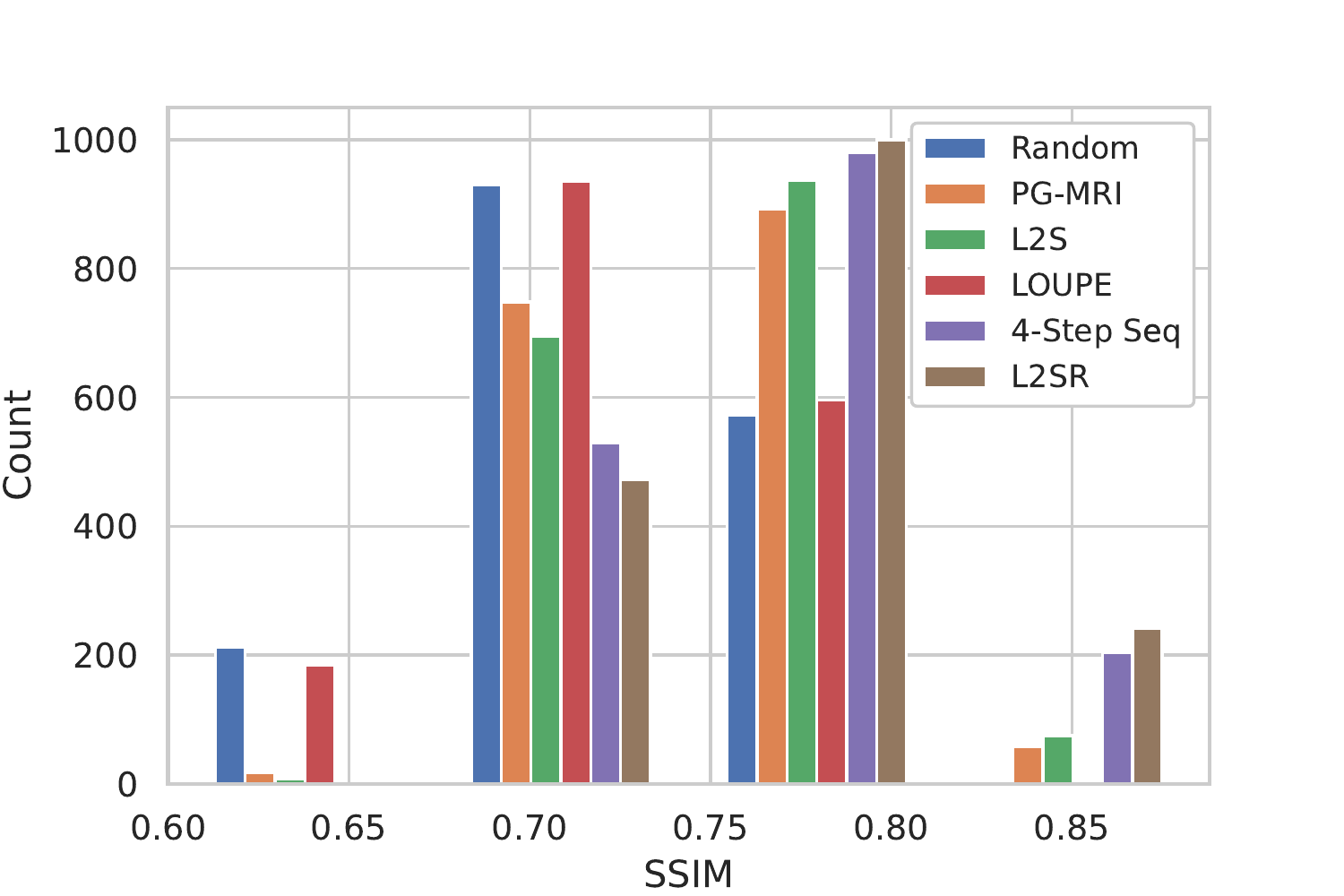}
    }
    \end{minipage}
    \begin{minipage}{0.35\columnwidth}
    \subfigure[Knee dataset, $\times 4$ Long. ]{
    \centering
        \includegraphics[width=\columnwidth]{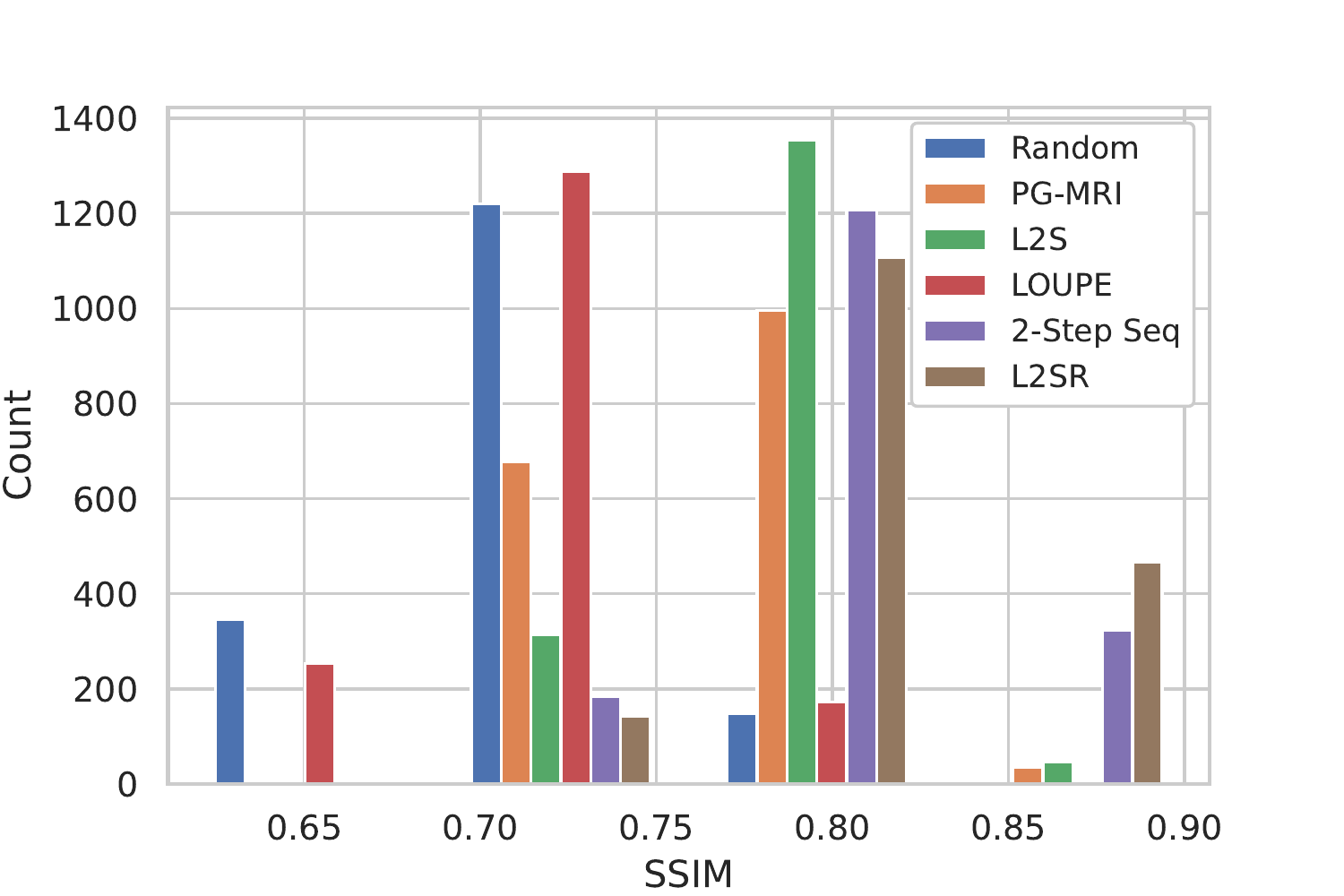}
    }
    \end{minipage}
    \begin{minipage}{0.35\columnwidth}
    \subfigure[Knee dataset, $\times 8$ Base. ]{
    \centering
        \includegraphics[width=\columnwidth]{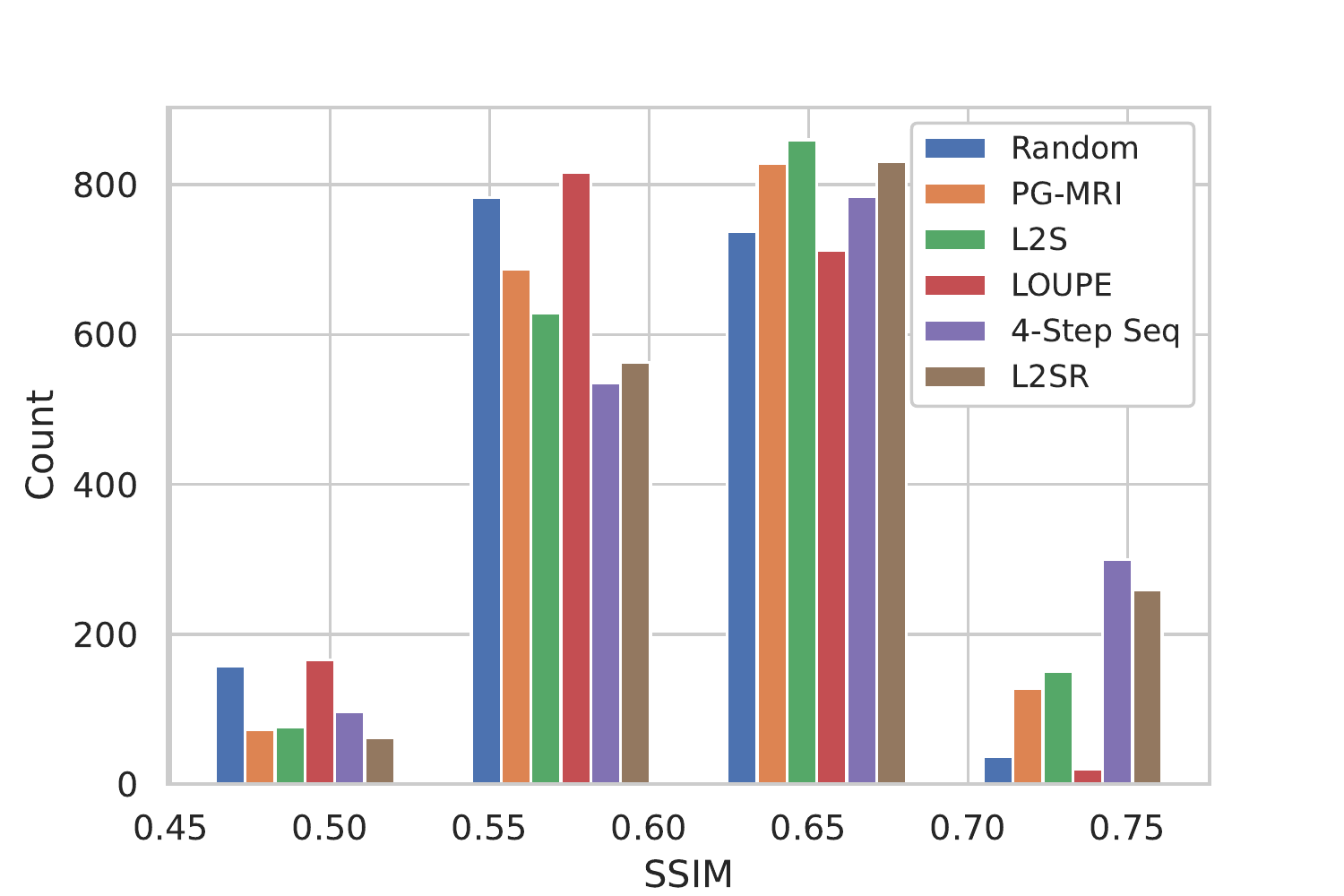}
    }
    \end{minipage}
    \begin{minipage}{0.35\columnwidth}
    \subfigure[Knee dataset, $\times 8$ Long. ]{
    \centering
        \includegraphics[width=\columnwidth]{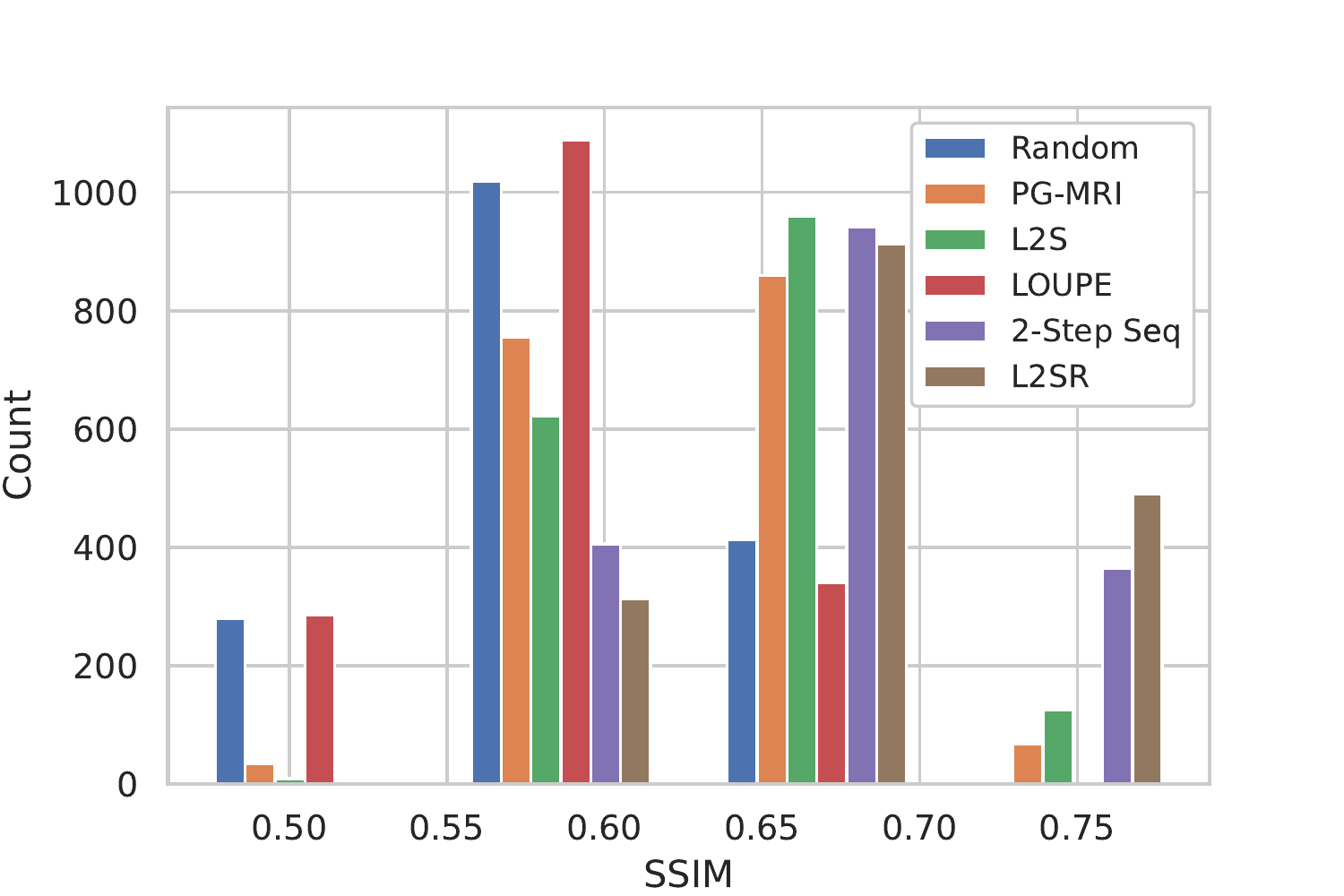}
    }
    \end{minipage}
    \begin{minipage}{0.35\columnwidth}
    \subfigure[Brain dataset, $\times 8$ Base. ]{
    \centering
        \includegraphics[width=\columnwidth]{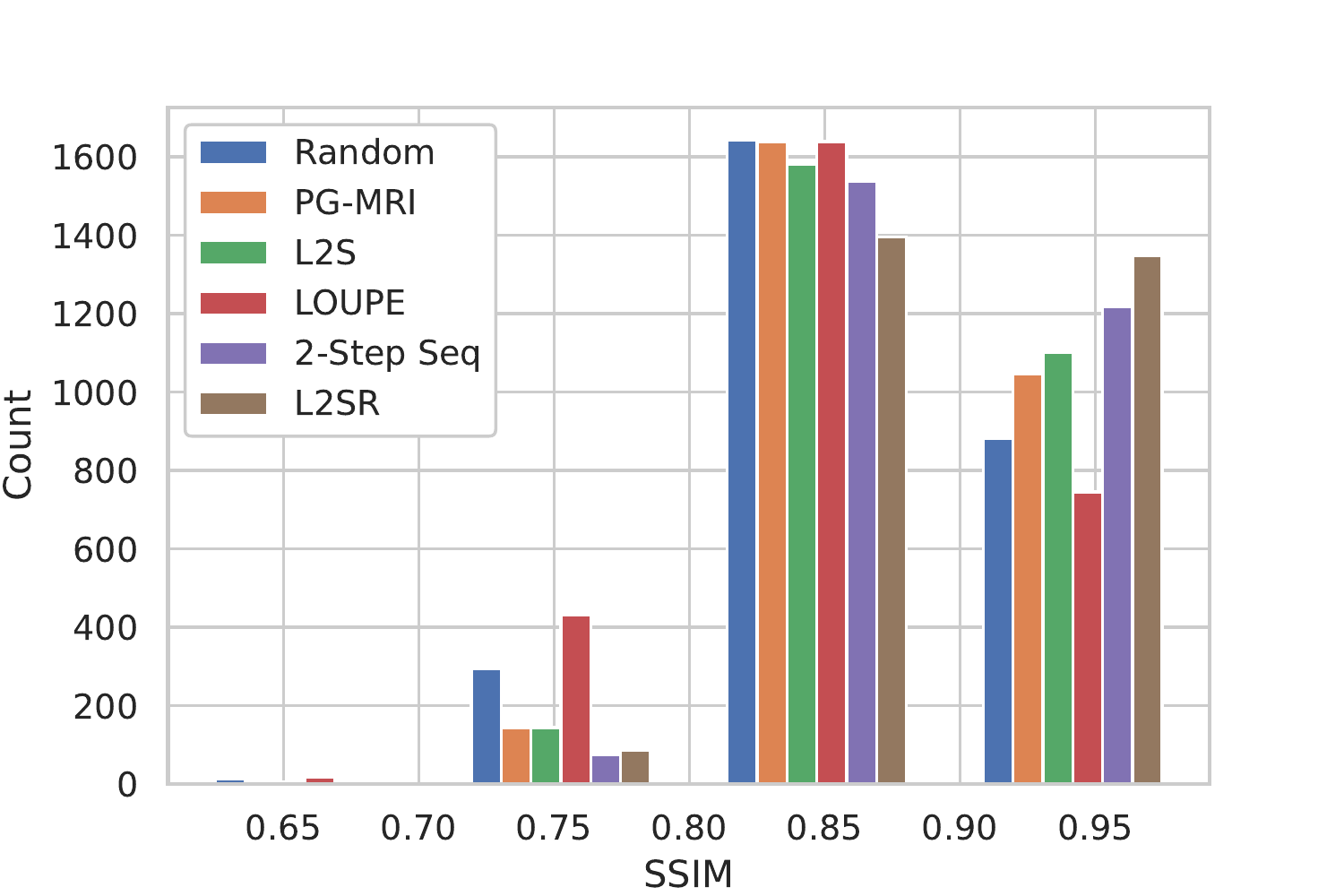}
    }
    \end{minipage}
    \begin{minipage}{0.35\columnwidth}
    \subfigure[Brain dataset, $\times 8$ Long. ]{
    \centering
        \includegraphics[width=\columnwidth]{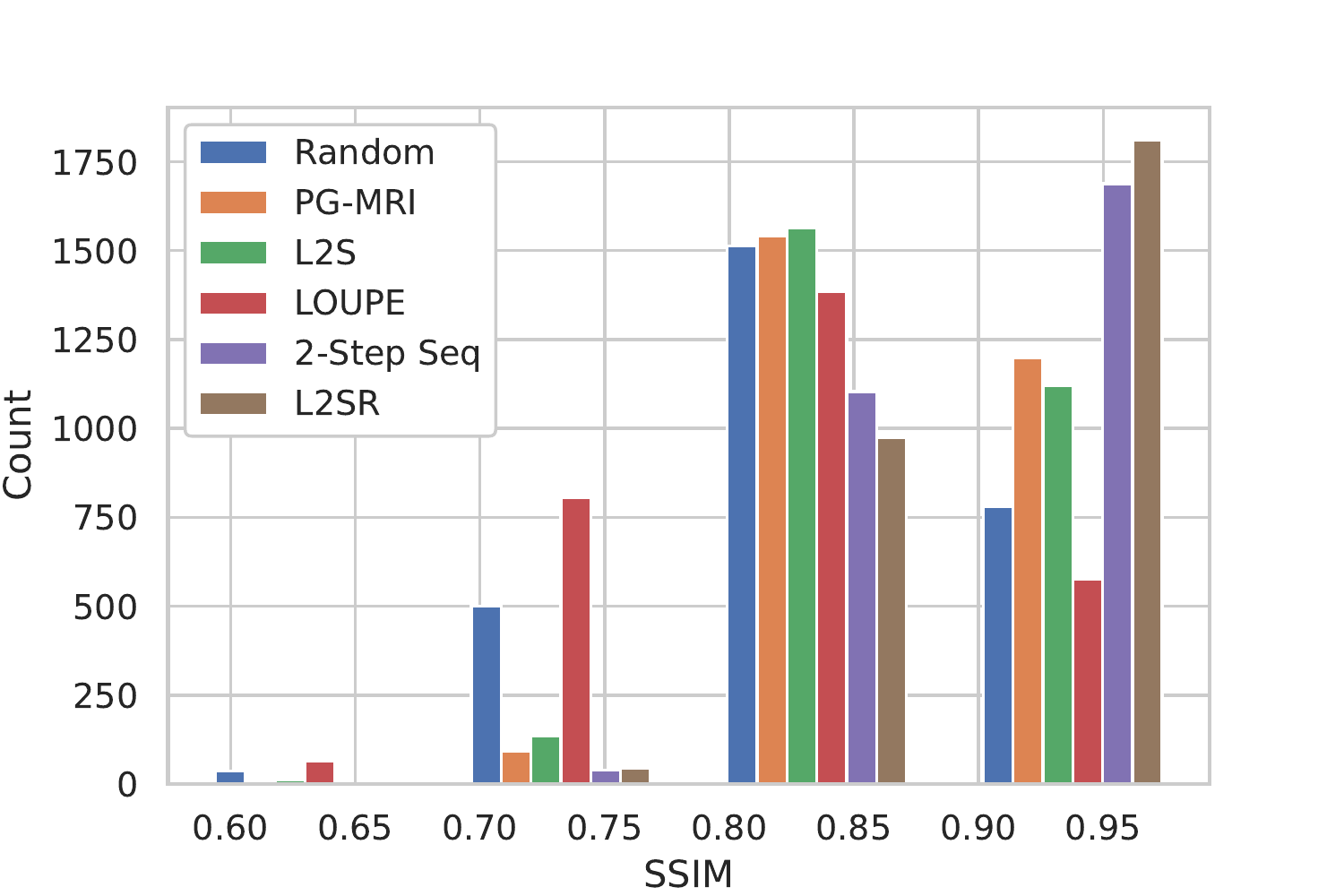}
    }
    \end{minipage}
    \begin{minipage}{0.35\columnwidth}
    \subfigure[Brain dataset, $\times 16$ Base. ]{
    \centering
        \includegraphics[width=\columnwidth]{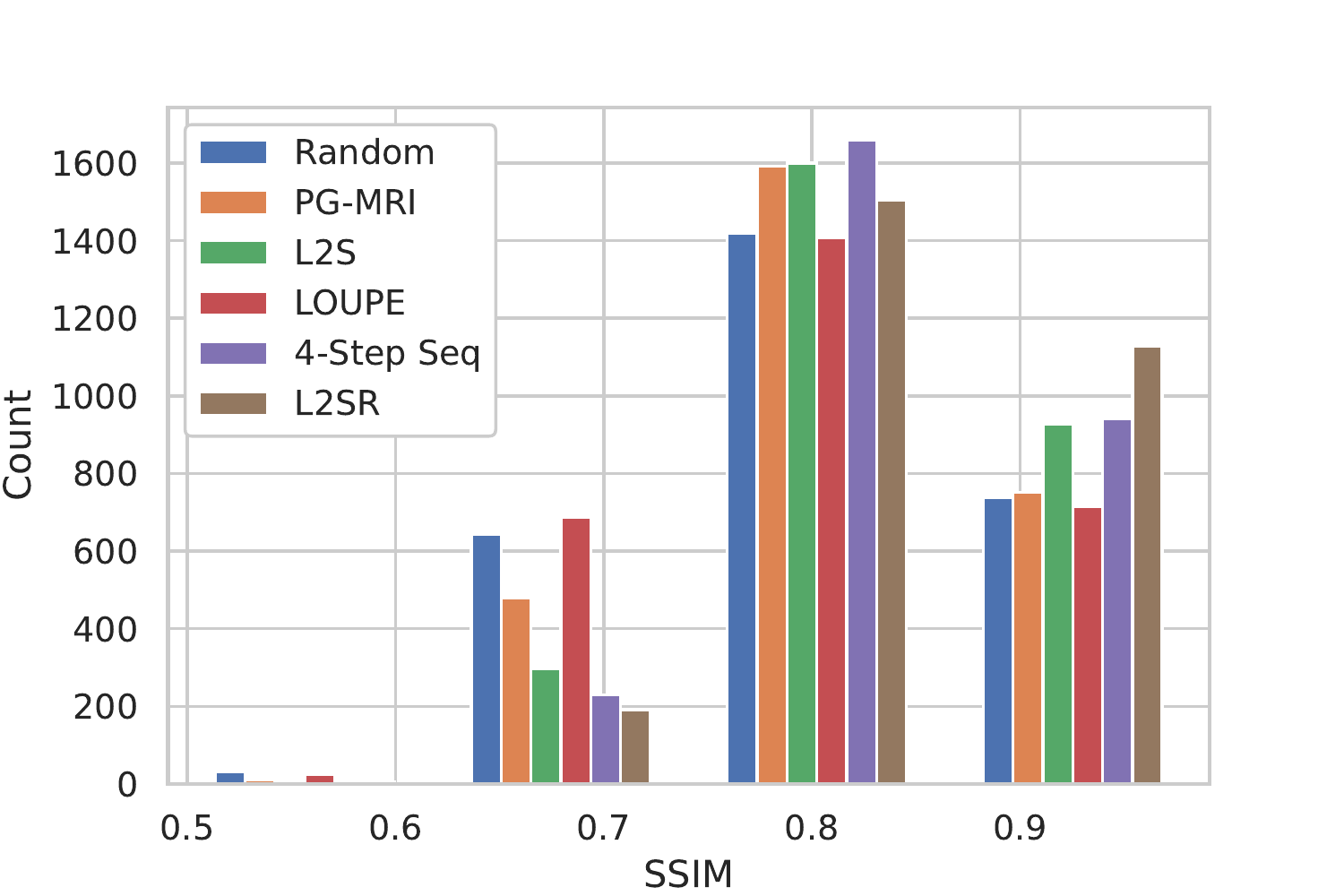}
    }
    \end{minipage}
    \begin{minipage}{0.35\columnwidth}
    \subfigure[Brain dataset, $\times 16$ Long. ]{
    \centering
        \includegraphics[width=\columnwidth]{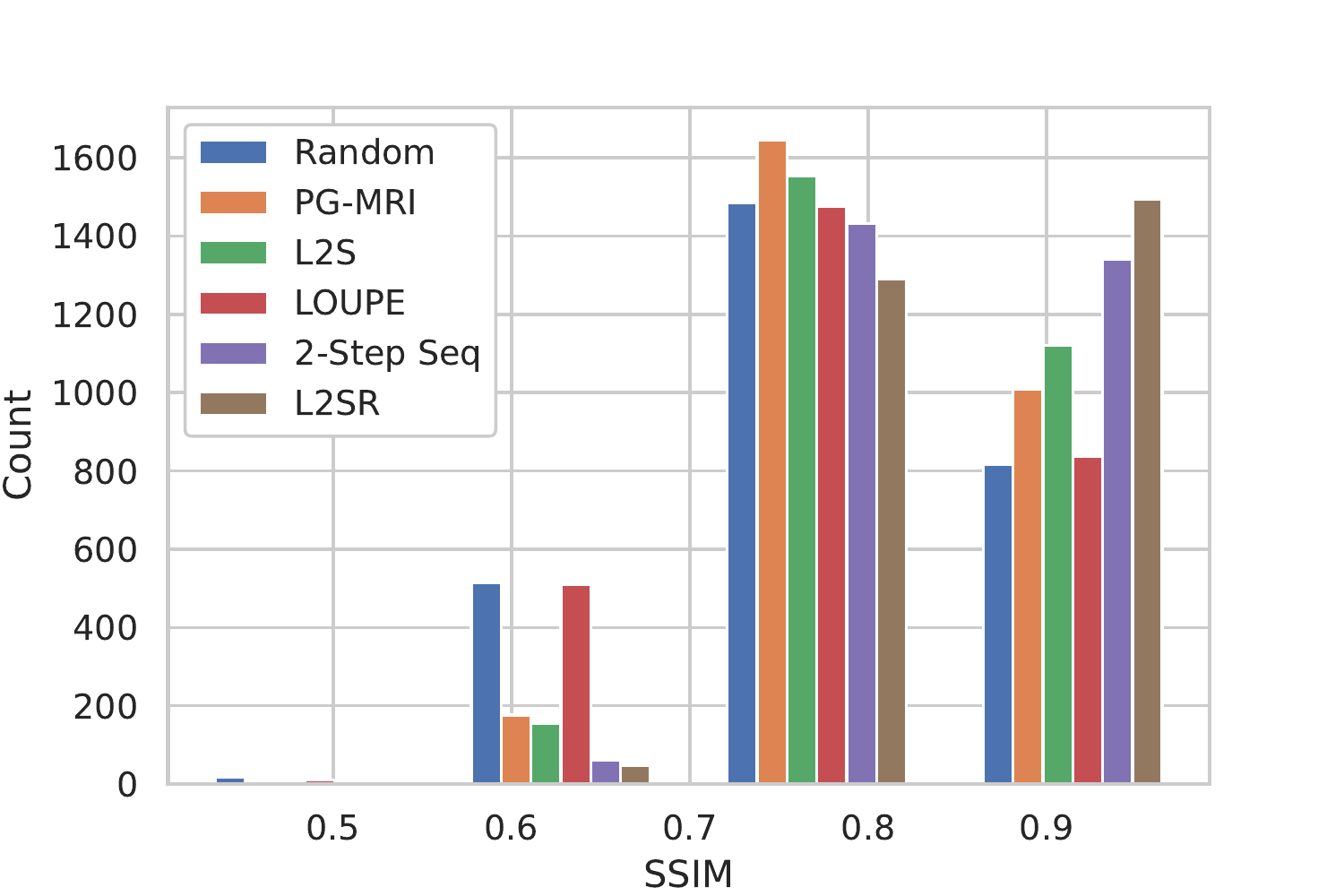}
    }
    \end{minipage}
    \caption{Histograms of SSIM values as shown in \tref{table:sparse-ssim}. Each figure contains histograms of six methods: Random, PG-MRI, L2S, LOUPE, $\tau$-Step Seq, L2SR. }
    \label{fig:histograms}
\end{figure}

\subsection{Main Results}
\label{subsec:exp-results}
\subsubsection{Reconstruction Performance}
The reconstruction results of all compared accelerated MRI methods are quantified through SSIM and PSNR values, as detailed in \tref{table:sparse-ssim}, with more granular histograms presented in \fref{fig:histograms}. The outcomes can be understood in two key aspects. First, under the 'Fixed Reconstructor' scenario, L2S consistently outperforms other methods in most cases, signifying the importance of addressing the distributional mismatch issue in enhancing reconstruction performance. Secondly, L2SR emerges as the superior method in most cases across both settings. This underlines the effectiveness of solving the overall joint optimization problem compared to merely focusing on weakened forms or suboptimization issues.

\subsubsection{Inference Complexity}
The computational complexities and corresponding inference times of trained models are presented in \tref{table:cost}. Both L2S and L2SR exhibit lower complexity and faster inference times compared to previous methods such as PG-MRI and Greedy Oracle. This empirically substantiates our claim that the proposed sparse-reward POMDP enhances efficiency by eliminating the need for intermediate reconstructions.

While the utilization of sparse rewards may raise concerns regarding increased training difficulty, the gains in inference efficiency make it a valuable tradeoff in practical applications, especially in the field of medical imaging. The focus of our experiments has been on assessing inference times, reflecting the emphasis on test efficiency in real-world deployment scenarios.

\begin{table*}[]
\footnotesize
\centering
\caption{Comparison of inference costs of different methods. We calculate the computational complexity of sampling and reconstruction required to an MRI scanning.  $C_\pi$ and $C_\mathcal{R}$ denote the computational complexity of taking one sample and one reconstruction individually. We also test the average inference time over knee dataset.}
\label{table:cost}
\begin{tabular}{llllll}
\br
\multirow{3}{*}{}               & \multirow{3}{*}{{\begin{tabular}[c]{@{}l@{}}\textbf{Computational}\\ \textbf{Complexity}\end{tabular}}} & \multicolumn{4}{c}{\textbf{Average Inference Time (s)}}                                                                                                                                        \\
                                &                                           & \multicolumn{2}{c}{$\times 4$-accelration}  & \multicolumn{2}{c}{$\times 8$-accelration}  \\
                                &                                           & Base                 & Long                 & Base                 & Long   \\ \hline
\textbf{Random} \& \textbf{LOUPE}                 & $C_{\mathcal{R}}$ & \multicolumn{4}{c}{0.0040} \\
\textbf{PG-MRI}                          & $TC_{\pi}+(T+1)C_{\mathcal{R}}$       & 0.0843 & 0.1423 & 0.0444 & 0.0727 \\
\textbf{Greedy Oracle}                   & $T(N-\frac{T-1}{2})C_{\mathcal{R}}$   & 1.2134 & 2.0440 & 0.7071 & 1.1824 \\
\textbf{L2S} \& \textbf{L2SR} (Ours)              & $TC_{\pi}+C_{\mathcal{R}}$            & 0.0201 & 0.0325 & 0.0126 & 0.0190 \\
\textbf{$\tau$-Step Seq} & $\tau C_{\pi}+(\tau+1)C_{\mathcal{R}}$           & \multicolumn{4}{c}{0.0225$(\tau=2)$, 0.0321$(\tau=4)$} \\
\br
\end{tabular}
\end{table*}

\subsection{Ablations and Further Discussions}
\subsubsection{The Best Initial Acceleration Factor}
In medical imaging, the primary objective is often to attain the finest possible reconstruction for a pre-established acceleration factor. With this focus, the initial acceleration factor is treated as a tunable hyperparameter, with its optimal value determined through empirical analysis. \Tref{table:initial-acc-factor} presents the SSIM values obtained with different initial acceleration factors. It shows that the proposed L2SR achieves the best reconstruction performance when the initial acceleration factor is established at $N/2$ for the knee dataset and $N/4$ for the brain dataset. It's worth noting that a larger initial acceleration factor grants the sampler additional degrees of freedom, leading to a superior optimal value of \eref{equ:sparse-optim-constrain}. However, it also poses a challenge by making the training more complex due to the reduced initial information available.

\begin{table}[]
\footnotesize
\centering
\caption{Empirically searching the best acceleration factors in terms of SSIM values for L2SR. We consider different settings: $\times 4$ acceleration factor for knee dataset, $\times 8$ acceleration factor for knee dataset, $\times 8$ acceleration factor for brain dataset and $\times 16$ acceleration factor for brain dataset. The best results for each settings are shown in bold numbers. }
\label{table:initial-acc-factor}
\begin{tabular}{lllll}
\br
\multirow{3}{*}{\begin{tabular}[l]{@{}l@{}}\textbf{initial}\\ \textbf{acceleration}\\ \textbf{factor}\end{tabular}} & \multicolumn{2}{c}{\multirow{2}{*}{knee}} & \multicolumn{2}{c}{\multirow{2}{*}{brain}} \\
& \multirow{2}{*}{$\times 4$-acceleration} & \multirow{2}{*}{$\times 8$-acceleration} & \multirow{2}{*}{$\times 8$-acceleration} & \multirow{2}{*}{$\times 16$-acceleration} \\
&  &  &  &  \\ \hline
$N/1$ & $0.8055 \pm 0.0327$     & $0.6418 \pm 0.0456$           & $0.8862 \pm 0.0469$           & $0.8385 \pm 0.069$            \\
$N/2$ & $\bm{0.8171 \pm 0.0316}$     & $\bm{0.6712 \pm 0.0511}$           & $0.8911 \pm 0.0452$           & $0.8456 \pm 0.0652$      \\
$N/4$ & $0.8097 \pm 0.0333$     & $0.658 \pm 0.0529$           & $\bm{0.8969 \pm 0.0417}$     & $\bm{0.8468 \pm 0.0655}$            \\
$N/8$ & $0.7999 \pm 0.036$     & $0.6332 \pm 0.053$           & $0.893 \pm 0.043$           & $0.8404 \pm 0.066$      \\
$N/16$ & $0.7681 \pm 0.0416$     & ---          & $0.8899 \pm 0.043$     & ---           \\
\br
\end{tabular}
\end{table}

\begin{table}
    \caption{Influence of discount factor to the proposed sparse-reward POMDP. We show SSIM values of L2S with respect to discount factors under $\times 4$-acceleration on the knee test dataset. }
    \label{table:exp-gamma}
    \begin{indented}
    \lineup
    \item[]\begin{tabular}{lll}
    \br
     & \textbf{Base-horizon} & \textbf{Long-horizon} \\ \hline
        $\gamma = 0.5$ & $0.7129 \pm 0.0372$ & $0.7065 \pm 0.0298$ \\
        $\gamma = 0.9$ & $0.7528 \pm 0.0363$ & $0.7826 \pm 0.0264$ \\ 
        $\gamma = 1.0$ & $\bm{0.7543 \pm 0.0372}$ & $\bm{0.7838 \pm 0.0286}$ \\ 
    \br
    \end{tabular}
    \end{indented}
\end{table}

\subsubsection{Discount Factor}
In \tref{table:exp-gamma}, we present the impact of the discount factor on L2S. Our proposed L2S method achieves the best reconstruction performance when $\gamma=1$. In contrast, previous dynamic sampling methods with dense-reward POMDP, as described in \cite{pineda2020active} and \cite{NEURIPS2020_daed2103}, have shown their optimal performance at discount factors of 0.5 and 0.9, respectively. The contrastive results make sense as the proposed sparse-reward POMDP is explicitly designed to get the reward at the end of the trajectory, thereby learning a long-sighted policy.

\subsubsection{Number of Alternation}
The L2SR uses a fixed number of alternations as its stopping condition. The reconstruction performance over rounds of alternating training is depicted in \fref{fig:num-alter}. Typically, the algorithm converges within $L=5$.

\subsubsection{Adaptability Across Sampling Schemes}
A significant benefit of our framework lies in its extensibility across different sampling schemes. While the core of our analysis and experiments employs 1D vertical Cartesian sampling, the proposed framework could be applied to a wider range of applications in more realistic pulse sequence design. To demonstrate its adaptability, we have extended our method to radial sampling \cite{lustig2007sparse}, where the selection changes from choosing a column to picking an angle in k-space. The core training process remains unchanged. The result in \fref{fig:exten-sample} confirms our method's superiority over baselines within radial sampling schemes, which highlights the extensibility of our framework across various acquisition patterns.

\begin{figure}[t]
    \centering
    \subfigure[$\times 4$-acceleration.]{
        \includegraphics[width=0.40\columnwidth]{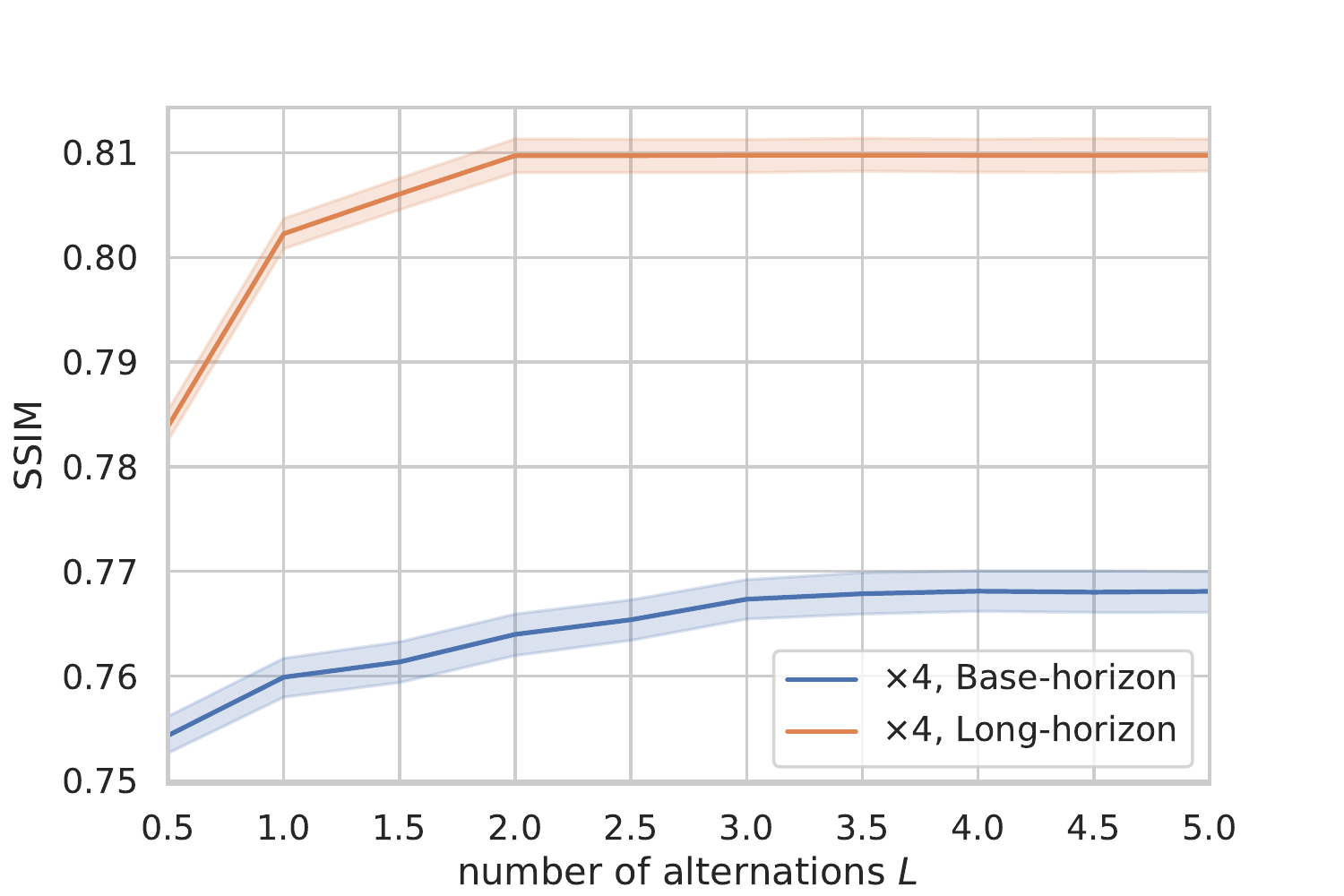}
    }
    \subfigure[$\times 8$-acceleration.]{
        \includegraphics[width=0.40\columnwidth]{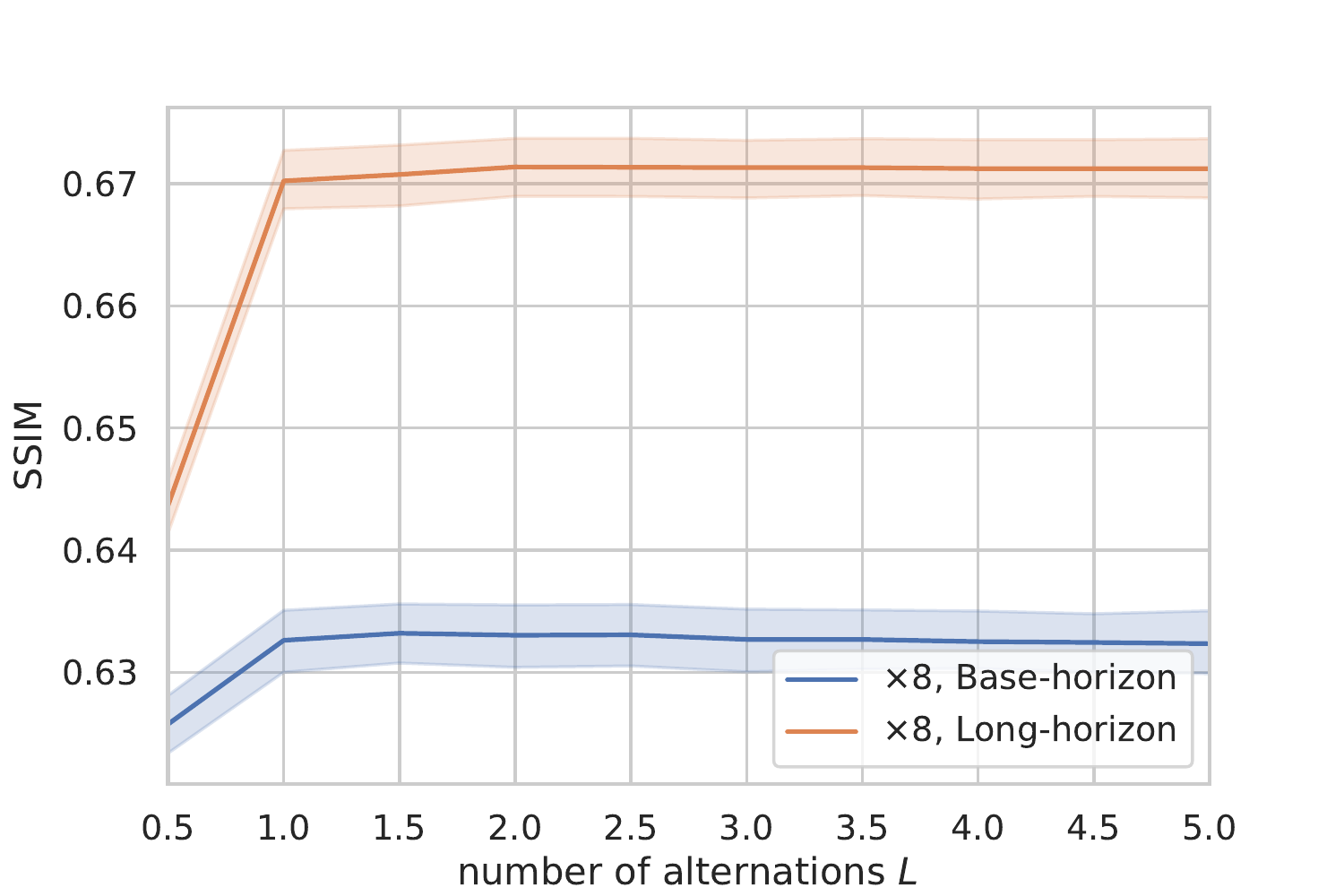}
    }
    \caption{The choice of the number of alternation. We show SSIM values of L2SR with respect to different rounds of alternating training on the knee test dataset. Specifically,
    "$L=l.5$" means the $(l+1)$th round of training the sampler with the learned reconstructor, i.e. solving \eref{equ:alter-suboptim2}. }
    \label{fig:num-alter}
\end{figure}

\begin{figure}
    \centering
    \subfigure[$\times 4$-acceleration.]{
        \includegraphics[width=0.40\columnwidth]{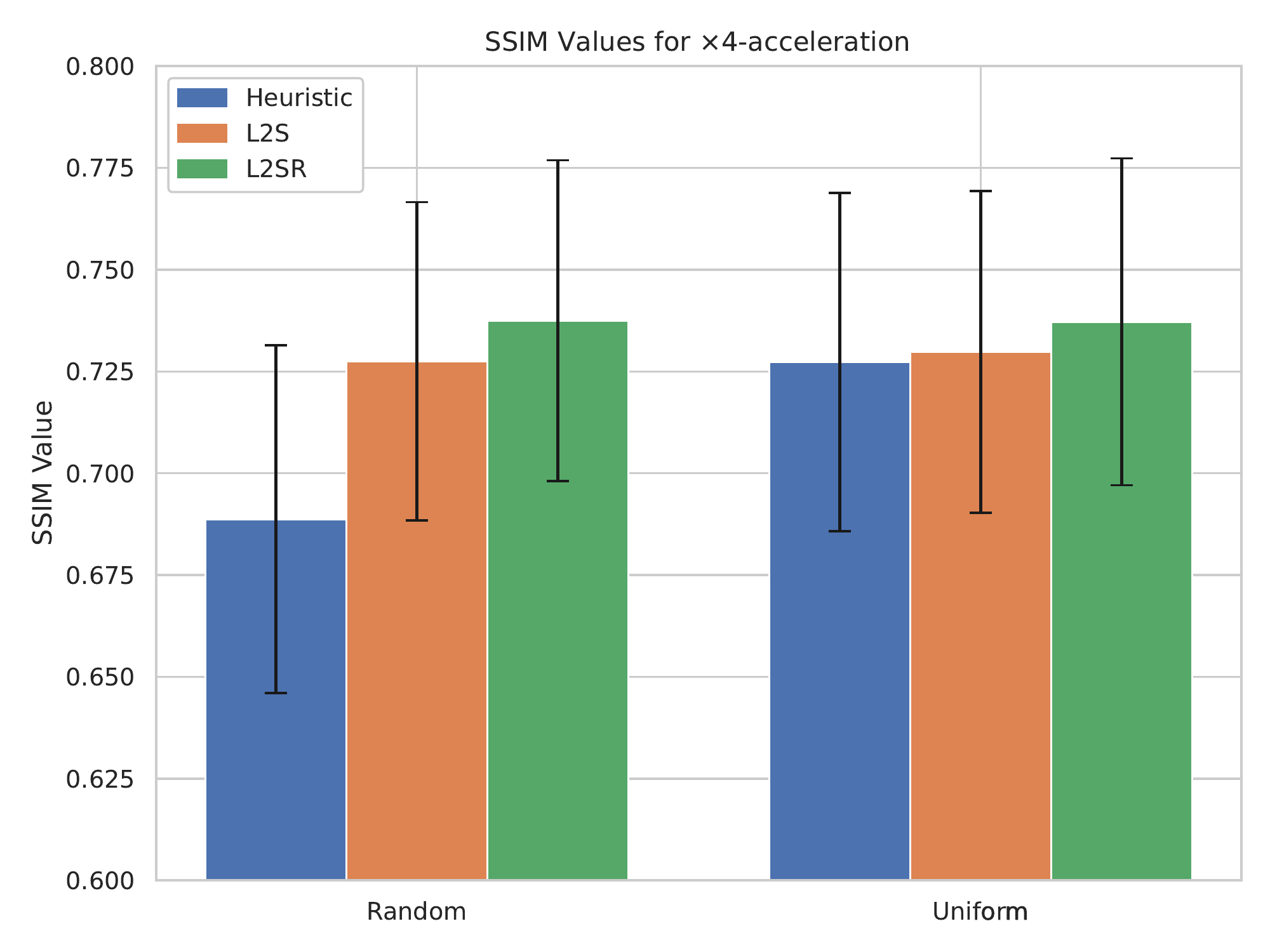}
    }
    \subtable[$\times 8$-acceleration.]{
        \includegraphics[width=0.40\columnwidth]{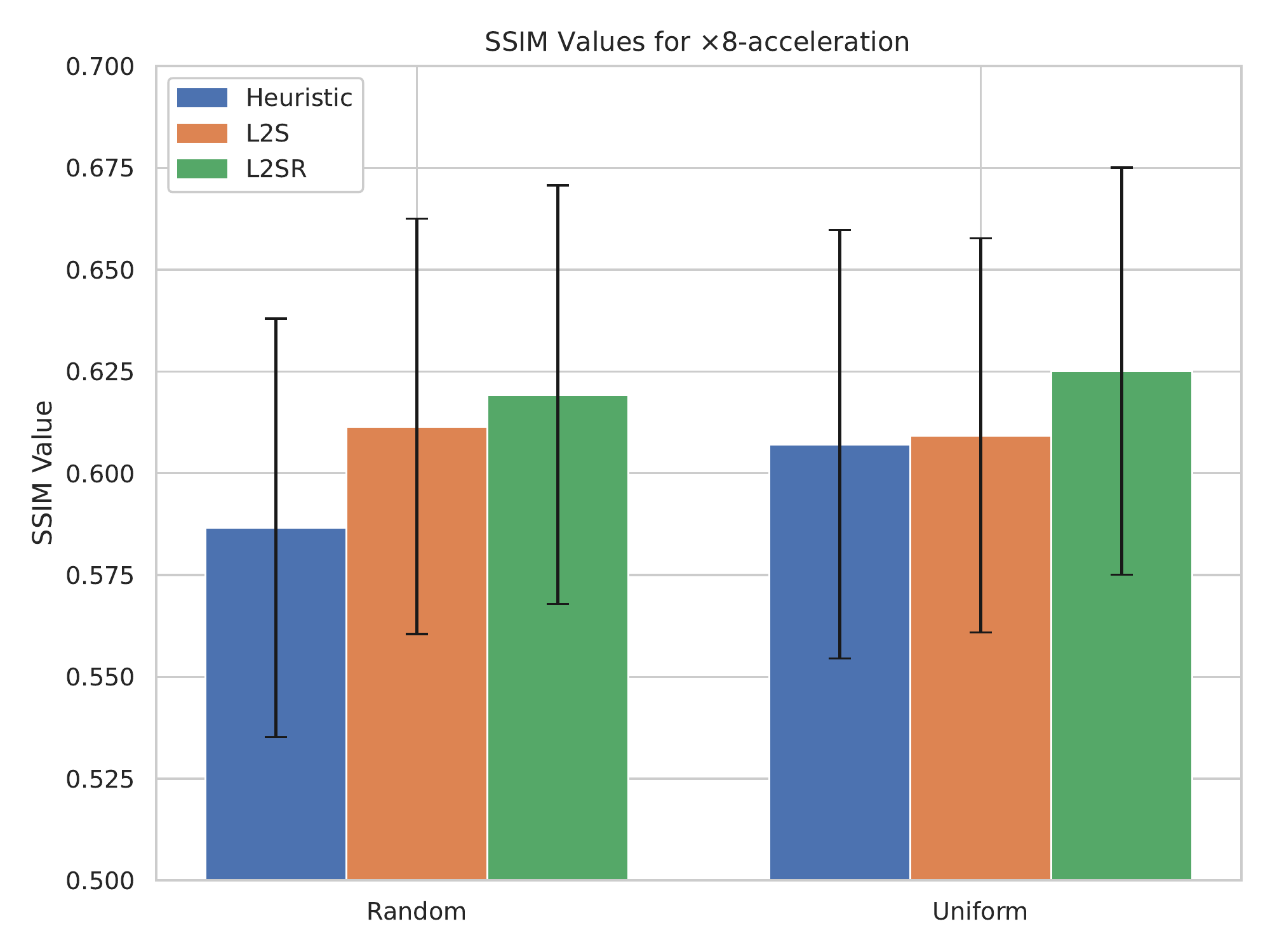}
    }
    \caption{Comparison of SSIM values using the radial sampling pattern on the knee test dataset. The bars show the mean SSIM values, while the error bars indicate variances. We evaluate our L2S and L2SR methods against a pre-trained reconstructor baseline, comparing under two distinct initial heuristic sampling policies: Random and Uniform.}
    \label{fig:exten-sample}
\end{figure}


\subsection{Visualization}
In \fref{fig:visualization_knee}, we present a sampling and reconstruction example of the knee dataset under  $\times 4$-acceleration, while in \fref{fig:visualization_brain}, we demonstrate the same for the brain dataset under $\times 8$-acceleration. A comparison of these examples reveals that the proposed L2SR method produces images of superior visual quality. Notably, the images generated through L2SR exhibit fewer artifacts and bear a closer visual resemblance to the ground truth compared to those created by competing methods. 

\begin{figure}
    \centering
    \subfigure[Random]{
        \begin{minipage}[b]{0.125\linewidth}
        \includegraphics[width=\linewidth]{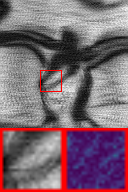}\vspace{4pt}
        \includegraphics[width=\linewidth]{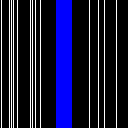}\vspace{4pt}
        \includegraphics[width=\linewidth]{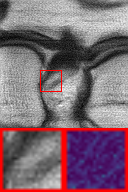}\vspace{4pt}
        \includegraphics[width=\linewidth]{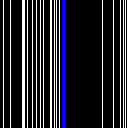}
        \end{minipage}
    }\hspace{-4pt}
    \subfigure[PG-MRI]{
        \begin{minipage}[b]{0.125\linewidth}
        \includegraphics[width=\linewidth]{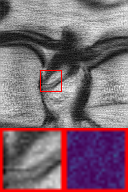}\vspace{4pt}
        \includegraphics[width=\linewidth]{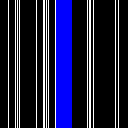}\vspace{4pt}
        \includegraphics[width=\linewidth]{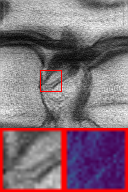}\vspace{4pt}
        \includegraphics[width=\linewidth]{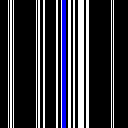}
        \end{minipage}
    }\hspace{-4pt}
    \subfigure[L2S]{
        \begin{minipage}[b]{0.125\linewidth}
        \includegraphics[width=\linewidth]{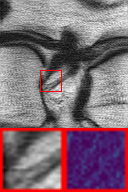}\vspace{4pt}
        \includegraphics[width=\linewidth]{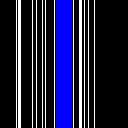}\vspace{4pt}
        \includegraphics[width=\linewidth]{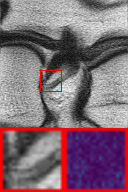}\vspace{4pt}
        \includegraphics[width=\linewidth]{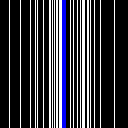}
        \end{minipage}
    }\hspace{-4pt}
    \subfigure[LOUPE]{
        \begin{minipage}[b]{0.125\linewidth}
        \includegraphics[width=\linewidth]{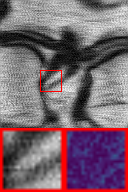}\vspace{4pt}
        \includegraphics[width=\linewidth]{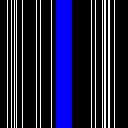}\vspace{4pt}
        \includegraphics[width=\linewidth]{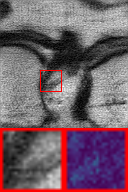}\vspace{4pt}
        \includegraphics[width=\linewidth]{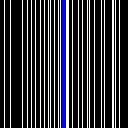}
        \end{minipage}
    }\hspace{-4pt}
    \subfigure[$\tau$-Step Seq]{
        \begin{minipage}[b]{0.125\linewidth}
        \includegraphics[width=\linewidth]{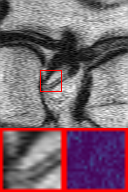}\vspace{4pt}
        \includegraphics[width=\linewidth]{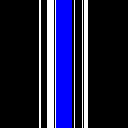}\vspace{4pt}
        \includegraphics[width=\linewidth]{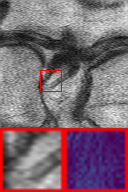}\vspace{4pt}
        \includegraphics[width=\linewidth]{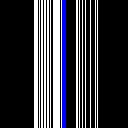}
        \end{minipage}
    }\hspace{-4pt}
    \subfigure[L2SR]{
        \begin{minipage}[b]{0.125\linewidth}
        \includegraphics[width=\linewidth]{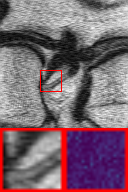}\vspace{4pt}
        \includegraphics[width=\linewidth]{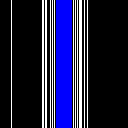}\vspace{4pt}
        \includegraphics[width=\linewidth]{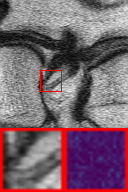}\vspace{4pt}
        \includegraphics[width=\linewidth]{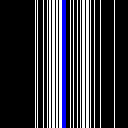}
        \end{minipage}
    }\hspace{-4pt}
    \subfigure[GT]{
        \begin{minipage}[b]{0.125\linewidth}
        \includegraphics[width=\linewidth]{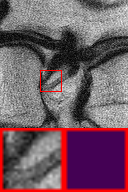}\vspace{4pt}
        \includegraphics[width=\linewidth]{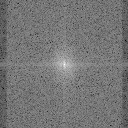}\vspace{4pt}
        \includegraphics[width=\linewidth]{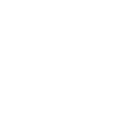}\vspace{-1pt}
        \includegraphics[width=0.5\linewidth]{Experiments/visualization/figure7g4.png}\vspace{4pt}
        \includegraphics[width=\linewidth]{Experiments/visualization/figure7g4.png} 
        
        \end{minipage}
    }
    
    \caption{Visualisation of an example of the knee dataset for our methods and other competing methods under $\times 4$-acceleration. From left to right, the columns display various accelerated MRI methods: Random, PG-MRI, L2S, LOUPE, $\tau$-Step Seq, and L2SR, with the final column showing the ground truth (GT) and fully sampled k-space. From top to bottom, The rows represent: Base-horizon reconstruction images, Base-horizon masks, Long-horizon reconstruction images, and Long-horizon masks. An enlarged specific region is presented at the bottom left of each reconstruction image, along with its corresponding error map at the bottom right. The blue marks indicate measurements sampled from the low-frequency part of k-space in the first state. }
    \label{fig:visualization_knee}
\end{figure}
\begin{figure}
    \centering
    \subfigure[Random]{
        \begin{minipage}[b]{0.125\linewidth}
        \includegraphics[width=\linewidth]{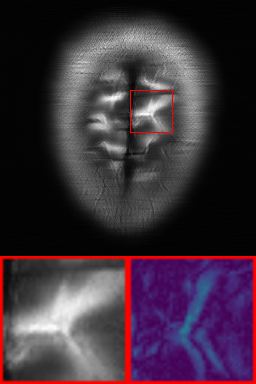}\vspace{4pt}
        \includegraphics[width=\linewidth]{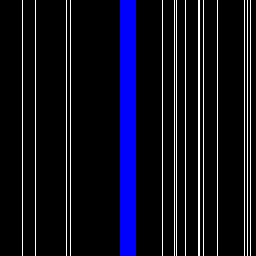}\vspace{4pt}
        \includegraphics[width=\linewidth]{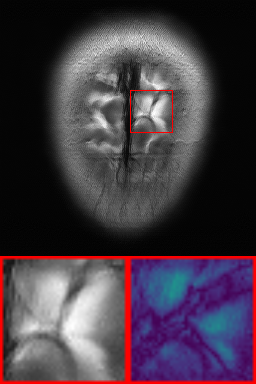}\vspace{4pt}
        \includegraphics[width=\linewidth]{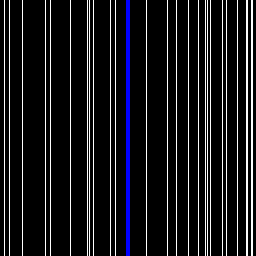}
        \end{minipage}
    }\hspace{-4pt}
    \subfigure[PG-MRI]{
        \begin{minipage}[b]{0.125\linewidth}
        \includegraphics[width=\linewidth]{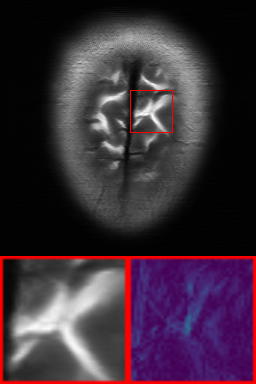}\vspace{4pt}
        \includegraphics[width=\linewidth]{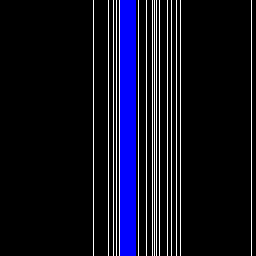}\vspace{4pt}
        \includegraphics[width=\linewidth]{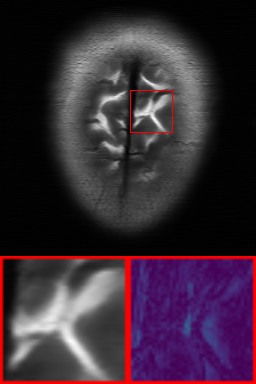}\vspace{4pt}
        \includegraphics[width=\linewidth]{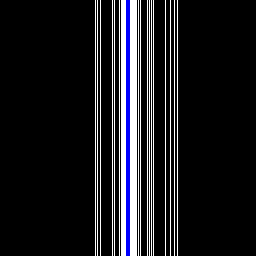}
        \end{minipage}
    }\hspace{-4pt}
    \subfigure[L2S]{
        \begin{minipage}[b]{0.125\linewidth}
        \includegraphics[width=\linewidth]{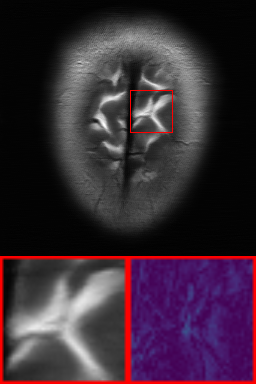}\vspace{4pt}
        \includegraphics[width=\linewidth]{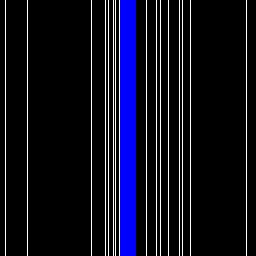}\vspace{4pt}
        \includegraphics[width=\linewidth]{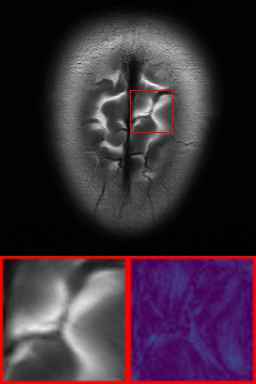}\vspace{4pt}
        \includegraphics[width=\linewidth]{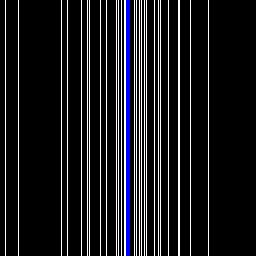}
        \end{minipage}
    }\hspace{-4pt}
    \subfigure[LOUPE]{
        \begin{minipage}[b]{0.125\linewidth}
        \includegraphics[width=\linewidth]{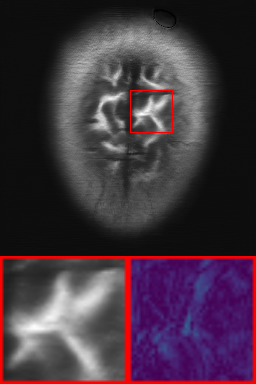}\vspace{4pt}
        \includegraphics[width=\linewidth]{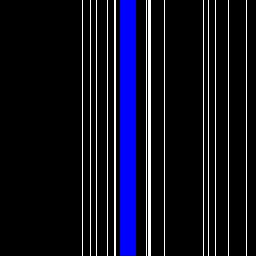}\vspace{4pt}
        \includegraphics[width=\linewidth]{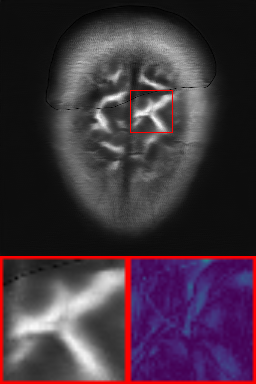}\vspace{4pt}
        \includegraphics[width=\linewidth]{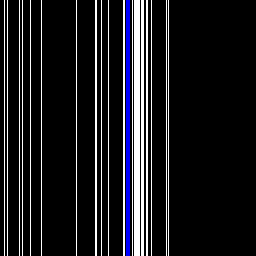}
        \end{minipage}
    }\hspace{-4pt}
    \subfigure[$\tau$-Step Seq]{
        \begin{minipage}[b]{0.125\linewidth}
        \includegraphics[width=\linewidth]{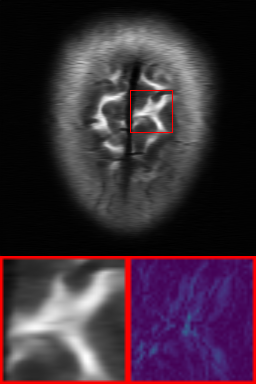}\vspace{4pt}
        \includegraphics[width=\linewidth]{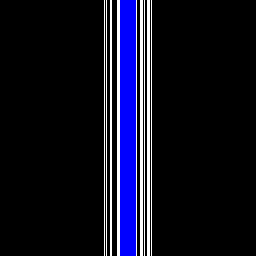}\vspace{4pt}
        \includegraphics[width=\linewidth]{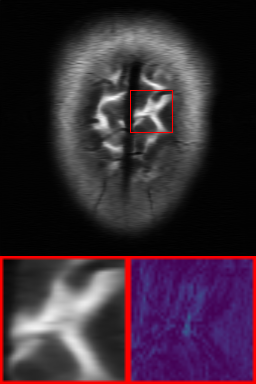}\vspace{4pt}
        \includegraphics[width=\linewidth]{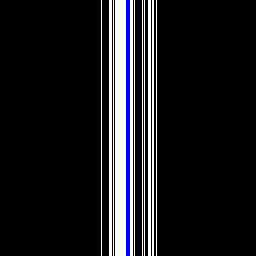}
        \end{minipage}
    }\hspace{-4pt}
    \subfigure[L2SR]{
        \begin{minipage}[b]{0.125\linewidth}
        \includegraphics[width=\linewidth]{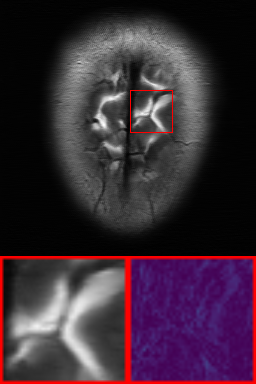}\vspace{4pt}
        \includegraphics[width=\linewidth]{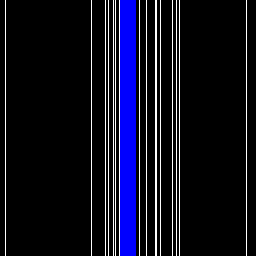}\vspace{4pt}
        \includegraphics[width=\linewidth]{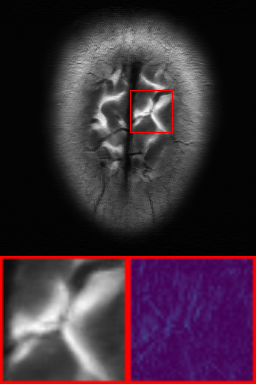}\vspace{4pt}
        \includegraphics[width=\linewidth]{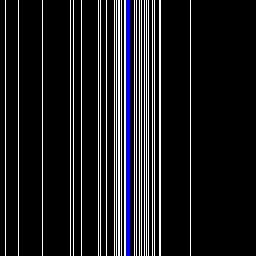}
        \end{minipage}
    }\hspace{-4pt}
    \subfigure[GT]{
        \begin{minipage}[b]{0.125\linewidth}
        \includegraphics[width=\linewidth]{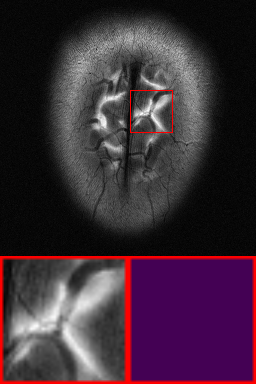}\vspace{4pt}
        \includegraphics[width=\linewidth]{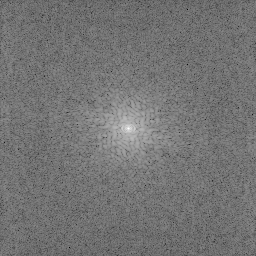}\vspace{4pt}
        \includegraphics[width=\linewidth]{Experiments/visualization/figure7g4.png}\vspace{-1pt}
        \includegraphics[width=0.5\linewidth]{Experiments/visualization/figure7g4.png}\vspace{4pt}
        \includegraphics[width=\linewidth]{Experiments/visualization/figure7g4.png} 
        
        \end{minipage}
    }
    \caption{Visualisation of an example of the brain dataset for our methods and other competing methods under $\times 8$-acceleration.}
    \label{fig:visualization_brain}
\end{figure}

\section{Conclusion and Discussion}
\label{sec:concl}
In this paper, we proposed a novel alternating training framework to jointly learn a personalized sampling policy and a corresponding reconstruction model for accelerated MRI. Specifically, we formulated the MRI sampling trajectory as a sparse-reward POMDP to learn a personalized sampler. Compared to existing dynamic sampling methods that utilize dense-reward POMDP, our proposed sparse-reward POMDP is more computationally efficient and avoids the distributional mismatch. Furthermore, the proposed framework, called L2SR, solve the joint optimization problem of learning a pair of samplers and reconstructors, thus eliminating the training mismatch. 

Our empirical results on the fastMRI dataset demonstrate that the sparse-reward POMDP improves reconstruction quality by eliminating the distributional mismatch. Moreover, L2SR achieves the best acceleration-quality trade-off and inferences much faster than existing dynamic sampling methods. Overall, our proposed method provides a promising solution for accelerating MRI with high reconstruction quality.

However, our work has two limitations. Firstly, the alternating training framework has no theoretical guarantee of convergence. Secondly, within the actual exploration mechanism, there is a lack of diversity in the policy's actions towards the sample to solve \eref{equ:alter-suboptim2}, which is a fundamental challenge associated with the sparse-reward POMDP. Through empirical observation, we have identified that as the length of the sample increases, the complexity of training a sparse-reward POMDP escalates significantly. In sparse-reward POMDP, the paucity of frequent informative feedback limits the effective exploration from the environment. However, current reinforcement learning algorithms struggle to fully mitigate these issues. In future work, we plan to address these limitations and adapt our approach to realistic MRI scanning problems.

\section*{Data availability statement}
The data that support the findings of this study are openly available  at the following URL: \url{https://fastmri.med.nyu.edu/}. 

\section*{Acknowledgment}
This work was supported by the National Natural Science Foundation of China under Grant 12090022.

\newpage
\appendix
\section{Notation Index}
\label{appen-sec:notation}
We give a list of notation index and where it is defined in \tref{table:notation}. 
\begin{table}[!h]
    \caption{Notation index.}
    \label{table:notation}
    \begin{indented}
    \lineup
    \item[]
    \begin{tabular}{l|c|l}
    \br
        \textbf{Noun} & \textbf{Notation} & \textbf{Reference} \\ \hline
        image width                       & $N$                  & \sref{subsec:notation} \\ 
        ground truth                      & $\mathbf{x}$         & \sref{subsec:notation} \\ 
        2D Discrete Fourier Transform     & $\mathcal{F}$        & \sref{subsec:notation} \\ 
        fully sampled measurement         & $\mathbf{y}$         & \sref{subsec:notation} \\ 
        binary mask matrix                & $\mathbf{M}$         & \sref{subsec:notation} \\ 
        reconstructor                     & $\mathcal{R}$        & \sref{subsec:notation} \\ 
        heuristic sampling policy         & $\pi_t^{\text{h}}$              & \sref{subsec:notation} \\ 
        binary mask vector                & $\mathbf{a}$         & \sref{subsec:notation} \\
        dense-reward joint optimization problem &  & \eref{equ:dense-optim-constrain} \\
        dataset & $\mathcal{D}$ & \eref{equ:dense-optim-constrain}  \\
        sampler                           & $\pi$                & \eref{equ:dense-optim-constrain} \\ 
        N-dimensional probability simplex &  $\Delta^N$  & \eref{equ:dense-optim-constrain}  \\
        reconstruction suboptimization problem &  & \eref{equ:heuristic-suboptim} \\
        dense-reward suboptimization problem &  & \eref{equ:dense-suboptim} \\
        dense-reward POMDP                &                      & \sref{subsec:optim} \\
        observation                       & $\mathbf{y}_t$ & \sref{subsec:optim} \\ 
        discount factor                   & $\gamma$             & \sref{subsec:optim} \\
        sparse-reward POMDP &  & \sref{subsec:sparse-reward} \\
        sparse-reward joint optimization problem &  & \eref{equ:sparse-optim-constrain} \\
        continuous function space &  $C$  &  theorem \ref{theorem:non-deterministic} and \ref{theorem:improved-dynamic}  \\
        sparse-reward suboptimization problem &  & \eref{equ:sparse-suboptim} \\
        Fixed Reconstructor &  & \sref{subsec:exp-implementation} \\
        Joint Training &  & \sref{subsec:exp-implementation} \\
        acceleration factor               &  & \sref{subsec:exp-implementation} \\
        initial acceleration factor       &  & \sref{subsec:exp-implementation} \\
        Base-horizon &  & \sref{subsec:exp-implementation} \\
        Long-horizon &  & \sref{subsec:exp-implementation} \\
    \br
    \end{tabular}
    \end{indented}
\end{table}

\section{Supplements and Proofs for the Theoretic Analysis}
\label{appen-sec:theory}

\subsection{Proof of theorem \ref{theorem:non-deterministic}}
\label{appen-subsec:proof-non-deterministic}

We prove it with the idea of dynamic programming. 

\begin{proof}
Let $v_{\pi, \mathcal{R}}^{\text{dense}}(\mathbf{x}, \mathbf{M}_t)$ and $v_{\pi, \mathcal{R}}^{\text{sparse}}(\mathbf{x}, \mathbf{M}_t)$ be the value function of dense-reward POMDP and sparse-reward POMDP, defined as
$$
v_{\pi, \mathcal{R}}^{\text{dense}}(\mathbf{x}, \mathbf{M}_t) = \mathbb{E}_{\pi} \sum_{s=t+1}^T r_s, \quad \text{where} \ \mathbf{y}_t = \mathbf{M}_t \odot \mathcal{F}(\mathbf{x}),
$$$$
v_{\pi, \mathcal{R}}^{\text{sparse}}(\mathbf{x}, \mathbf{M}_t) = \mathbb{E}_{\pi} r_T, \quad \text{where} \ \mathbf{y}_t = \mathbf{M}_t \odot \mathcal{F}(\mathbf{x}).
$$

We define $v_{\pi, \mathcal{R}}^{\text{dense}}(\mathcal{D}, \mathbf{M}_t) = \mathbb{E}_{\mathbf{x}\sim\mathcal{D}} v_{\pi, \mathcal{R}}^{\text{dense}}(\mathbf{x}, \mathbf{M}_t)$ and $v_{\pi, \mathcal{R}}^{\text{sparse}}(\mathcal{D}, \mathbf{M}_t) = \mathbb{E}_{\mathbf{x}\sim\mathcal{D}} v_{\pi, \mathcal{R}}^{\text{sparse}}(\mathbf{x}, \mathbf{M}_t)$. Indeed, $v_{\pi, \mathcal{R}}^{\text{dense}}(\mathcal{D}, \mathbf{M}_0)$ is the objective function of \eref{equ:dense-optim-constrain} and $v_{\pi, \mathcal{R}}^{\text{sparse}}(\mathcal{D}, \mathbf{M}_0)$ is the objective function of \eref{equ:sparse-optim-constrain}. We want to prove:
$$
\sup_{\substack{\pi \in C(\mathbb{R}^{N\times N}, \Delta^N) \\ \mathcal{R} \in C(\mathbb{C}^{N\times N}, \mathbb{R}^{N\times N})}} v_{\pi, \mathcal{R}}^{\text{dense}}(\mathcal{D}, \mathbf{M}_0) \leq \sup_{\substack{\pi \in C(\mathbb{C}^{N\times N}, \Delta^N) \\ \mathcal{R} \in C(\mathbb{C}^{N\times N}, \mathbb{R}^{N\times N})}} v_{\pi, \mathcal{R}}^{\text{sparse}}(\mathcal{D}, \mathbf{M}_0).
$$

For $\forall \epsilon > 0$, there exists a pair of $\pi_\epsilon \in C(\mathbb{R}^{N\times N}, \Delta^N)$ and $\mathcal{R}_\epsilon \in C(\mathbb{C}^{N\times N}, \mathbb{R}^{N\times N})$ satisfying
$$
\sup_{\substack{\pi \in C(\mathbb{R}^{N\times N}, \Delta^N) \\ \mathcal{R} \in C(\mathbb{C}^{N\times N}, \mathbb{R}^{N\times N})}} v_{\pi, \mathcal{R}}^{\text{dense}}(\mathcal{D}, \mathbf{M}_0) - v_{\pi_\epsilon, \mathcal{R}_\epsilon}^{\text{dense}}(\mathcal{D}, \mathbf{M}_0) < \epsilon.
$$

We have the recursion formula for both $v_{\pi, \mathcal{R}}^{\text{dense}}(\mathbf{x}, \mathbf{M}_0)$ and $v_{\pi, \mathcal{R}}^{\text{sparse}}(\mathbf{x}, \mathbf{M}_0)$:
$$
v_{\pi, \mathcal{R}}^{\text{dense}}(\mathbf{x}, M_t) = \sum_{a_t=1}^{N} \pi(a_t \mid \mathbf{x}_t) \cdot v_{\pi, \mathcal{R}}^{\text{dense}}(\mathbf{x}, \mathbbm{1}(\mathbf{M}_t+\mathbf{M}^{a_t})), 
$$$$
v_{\pi, \mathcal{R}}^{\text{sparse}}(\mathbf{x}, M_t) = \sum_{a_t=1}^{N} \pi(a_t \mid \mathbf{y}_t) \cdot v_{\pi, \mathcal{R}}^{\text{sparse}}(\mathbf{x}, \mathbbm{1}(\mathbf{M}_t+\mathbf{M}^{a_t})). 
$$

Let $\pi_\epsilon^* \in C(\mathbb{C}^{N\times N}, \Delta^N)$ defined as
$$
\pi_\epsilon^*(\cdot \mid \mathbf{y}) = \pi_\epsilon(\cdot \mid \mathcal{R}_\epsilon(\mathbf{y})), \quad \forall \mathbf{y} \in \mathbb{C}^{N\times N}.
$$

Since the similarity metric function $\mathrm{S}$ is bounded by $[0, 1]$, $v_{\pi, \mathcal{R}} \in [0, 1]$. Recursively, we have 
\begin{align*}
    v^{\text{dense}}_{\pi_\epsilon, \mathcal{R}_\epsilon}(\mathcal{D}, \mathbf{M}_T) & = \mathbb{E}_{\mathbf{x}\sim\mathcal{D}} \mathrm{S}(\mathcal{R}_\epsilon(\mathbf{M}_T \odot \mathcal{F}(\mathbf{x})), \mathbf{x}) \\
    & = v^{\text{sparse}}_{\pi_\epsilon^*, \mathcal{R}_\epsilon}(\mathcal{D}, \mathbf{M}_T),
\end{align*}
\begin{align*}
    & v^{\text{dense}}_{\pi_\epsilon, \mathcal{R}\epsilon}(\mathcal{D}, \mathbf{M}_{T-1}) \\ 
    = \ & \mathbb{E}_{\mathbf{x}\sim\mathcal{D}} \sum_{a_{T-1}=1}^{n}  \pi_\epsilon(a_t \mid \mathbf{x}_{T-1}) \cdot v^{\text{dense}}_{\pi_\epsilon, \mathcal{R}_\epsilon}(\mathbf{x}, \mathbbm{1}(\mathbf{M}_{T-1}+\mathbf{M}^{a_{T-1}})) \\
    = \ & \mathbb{E}_{\mathbf{x}\sim\mathcal{D}} \sum_{a_{T-1}=1}^{n}  \pi_\epsilon^*(a_t \mid \mathbf{y}_{T-1}) \cdot v^{\text{dense}}_{\pi_\epsilon, \mathcal{R}_\epsilon}(\mathbf{x}, \mathbbm{1}(\mathbf{M}_{T-1}+\mathbf{M}^{a_{T-1}})) \\
    = \ & \mathbb{E}_{x\sim\mathcal{D}} \sum_{a_{T-1}=1}^{n} \pi_\epsilon^*(a_t \mid \mathbf{y}_{T-1}) \cdot v^{\text{sparse}}_{\pi_\epsilon^*, \mathcal{R}_\epsilon}(\mathbf{x}, \mathbbm{1}(\mathbf{M}_{T-1}+\mathbf{M}^{a_{T-1}})) \\
    = \ & v^{\text{sparse}}_{\pi_\epsilon^*, \mathcal{R}_\epsilon}(\mathcal{D}, \mathbf{M}_{T-1}),
\end{align*}
$$
\cdots
$$

Finally, we have
$$
v^{\text{dense}}_{\pi_\epsilon}(\mathcal{D}, \mathbf{M}_0) = v^{\text{sparse}}_{\pi_\epsilon^*}(\mathcal{D}, \mathbf{M}_0),
$$
which means that for $\forall \epsilon$, there exists a pair of $\pi_\epsilon^* \in C(\mathbb{C}^{N\times N}, \Delta^N)$ and $\mathcal{R}_\epsilon \in C(\mathbb{C}^{N\times N}, \mathbb{R}^{N\times N})$ satisfying
$$
\sup_{\substack{\pi \in C(\mathbb{R}^{N\times N}, \Delta^N) \\ \mathcal{R} \in C(\mathbb{C}^{N\times N}, \mathbb{R}^{N\times N})}} v_{\pi, \mathcal{R}}^{\text{dense}}(\mathcal{D}, \mathbf{M}_0) < v_{\pi_\epsilon^*, \mathcal{R}_\epsilon}^{\text{sparse}}(\mathcal{D}, \mathbf{M}_0) + \epsilon.
$$

Because of the arbitrariness of $\epsilon$, the proof is completed! 
\end{proof}

\subsection{Proof of theorem \ref{theorem:improved-dynamic}}
\label{appen-subsec:dynamic}

\begin{proof}
Without loss of generality, suppose that $\mathbf{M}_0 = \mathbf{0}$. 

We denote
$$
    J(\mathbf{a}, \mathcal{R}) = \mathbb{E}_{\mathbf{x}\sim\mathcal{D}} \mathrm{S}(\mathcal{R}(\mathbf{M}^{\mathbf{a}} \odot \mathcal{F}(\mathbf{x})), \mathbf{x}).
$$

We first prove that for $\forall \Vert \mathbf{a} \Vert_1 = T$, we have
$$
    J(\mathbf{a}, \mathcal{R}^{\text{dense}}) \leq J(\mathbf{a}, \mathcal{R}^{\text{sparse}}),
$$
by contradiction. 

If it does not hold, there $\exists \ \Vert \mathbf{a}^\star \Vert_1 = T$ that 
$$
    J(\mathbf{a}^\star, \mathcal{R}^{\text{dense}}) > J(\mathbf{a}^\star, \mathcal{R}^{\text{sparse}}).
$$
Then, we prove that it contradicts to the the maximum property of $\mathcal{R}^{\text{sparse}}$. 

Notice that the input of $\mathcal{R}$ is $\widetilde{\mathbf{y}} = \mathbf{M}^{\mathbf{a}} \odot \mathbf{y}$, so their exists a well-defined function $\mathscr{A}: \mathbb{C}^{N\times N} \rightarrow \{0, 1\}^N, \widetilde{\mathbf{y}} \rightarrow \mathbf{a}$. Specifically, values of $\mathbf{a}$ at the indicators of zero columns of $\widetilde{\mathbf{y}}$ equal to 0, and others equal to 1. We define $\mathscr{Y}: \mathbb{C}^{N\times N} \rightarrow \mathbb{R}^+, \widetilde{\mathbf{y}} \rightarrow \min_{\mathscr{A}(\widetilde{\mathbf{y}})(a_t) = 1} \{ \Vert \lvert \mathbf{M}^{a_t} \odot \widetilde{\mathbf{y}} \rvert \Vert_2 \}$ as the minimal norm of the non-zero column of $\widetilde{\mathbf{y}}$. 

We construct a new reconstructor on two disjoint closed subsets of $\mathbb{C}^{N \times N}$:
\begin{equation*}
\mathcal{R}_\epsilon^{\text{new}}(\widetilde{\mathbf{y}}) = \left\{
\begin{aligned}
& \mathcal{R}^{\text{sparse}}, \quad \Vert \mathscr{A}(\widetilde{\mathbf{y}}) \Vert_1 \leq T \ \text{ and } \ \mathscr{A}(\widetilde{\mathbf{y}}) \neq \mathbf{a}^\star \\
& \mathcal{R}^{\text{dense}}, \quad \mathscr{A}(\widetilde{\mathbf{y}}) = \mathbf{a}^\star \ \text{ and } \ \mathscr{Y}(\widetilde{\mathbf{y}}) \geq \epsilon
\end{aligned},
\right.
\end{equation*}
where $\epsilon > 0$ is pending. (We give a supplementary description for the convenience of understanding: the constrain of the second term $\mathscr{Y}(\widetilde{\mathbf{y}}) \geq \epsilon$ is just for continuity.) Then, we further extent the domain of $\mathcal{R}_\epsilon^{\text{new}}$ to $\mathbb{C}^{N \times N}$ by Tietze extension theorem, denoted as $\mathcal{R}_\epsilon^{\text{New}}: \mathbb{C}^{N \times N} \rightarrow \mathbb{R}_\epsilon^{N \times N} \in C$. 

Finally, we verify that $\mathcal{R}_\epsilon^{\text{New}}$ is better than $\mathcal{R}^{\text{sparse}}$, that is
$$
    \mathbb{E}_{\mathbf{a}\sim\pi_T^{\text{h}}} J(\mathbf{a}, \mathcal{R}^{\text{sparse}}) < \mathbb{E}_{\mathbf{a}\sim\pi_T^{\text{h}}} J(\mathbf{a}, \mathcal{R}_\epsilon^{\text{New}}),
$$
which contradicts to the maximum property. 

Since $\pi_T^{\text{h}}(\mathbf{a}^\star) > 0$, we only need to verify that
$$
    J(\mathbf{a}^\star, \mathcal{R}^{\text{sparse}}) < J(\mathbf{a}^\star, \mathcal{R}_\epsilon^{\text{New}}).
$$
Indeed, we choose a sufficiently small $\epsilon$ satisfying
\begin{align*}
& \mathbb{E}_{\mathbf{x} \sim \mathcal{D}} \mathbbm{1}_{\{ \mathscr{Y}(\mathbf{M}^{\mathbf{a}^\star} \mathcal{F} \mathbf{x}) \geq \epsilon \}} \cdot \left[ \mathrm{S}(\mathcal{R}^{\text{dense}}(\mathbf{M}^{\mathbf{a}^\star} \mathcal{F} \mathbf{x}), x) - \mathrm{S}(\mathcal{R}^{\text{sparse}}(\mathbf{M}^{\mathbf{a}^\star} \mathcal{F} \mathbf{x}), x) \right] \\
> \ & \frac{2}{3} (J(\mathbf{a}^\star, \mathcal{R}^{\text{dense}}) - J(\mathbf{a}^\star, \mathcal{R}^{\text{sparse}})),
\end{align*}
and
\begin{align*}
& \mathbb{E}_{\mathbf{x} \sim \mathcal{D}} \mathbbm{1}_{\{ \mathscr{Y}(\mathbf{M}^{\mathbf{a}^\star} \mathcal{F} \mathbf{x}) < \epsilon \}} < \frac{1}{3} (J(\mathbf{a}^\star, \mathcal{R}^{\text{dense}}) - J(\mathbf{a}^\star, \mathcal{R}^{\text{sparse}})).
\end{align*}
Then, we have
\begin{align*}
& J(\mathbf{a}^\star, \mathcal{R}_\epsilon^{\text{New}}) - J(\mathbf{a}^\star, \mathcal{R}^{\text{sparse}}) \\
= \ & \mathbb{E}_{\mathbf{x} \sim \mathcal{D}} \left[ \mathrm{S}(\mathcal{R}_\epsilon^{\text{New}}(\mathbf{M}^{\mathbf{a}^\star} \mathcal{F} \mathbf{x}), x) - \mathrm{S}(\mathcal{R}^{\text{sparse}}( \mathbf{M}^{\mathbf{a}^\star} \mathcal{F} \mathbf{x}), x) \right] \\
= \ & \mathbb{E}_{\mathbf{x} \sim \mathcal{D}} \mathbbm{1}_{\{ \mathscr{Y}(\mathbf{M}^{\mathbf{a}^\star} \mathcal{F} \mathbf{x}) \geq \epsilon \}} \cdot \left[ \mathrm{S}(\mathcal{R}^{\text{dense}}( \mathbf{M}^{\mathbf{a}^\star} \mathcal{F} \mathbf{x}), x) - \mathrm{S}(\mathcal{R}^{\text{sparse}}( \mathbf{M}^{\mathbf{a}^\star} \mathcal{F} \mathbf{x}), x) \right] \\
\ & + \mathbb{E}_{\mathbf{x} \sim \mathcal{D}} \mathbbm{1}_{\{ \mathscr{Y}(\mathbf{M}^{\mathbf{a}^\star} \mathcal{F} \mathbf{x}) < \epsilon \}} \cdot \left[ \mathrm{S}(\mathcal{R}_\epsilon^{\text{New}}(\mathbf{M}^{\mathbf{a}^\star} \mathcal{F} \mathbf{x}), x) - \mathrm{S}(\mathcal{R}^{\text{sparse}}(\mathbf{M}^{\mathbf{a}^\star} \mathcal{F} \mathbf{x}), x) \right] \\
\geq \ & \mathbb{E}_{\mathbf{x} \sim \mathcal{D}} \mathbbm{1}_{\{ \mathscr{Y}(\mathbf{M}^{\mathbf{a}^\star} \mathcal{F} \mathbf{x}) \geq \epsilon \}} \left[ \mathrm{S}(\mathcal{R}^{\text{dense}}(\mathbf{M}^{\mathbf{a}^\star} \mathcal{F} \mathbf{x}), x) - \mathrm{S}(\mathcal{R}^{\text{sparse}}(\mathbf{M}^{\mathbf{a}^\star} \mathcal{F} \mathbf{x}), x) \right] \\
\ &  - \mathbb{E}_{\mathbf{x} \sim \mathcal{D}} \mathbbm{1}_{\{ \mathscr{Y}(\mathbf{M}^{\mathbf{a}^\star} \mathcal{F} \mathbf{x}) < \epsilon \}} \\
> \ & (\frac{2}{3} - \frac{1}{3})(J(\mathbf{a}^\star, \mathcal{R}^{\text{dense}}) - J(\mathbf{a}^\star, \mathcal{R}^{\text{sparse}})) \\ 
= \ & \frac{1}{3} (J(\mathbf{a}^\star, \mathcal{R}^{\text{dense}}) - J(\mathbf{a}^\star, \mathcal{R}^{\text{sparse}})) \\ 
> \ & 0. 
\end{align*}

Therefore, it is contradicts to the maximum property of $\mathcal{R}^{\text{sparse}}$. Therefore, for $\forall \Vert \mathbf{a} \Vert_1 = T$, 
$$
J(\mathbf{a}, \mathcal{R}^{\text{dense}}) \leq J(\mathbf{a}, \mathcal{R}^{\text{sparse}}). 
$$

The rest proof is similar to the proof of theorem \ref{theorem:non-deterministic}. Let $v_{\pi}^{\text{dense}}(\mathbf{x}, \mathbf{M}_t) = v_{\pi, \mathcal{R}^{\text{dense}}}^{\text{dense}}(\mathbf{x}, \mathbf{M}_t)$ and $v_{\pi}^{\text{sparse}}(\mathbf{x}, \mathbf{M}_t) = v_{\pi, \mathcal{R}^{\text{sparse}}}^{\text{sparse}}(\mathbf{x}, \mathbf{M}_t)$ as defined in theorem \ref{theorem:non-deterministic}. We define $v_{\pi}^{\text{dense}}(\mathcal{D}, \mathbf{M}_t) = \mathbb{E}_{\mathbf{x}\sim\mathcal{D}} v_{\pi}^{\text{dense}}(\mathbf{x}, \mathbf{M}_t)$ and $v_{\pi}^{\text{sparse}}(\mathcal{D}, \mathbf{M}_t) = \mathbb{E}_{\mathbf{x}\sim\mathcal{D}} v_{\pi}^{\text{sparse}}(\mathbf{x}, \mathbf{M}_t)$. Indeed, $v_{\pi}^{\text{dense}}(\mathcal{D}, \mathbf{M}_0)$ is the objective function of \eref{equ:dense-suboptim} and $v_{\pi}^{\text{sparse}}(\mathcal{D}, \mathbf{M}_0)$ is the objective function of \eref{equ:sparse-suboptim}. We want to prove:
$$
\sup_{\pi \in C(\mathbb{R}^{N\times N}, \Delta^N)} v_{\pi}^{\text{dense}}(\mathcal{D}, \mathbf{M}_0) \leq \sup_{\pi \in C(\mathbb{C}^{N\times N}, \Delta^N)} v_{\pi}^{\text{sparse}}(\mathcal{D}, \mathbf{M}_0). 
$$
For $\forall \epsilon > 0$, there exists a sampler $\pi_\epsilon$ satisfying
$$
\sup_{\pi \in C(\mathbb{R}^{N\times N}, \Delta^N)} v_{\pi}^{\text{dense}}(\mathcal{D}, \mathbf{M}_0) - v_{\pi_\epsilon}^{\text{dense}}(\mathcal{D}, \mathbf{M}_0) < \epsilon. 
$$
Let $\pi_\epsilon^* \in C(\mathbb{C}^{N\times N}, \Delta^N)$ defined as
$$
\pi_\epsilon^*(\cdot \mid \mathbf{y}) = \pi_\epsilon(\cdot \mid \mathcal{R}_\epsilon(\mathbf{y})), \quad \forall \mathbf{y} \in \mathbb{C}^{N\times N}. 
$$

We have the recursion formula for both $v_{\pi}^{\text{dense}}(\mathbf{x}, \mathbf{M}_0)$ and $v_{\pi}^{\text{sparse}}(\mathbf{x}, \mathbf{M}_0)$:
$$
v_{\pi}^{\text{dense}}(\mathbf{x}, M_t) = \sum_{a_t=1}^{N} \pi(a_t \mid \mathbf{x}_t) \cdot v_{\pi}(\mathbf{x}, \mathbbm{1}(\mathbf{M}_t+\mathbf{M}^{a_t})), 
$$$$
v_{\pi}^{\text{sparse}}(\mathbf{x}, M_t) = \sum_{a_t=1}^{N} \pi(a_t \mid \mathbf{y}_t) \cdot v_{\pi, \mathcal{R}}(\mathbf{x}, \mathbbm{1}(\mathbf{M}_t+\mathbf{M}^{a_t})). 
$$

Since the similarity metric function $\mathrm{S}$ is bounded by $[0, 1]$, $v_{\pi} \in [0, 1]$. Recursively, we have 
\begin{align*}
    v^{\text{dense}}_{\pi_\epsilon}(\mathcal{D}, \mathbf{M}_T) & = \mathbb{E}_{\mathbf{x}\sim\mathcal{D}} \mathrm{S}(\mathcal{R}^{\text{dense}}(\mathbf{M}_T \odot \mathcal{F}(\mathbf{x})), \mathbf{x}) \\
    & \leq \mathbb{E}_{\mathbf{x}\sim\mathcal{D}} \left[ \mathrm{S}(\mathcal{R}^{\text{sparse}}(\mathbf{M}_T \odot \mathcal{F}(\mathbf{x})), \mathbf{x}) \right] \\
    & = v^{\text{sparse}}_{\pi_\epsilon^*}(\mathcal{D}, \mathbf{M}_T). 
\end{align*}
(The inequality is because $J(\mathbf{a}, \mathcal{R}^{\text{dense}}) \leq J(\mathbf{a}, \mathcal{R}^{\text{sparse}})$ where we set $\mathbf{M}_T = \mathbf{1} \cdot \mathbf{a}^T$)
\begin{align*}
    & v^{\text{dense}}_{\pi_\epsilon}(\mathcal{D}, \mathbf{M}_{T-1}) \\ 
    = \ & \mathbb{E}_{\mathbf{x}\sim\mathcal{D}} \sum_{a_{T-1}=1}^{n}  \pi_\epsilon(a_t \mid \mathbf{x}_{T-1}) \cdot v^{\text{dense}}_{\pi_\epsilon}(\mathbf{x}, \mathbbm{1}(\mathbf{M}_{T-1}+\mathbf{M}^{a_{T-1}})) \\
    = \ & \mathbb{E}_{\mathbf{x}\sim\mathcal{D}} \sum_{a_{T-1}=1}^{n}  \pi_\epsilon^*(a_t \mid \mathbf{y}_{T-1})\cdot v^{\text{dense}}_{\pi_\epsilon}(\mathbf{x}, \mathbbm{1}(\mathbf{M}_{T-1}+\mathbf{M}^{a_{T-1}})) \\
    \leq \ & \mathbb{E}_{x\sim\mathcal{D}} \sum_{a_t=1}^{n} \pi_\epsilon^*(a_t \mid \mathbf{y}_{T-1}) \cdot [ v^{\text{sparse}}_{\pi_\epsilon^*}(\mathbf{x}, \mathbbm{1}(\mathbf{M}_{T-1}+\mathbf{M}^{a_{T-1}}))] \\
    = \ & v^{\text{sparse}}_{\pi_\epsilon^*}(\mathcal{D}, \mathbf{M}_{T-1}), 
\end{align*}
$$
\cdots
$$

Finally, we have
$$
v^{\text{dense}}_{\pi_\epsilon}(\mathcal{D}, \mathbf{M}_0) \leq v^{\text{sparse}}_{\pi_\epsilon^*}(\mathcal{D}, \mathbf{M}_0),
$$
which means that for $\forall \epsilon$, there exists a pair of $\pi_\epsilon^* \in C(\mathbb{C}^{N\times N}, \Delta^N)$ and $\mathcal{R}_\epsilon \in C(\mathbb{C}^{N\times N}, \mathbb{R}^{N\times N})$ satisfying
$$
\sup_{\pi \in C(\mathbb{R}^{N\times N}, \Delta^N)} v_{\pi, \mathcal{R}}^{\text{dense}}(\mathcal{D}, \mathbf{M}_0) < v_{\pi_\epsilon^*}(\mathcal{D}, \mathbf{M}_0) + \epsilon.
$$

Because of the arbitrariness of $\epsilon$, the statement holds! 

\qedsymbol
\end{proof}

\textbf{Remark.} We briefly show a counter-example of theorem \ref{theorem:improved-dynamic} without the first assumption that '$\pi_{T+\lVert \mathbf{M}_0 \rVert_\infty}^{\text{h}}(\mathbf{a}) > 0$ for all binary column vectors $\mathbf{a}$ satisfying $\Vert \mathbf{a} \Vert_1 = T+\lVert \mathbf{M}_0 \rVert_\infty$'. Similarly, we let $\mathbf{M}_0 = \mathbf{0}$. Suppose that there exists $\mathbf{a}^\star$ that $\Vert \mathbf{a}^\star \Vert_1 = T$ which satisfies $\pi_T(\mathbf{a}^\star)=0$. Let dataset $\mathcal{D} = \{\mathbf{x}\}$, where $\mathbf{M}^{\mathbf{a}^\star} \odot \mathcal{F}(\mathbf{x}) \neq \mathbf{0}$ and $\mathbf{M}^{\mathbf{1}-\mathbf{a}^\star} \odot \mathcal{F}(\mathbf{x}) > \mathbf{0}$. Then, let $\mathcal{R}^{\text{sparse}}$ be the solution of the optimization problem \eref{equ:R-sparse}. Moreover, let $\mathcal{R}^{\text{dense}}(\mathbf{M}^{\mathbf{a}^\star}\odot\mathcal{F}(\mathbf{x})) = \mathbf{x}$ and $\mathcal{R}^{\text{sparse}}(\mathbf{M}^{\mathbf{a}^\star}\odot\mathcal{F}(\mathbf{x})) = \mathbf{0}$. The construction in the above proof guarantees the continuity of the two reconstructors. Therefore, that $\pi^{\text{dense}} = \mathbf{a}^\star$ leads to the fact that $\mathcal{R}^{\text{dense}}$ can do reconstructs perfectly, while $\mathcal{R}^{\text{sparse}}$ can not.

\subsection{Proof of proposition \ref{prop:sparse-derivative}}
\label{appen-subsec:proof-sparse-derivative}
Our proof mainly utilizes the fact that taking the derivative and taking the expectation can be interchanged. 
\begin{proof}
Here we consider the sparse-reward POMDP. 
\begin{align*}
    \nabla_{\theta_\mathcal{R}} J_T^{\text{sparse}} (\mathbf{x}) & = \nabla_{\theta_\mathcal{R}} \mathbb{E}_{\{a_t\}_{t=0}^{T-1} \sim \pi} \left[ \mathrm{S}(\mathcal{R}(\mathbf{y}_T; \theta_\mathcal{R}), \mathbf{x} \right] \\
    & = \mathbb{E}_{\{a_t\}_{t=0}^{T-1} \sim \pi} \left[ \nabla_{\theta_{\mathcal{R}}} \mathrm{S}(\mathcal{R}(\mathbf{y}_T; \theta_\mathcal{R}), \mathbf{x} \right]. 
\end{align*}
The second equality arises from interchanging the differentiation and expectation order. This process is justified as: firstly, the differentiation concerning $\theta_\mathcal{R}$ is $\pi$-independent; secondly, $\pi$ represents a discrete distribution, making the expectation equivalent to a finite sum. These conditions enable the application of the Dominated Convergence Theorem, sanctioning our operation switch, hence confirming the second equality. 

\qedsymbol
\end{proof}

\subsection{Alternate Training for Dense-reward POMDP Fails}
\label{appen-subsec:dense-derivative}
We now demonstrate that the proposed alternate training framework is not suitable for the dense-reward POMDP. We start by calculating the derivative of $J^{\text{dense}}$ w.r.t the $\theta_\mathcal{R}$. 
\begin{proposition}
The derivative of $J^{\text{dense}}$ w.r.t. $\theta_{\mathcal{R}}$ is
\begin{equation}
\begin{split}
    \nabla_{\theta_{\mathcal{R}}} J_T^{\text{dense}}(\mathbf{x}) = & \sum_{t=0}^{T-1} \mathbb{E}_{\{a_s\}_{s=0}^{t-1} \sim \pi} \left[ \sum_{a_t} \nabla_{\theta_{\mathcal{R}}} \pi(a_t \mid \mathcal{R}(\mathbf{y}_t; \theta_\mathcal{R})) \cdot q_t \right] \\
    & + \mathbb{E}_{\{a_t\}_{t=0}^{T-1} \sim \pi} \left[ \nabla_{\theta_{\mathcal{R}}} \mathrm{S}(\mathcal{R}(\mathbf{y}_T; \theta_\mathcal{R}), \mathbf{x}) \right],
\end{split}
\label{equ:dense-derivative}
\end{equation}
where $\{a_s\}_{s=0}^{t-1} \sim \pi$ means a sequential acquisition according to \eref{equ:dense-optim-constrain}, $\mathbf{y}_t = \mathbf{y}_t (\mathbf{M}_0, \{a_s\}_{s=0}^{t-1}) = (\mathbf{M}_0 + \sum_{s=0}^{t-1} \mathbf{M}^{a_s}) \odot \mathbf{y}$, and $q_t = q(\mathbf{x}, \mathbf{M}_t, a_t)$ is the Q-function of the dense-reward POMDP for $t = 1, \cdots, T$. 
\label{prop:dense-derivative}
\end{proposition}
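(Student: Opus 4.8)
The plan is to mirror the dynamic-programming argument already used in the proof of Theorem \ref{theorem:non-deterministic}, but now to differentiate the value function of the dense-reward POMDP with respect to $\theta_{\mathcal{R}}$ instead of comparing value functions across two POMDPs. Throughout I fix the ground truth $\mathbf{x}$ and the sampler $\pi$, abbreviate $\nabla = \nabla_{\theta_{\mathcal{R}}}$, and define the dense value function $V(\mathbf{M}_t) = \mathbb{E}_{\pi}\!\left[\mathrm{S}(\mathbf{x}_T, \mathbf{x}) \mid \mathbf{M}_t\right]$, where the expectation is over the remaining actions $a_t,\dots,a_{T-1}$ and $\mathbf{x}_s = \mathcal{R}(\mathbf{y}_s; \theta_{\mathcal{R}})$ with $\mathbf{y}_s$ fixed by $\mathbf{M}_0$ and the realized actions. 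Since the objective in \eref{equ:dense-optim-constrain} is exactly $J_T^{\text{dense}}(\mathbf{x}) = V(\mathbf{M}_0)$, it suffices to differentiate the Bellman recursion
$$
V(\mathbf{M}_t) = \sum_{a_t=1}^{N} \pi(a_t \mid \mathbf{x}_t)\, V(\mathds{1}(\mathbf{M}_t + \mathbf{M}^{a_t})), \qquad V(\mathbf{M}_T) = \mathrm{S}(\mathcal{R}(\mathbf{y}_T; \theta_{\mathcal{R}}), \mathbf{x}),
$$
and to recognize $q_t = q(\mathbf{x},\mathbf{M}_t,a_t)$ as $V(\mathbf{M}_{t+1})$, with $\mathbf{M}_{t+1} := \mathds{1}(\mathbf{M}_t + \mathbf{M}^{a_t})$, up to an $a_t$-independent term discussed below.

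The crucial observation, and what makes the dense case genuinely different from Proposition \ref{prop:sparse-derivative}, is that $\theta_{\mathcal{R}}$ enters $V(\mathbf{M}_t)$ in two distinct ways: through each policy factor $\pi(a_t \mid \mathbf{x}_t) = \pi(a_t \mid \mathcal{R}(\mathbf{y}_t; \theta_{\mathcal{R}}))$, whose input is itself a reconstruction, and through the terminal reward $\mathrm{S}(\mathcal{R}(\mathbf{y}_T; \theta_{\mathcal{R}}), \mathbf{x})$. Applying the product rule to the recursion for $t < T$ gives
$$
\nabla V(\mathbf{M}_t) = \sum_{a_t} \big[\nabla \pi(a_t \mid \mathbf{x}_t)\big]\, V(\mathbf{M}_{t+1}) + \sum_{a_t} \pi(a_t \mid \mathbf{x}_t)\, \nabla V(\mathbf{M}_{t+1}),
$$
where $\nabla \pi(a_t \mid \mathbf{x}_t)$ is taken via the chain rule through $\mathbf{x}_t = \mathcal{R}(\mathbf{y}_t; \theta_{\mathcal{R}})$ (note $\mathbf{y}_t$ is constant given $\mathbf{M}_t$). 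The first sum is precisely the policy-gradient contribution $\sum_{a_t}\nabla\pi(a_t\mid\mathbf{x}_t)\,q_t$, while the second sum propagates the derivative one step deeper.

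I would then unroll this recursion from $t=0$ to the terminal layer. At the $t$-th step the accumulated factors $\prod_{s<t}\pi(a_s \mid \mathbf{x}_s)$ convert the explicit action sums into the expectation $\mathbb{E}_{\{a_s\}_{s=0}^{t-1}\sim\pi}$, yielding the $t$-th summand of the first term of \eref{equ:dense-derivative}; after $T$ steps the residual term is $\mathbb{E}_{\{a_t\}_{t=0}^{T-1}\sim\pi}\!\big[\nabla \mathrm{S}(\mathcal{R}(\mathbf{y}_T; \theta_{\mathcal{R}}), \mathbf{x})\big]$, which is the second term. Interchanging $\nabla$ with the finite action sums is legitimate because $\mathcal{A}$ is finite and $\pi$, $\mathcal{R}$, $\mathrm{S}$ are differentiable, the same Dominated-Convergence-style justification invoked in Proposition \ref{prop:sparse-derivative}.

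The one point requiring care, and the main obstacle to state cleanly, is reconciling my $V(\mathbf{M}_{t+1})$ with the genuine POMDP Q-function $q_t$, which by the reward definition \eref{equ:dense-reward} carries the intermediate rewards and therefore differs from $V(\mathbf{M}_{t+1})$ by the $a_t$-independent baseline $-\mathrm{S}(\mathbf{x}_t,\mathbf{x})$ (a function of $\mathbf{M}_t$ alone). Since $\sum_{a_t}\nabla\pi(a_t\mid\mathbf{x}_t) = \nabla\sum_{a_t}\pi(a_t\mid\mathbf{x}_t) = \nabla 1 = 0$, any $a_t$-independent additive term drops out of $\sum_{a_t}\nabla\pi(a_t\mid\mathbf{x}_t)\,q_t$, so both conventions give identical contributions and I may write $q_t$ in \eref{equ:dense-derivative} unambiguously; this is the standard baseline argument. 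The resulting formula also makes transparent why alternate training breaks for the dense-reward POMDP: in contrast to \eref{equ:sparse-derivative}, the gradient of $J_T^{\text{dense}}$ retains the intractable first term $\sum_t \mathbb{E}[\sum_{a_t}\nabla\pi(a_t\mid\mathbf{x}_t)\,q_t]$, which cannot be recovered by plain back-propagation through a single terminal reconstruction.
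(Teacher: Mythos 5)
Your proof is correct, and it reaches \eref{equ:dense-derivative} by a slightly but genuinely different decomposition than the paper's. The paper works with the dense-reward POMDP's own value function $v(\mathbf{x},\mathbf{M}_t)=\mathbb{E}_{\pi}\sum_{s=t+1}^{T}r_s$ and Q-function $q_t=r_{t+1}+v(\mathbf{x},\mathbf{M}_{t+1})$: it writes $J_T^{\text{dense}}(\mathbf{x})=v(\mathbf{x},\mathbf{M}_0)+\mathrm{S}(\mathcal{R}(\mathbf{y}_0),\mathbf{x})$, differentiates the reward-carrying Bellman recursion, and lets the intermediate gradients $\nabla_{\theta_\mathcal{R}}\mathrm{S}(\mathcal{R}(\mathbf{y}_t),\mathbf{x})$ cancel telescopically during the unroll, so that $q_t$ appears natively and only the terminal $\nabla_{\theta_\mathcal{R}}\mathrm{S}(\mathcal{R}(\mathbf{y}_T),\mathbf{x})$ survives. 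You instead apply the telescoping of the rewards \eref{equ:dense-reward} up front by defining the reward-free quantity $V(\mathbf{M}_t)=\mathbb{E}_\pi[\mathrm{S}(\mathbf{x}_T,\mathbf{x})\mid\mathbf{M}_t]$, whose recursion has no $r_{t+1}$ term; this makes the unrolling algebra cleaner (no cancellations to track), but it produces $V(\mathbf{M}_{t+1})$ rather than $q_t$ in the policy-gradient term, so you need the extra identity $\sum_{a_t}\nabla_{\theta_\mathcal{R}}\pi(a_t\mid\mathbf{x}_t)=\nabla_{\theta_\mathcal{R}}1=0$ together with the observation $q_t=V(\mathbf{M}_{t+1})-\mathrm{S}(\mathbf{x}_t,\mathbf{x})$ (an action-independent baseline shift) to restore the stated form; that baseline computation is correct, since $v(\mathbf{x},\mathbf{M}_{t+1})=V(\mathbf{M}_{t+1})-\mathrm{S}(\mathbf{x}_{t+1},\mathbf{x})$ and the $\mathrm{S}(\mathbf{x}_{t+1},\mathbf{x})$ terms cancel against $r_{t+1}$. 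In short: the paper pays with telescoping cancellation inside the induction, you pay with the score-function/baseline argument at the end; both prices are fair, and your version has the side benefit of making explicit that any action-independent baseline could be subtracted from $q_t$ in \eref{equ:dense-derivative} without changing the gradient.
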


\begin{proof}
Here we consider the dense-reward POMDP. Let $v(\mathbf{x}, \mathbf{M}_t)$ be the value function and $q(\mathbf{x}_t, a_t)$ be the Q function at timestep $t$, defined as
$$
v(\mathbf{x}, \mathbf{M}_t) = \mathbb{E}_{\{a_s\}_{s=t}^{T-1} \sim \pi} \sum_{s=t+1}^T r_s, \quad \text{where} \ \mathbf{y}_t = \mathbf{M}_t \odot \mathcal{F}(\mathbf{x}),
$$$$
q(\mathbf{x}, \mathbf{M}_t, a_t) = r_{t+1} + v(\mathbf{x}, \mathbf{M}_{t+1}), \quad \text{where} \ \mathbf{M}_{t+1} = \mathbbm{1}(\mathbf{M}_t + \mathbf{M}^{a_t}).
$$

Indeed, 
$$
J_T^{\text{dense}}(\mathbf{x}) = v(\mathbf{x}, \mathbf{M}_0) + \mathrm{S}(\mathcal{R}(\mathbf{y}_0), \mathbf{x}).
$$

For $t=0, 1, \cdots, T-1$, we have
\begin{align*}
    & \nabla_{\theta_\mathcal{R}} v(\mathbf{x}, \mathbf{M}_t) \\
    = & \nabla_{\theta_\mathcal{R}} \left[ \sum_{a_t} \pi(a_t \mid \mathbf{x}_t) \cdot q(\mathbf{x}, \mathbf{M}_t, a_t) \right] \\
	= & \sum_{a_t} \left[ \nabla_{\theta_\mathcal{R}} \pi \cdot q_t + \pi \cdot \nabla_{\theta_\mathcal{R}} q_t \right] \\
	= & \sum_{a_t} \left[ \nabla_{\theta_\mathcal{R}} \pi \cdot q_t + \pi \cdot \nabla_{\theta_\mathcal{R}} (r_{t+1} + v(\mathbf{x}, \mathbf{M}_{t+1})) \right] \\ 
	= & \sum_{a_t} [ \nabla_{\theta_\mathcal{R}} \pi \cdot q_t + \pi \cdot \nabla_{\theta_\mathcal{R}} (\mathrm{S}(\mathcal{R}(\mathbf{y}_{t+1}), \mathbf{x}) + v(\mathbf{x}, \mathbf{M}_{t+1})) ] - \nabla_{\theta_\mathcal{R}} \mathrm{S}(\mathcal{R}(\mathbf{y}_t), \mathbf{x}),
\end{align*}
and
$$
\nabla_{\theta_\mathcal{R}} v(\mathbf{x}, \mathbf{M}_T) = 0.
$$

Finally, we have
\begin{align*}
    & \nabla_{\theta_{\mathcal{R}}} J_T^{\text{dense}}(\mathbf{x}) \\
    = \ & \sum_{t=0}^{T-1} \sum_{a_0, \cdots, a_{t-1}} \operatorname{Pr}_\pi (\mathbf{y}_0 \xrightarrow{\{a_s\}_{s=0}^{t-1}} \mathbf{y}_t) \cdot \left[ \sum_{a_t} \nabla_{\theta_{\mathcal{R}}} \pi(a_t \mid \mathcal{R}(\mathbf{y}_t)) \cdot q_t \right] \\
    & + \sum_{a_0, \cdots, a_{T-1}} \operatorname{Pr}_\pi (\mathbf{y}_0 \xrightarrow{\{a_t\}_{t=0}^{T-1}} \mathbf{y}_T) \cdot \nabla_{\theta_{\mathcal{R}}} \mathrm{S}(\mathbf{x}_T, \mathbf{x}) \\
    = \ & \sum_{t=0}^{T-1} \mathbb{E}_{\{a_s\}_{s=0}^{t-1} \sim \pi} \left[ \sum_{a_t} \nabla_{\theta_{\mathcal{R}}} \pi(a_t \mid \mathcal{R}(\mathbf{y}_t)) \cdot q_t \right] + \mathbb{E}_{\{a_t\}_{t=0}^{T-1} \sim \pi} \left[ \nabla_{\theta_{\mathcal{R}}} \mathrm{S}(\mathcal{R}(\mathbf{y}_T), \mathbf{x}) \right].
\end{align*}
\label{proof:dense-derivation}

\qedsymbol
\end{proof}

Although proposition \ref{prop:dense-derivative} gives the derivative of dense-reward objective function w.r.t parameters of the reconstructor, we empirically find that training reconstructor with the derivative by gradient-based methods fails. The reason may be that the sequential acquisition $\{a_t\}_{t=0}^{t-1} \sim \pi$ depends on not only $\pi$ but also $\mathcal{R}$. Changing the reconstructor results to an unpredictable change of the sampling strategy, leading to the failure. From another perspective, the reconstructor can also be viewed as an RL agent, so gradient-based methods hardly work.

\section{Implementation Details}
\label{appen-sec:implementation}
In this section, We will show the implementation details of all methods. In \sref{appen-subsec:basic-setting}, we show the basic settings of all algorithms. In \sref{appen-subsec:implementation-ours}, we show the implementation details of L2S and L2SR. In \sref{appen-subsec:implementation-baseline}, we show the implementation details of competing methods. 

\subsection{Basic Settings}
\label{appen-subsec:basic-setting}
Both our methods and competing methods, are performed with the same basic settings. 

\textbf{SSIM Values}. 
SSIM hyperparameters are kept to their original values in \cite{1284395}. The dynamic range is set to the maximum pixel value of the ground truth volume. 

\textbf{Loss Function}. 
Since we leverage the SSIM value as the similarity metric, for consistency, all algorithms use negative SSIM values as the loss.

\subsection{Our Methods}
\label{appen-subsec:implementation-ours}
Pre-training the reconstructor is the same as the competing method `Random', as detailed in the following subsection. We implement the A2C algorithm based on the stable-baselines3 \cite{stable-baselines3} which is a popular framework for reliable implementations of RL algorithms. The learning rate is set to 0.0003, the update timestep is set to $T$, and the other hyperparameters are to the default values. The reconstructor is trained by Adam optimizer for 10 epochs in the alternating training stage. The learning rate for both $\theta_{\pi}$ and $\theta_{\mathcal{R}}$ decays at a rate of 3 at each alternate training. We refer you to our official code for more details. 

\subsection{Competing Methods}
\label{appen-subsec:implementation-baseline}
\textbf{Random}. 
We train the reconstructor with random policy by Adam \cite{kingma2014adam} optimizer for 50 epochs. Other hyperparameters inherit the fastMRI repository. We always use early stopping to get the best model on the validation dataset. 

\textbf{PG-MRI}. 
We use the best performing $\gamma=0.9$ Non-Greedy method in \cite{NEURIPS2020_daed2103}. We train the sampler with its default hyperparameters. The fixed reconstructor is pre-trained by "Random" heuristic sampling policy. Notice that the reconstructor pre-trained is different from \cite{NEURIPS2020_daed2103}, since the loss is negative SSIM instead of $L_1$-norm and the training dataset is half of the volumes instead of full. In addition, we use early stopping based on the validation set rather than training for the full 50 epochs for better reconstruction performances. 

\textbf{Greedy Oracle}. 
The one-step greedy oracle policy has access to the ground truth for reference. We choose the action that increases similarity the most at test time with a fixed reconstructor the same as above. 

\textbf{LOUPE}. 
This method jointly trains the parametric probability mask $\mathbf{p}$ and the reconstructor end-to-end via BP without pre-training. We use the implementation by \cite{yin2021end} and train with its default hyperparameters. 

\textbf{$\tau$-Step Seq}. 
This method jointly trains the sampler and the reconstructor end-to-end via BP without pre-training. We use the default hyperparameters in \cite{yin2021end}.

\section{Supplemental Experimental Results}

\subsection{RL Algorithms}
We have experimented with PPO \cite{schulman2017proximal}, an advanced RL algorithm, to learn a better sampling policy. However, we have found that PPO has slow convergence in practice and only slightly improves the SSIM values (around $0.001 \sim 0.002$). Therefore, we have decided not to use it in our proposed method.

\subsection{Training Cost}
Our proposed L2SR framework necessitates a longer training period for two primary reasons. Firstly, the employment of a sparse reward POMDP demands an extended exploration phase due to its complexity compared to the dense reward POMDP (of course it is more efficient in calculating rewards). Secondly,  the alternating optimization strategy inherent in our approach, contrasting with simpler, one-step methods, further extends the training time. Taken together, our training time may be 3 to 5 times longer than existing dynamic methods. However, in practice we are more concerned with the inference time than the training cost.

\section*{References}
\bibliographystyle{unsrt}
\bibliography{refbib}

\end{document}